\newtheorem{theorem}{Theorem}
\newtheorem{lemma}{Lemma}
\newtheorem{remark}{Remark}
\newtheorem{corollary}{Corollary}
\newtheorem{definition}{Definition}
\newtheorem{condition}[theorem]{Condition}
\newcommand{\nb}[3]{{\colorbox{#2}{\bfseries\sffamily\scriptsize\textcolor{white}{#1}}}{\textcolor{#2}{\sf\small\textit{#3}}}}
\newcommand{\stradi}[1]{\nb{Stradi}{red}{#1}}
\newcommand{\BigOL}[1]{\tilde{\mathcal{O}}\left(#1\right)}
\DeclareMathOperator*{\argmax}{arg\,max}
\DeclareMathOperator*{\argmin}{arg\,min}
\newtheoremstyle{eventtheoremstyle}%
  {5pt}
  {3pt}
  {\itshape}
  {}
  {\bfseries}
  {:}
  {.8em}
  {\thmname{#1} #3}
\theoremstyle{eventtheoremstyle}
\author{
	Francesco Emanuele Stradi\\
	Politecnico di Milano\\
	\texttt{francescoemanuele.stradi@polimi.it}
	\And
	Anna Lunghi\\
	Politecnico di Milano\\
	\texttt{anna.lunghi@mail.polimi.it}
	\And
	Matteo Castiglioni\\
	Politecnico di Milano\\
	\texttt{matteo.castiglioni@polimi.it}
	\And
	Alberto Marchesi\\
	Politecnico di Milano\\
	\texttt{alberto.marchesi@polimi.it}
	\And
	Nicola Gatti\\
	Politecnico di Milano\\
	\texttt{nicola.gatti@polimi.it}
}
\title{Learning Constrained Markov Decision Processes With Non-stationary Rewards and Constraints}
\begin{document}

\maketitle

\begin{abstract}
 	In \emph{constrained Markov decision processes} (CMDPs) with \emph{adversarial} rewards and constraints, a well-known impossibility result prevents any algorithm from attaining both sublinear regret and sublinear constraint violation, when competing against a best-in-hindsight policy that satisfies constraints on average.
 	In this paper, we show that this negative result can be eased in CMDPs with \emph{non-stationary} rewards and constraints, by providing algorithms whose performances smoothly degrade as non-stationarity increases.
 	Specifically, we propose algorithms attaining $\tilde{\mathcal{O}} (\sqrt{T} + C)$ regret and \emph{positive} constraint violation under \emph{bandit} feedback, where $C$ is a corruption value measuring the environment non-stationarity.
 	This can be $\Theta(T)$ in the worst case, coherently with the impossibility result for adversarial~CMDPs.
 	First, we design an algorithm with the desired guarantees when $C$ is known.
 	Then, in the case $C$ is \emph{unknown}, we show how to obtain the same results by embedding such an algorithm in a general \emph{meta-procedure}.
 	This is of independent interest, as it can be applied to \emph{any} non-stationary constrained online learning setting.
\end{abstract}
\section{Introduction}\label{introduction}

Reinforcement learning~\citep{sutton2018reinforcement} is concerned with settings where a learner sequentially interacts with an environment modeled as a \emph{Markov decision process} (MDP)~\citep{puterman2014markov}.
Most of the works in the field focus on learning policies that maximize learner's rewards.
However, in most of the real-world applications of interest, the learner also has to meet some additional requirements.
For instance, autonomous vehicles must avoid crashing~\citep{isele2018safe,wen2020safe}, bidding agents in ad auctions must \emph{not} deplete their budget~\citep{wu2018budget,he2021unified}, and users of recommender systems must \emph{not} be exposed to offending content~\citep{singh2020building}.
These requirements can be captured by \emph{constrained} MDPs (CMDPs)~\citep{Altman1999ConstrainedMD}, which generalize MDPs by specifying constraints that the learner has to satisfy while maximizing their rewards.

We study \emph{online learning} in \emph{episodic} CMDPs~(see, \emph{e.g.},~\citep{Exploration_Exploitation}), where the goal of the learner is twofold.
On the one hand, the learner wants to minimize their \emph{regret}, which measures how much reward they lost over the episodes compared to what they would have obtained by always using a best-in-hindsight constraint-satisfying policy.
On the other hand, the learner wants to ensure that the \emph{(cumulative) constraint violation} is minimized during the learning process.
Ideally, one seeks for algorithms with both regret and constraint violation growing sublinearly in the number of episodes $T$.

A crucial feature distinguishing online learning problems in CMDPs is whether rewards and constraints are selected \emph{stochastically} or \emph{adversarially}.
Most of the works focus on the case in which constraints are stochastic (see, \emph{e.g.},~\citep{Online_Learning_in_Weakly_Coupled,Constrained_Upper_Confidence,Exploration_Exploitation,Upper_Confidence_Primal_Dual,bounded,bai2023}), with only one exception addressing settings with adversarial constraints~\citep{germano2023bestofbothworlds}.
This is primarily motivated by a well-known impossibility result by~\citet{Mannor}, which prevents any learning algorithm from attaining both sublinear regret and sublinear constraint violation, when competing against a best-in-hindsight policy that satisfies the constraints \emph{on average}.
However, dealing with adversarially-selected constraints is of paramount importance to cope with real-world environments, which are typically non-stationary.
%
%


\subsection{Original contributions}

The main contribution of this paper is to show how to ease the negative result of~\citep{Mannor}, by considering CMDPs with \emph{non-stationary rewards and constraints}.
Specifically, we address CMDPs where rewards and constraints are selected from probability distributions that are allowed to change \emph{adversarially} from episode to episode.
One may think of our setting as bridging the gap between fully-stochastic and fully-adversarial ones.
We design algorithms whose performances---in terms of regret and constraint violation---smoothly degrade as a suitable measure of non-stationarity increases.
This is called \emph{(adversarial) corruption}, as it intuitively quantifies how much the distributions of rewards and constraints vary over the episodes with respect to some ``fictitious'' non-corrupted counterparts.

We propose algorithms that attain $\tilde{\mathcal{O}} (\sqrt{T} + C)$ regret and constraint violation, where $C$ denotes the corruption of the setting.
We remark that $C$ can be $\Theta(T)$ in the worst case, and, thus, our bounds are coherent with the impossibility result by~\citet{Mannor}.
Notably, our algorithms work under \emph{bandit} feedback, namely, by only observing rewards and constraint costs of the state-action pairs visiting during episodes.
Moreover, they are able to manage \emph{positive} constraint violation.
This means that they do \emph{not} allow for a negative violation (\emph{i.e.}, a constraint satisfaction) to cancel out a positive one across different episodes.
This is a crucial requirement for most of the practical applications.
For instance, in autonomous driving, avoiding a collision does \emph{not} ``repair'' a previously-occurred crash.

In the first part of the paper, we design an algorithm, called \texttt{NS-SOPS}, which works assuming that the value of the corruption $C$ is known.
This algorithm achieves $\tilde{\mathcal{O}} (\sqrt{T} + C)$ regret and positive constraint violation by employing a \emph{policy search} method that is \emph{optimistic} in both reward maximization and constraint satisfaction.
Specifically, the algorithm incorporates $C$ in the confidence bounds of rewards and constraint costs, so as to ``boost'' its optimism and achieve the desired guarantees.

In the second part of the paper, we show how to embed the \texttt{NS-SOPS} algorithm in a \emph{meta-procedure} that allows to achieve $\tilde{\mathcal{O}} (\sqrt{T} + C)$ regret and positive constraint violation when $C$ is \emph{unknown}.
The meta-procedure works by instantiating multiple instances of an algorithm for the case in which $C$ is known, each one taking care of a different ``guess'' on the value of $C$.
Specifically, the meta-procedure acts as a \emph{master} by choosing which instance to follow in order to select a policy at each episode.
To do so, it employs an adversarial online learning algorithm, which is fed with losses constructed starting from the Lagrangian of the CMDP problem, suitably modified to account for \emph{positive} constraint violation.
%
%
Our meta-procedure is of independent interest, as it can be applied in \emph{any} non-stationary constrained online learning setting, so as to relax the knowledge of $C$.

\subsection{Related works}

Within the literature on CMDPs, settings with \emph{stochastic} rewards and constraints have been widely investigated.
However, their \emph{non-stationary} counterparts, including \emph{adversarial} ones in the worst case, are still largely unexplored.
In the following, we discuss the works that are most related to ours, while we refer the reader to Appendix~\ref{app:related} for a comprehensive survey of related works.
%


\citet{Upper_Confidence_Primal_Dual} provide the first primal-dual approach to deal with episodic CMDPs with adversarial losses and stochastic constraints, achieving, under full feedback, both sublinear regret and sublinear (non-positive) constraint violation (\emph{i.e.}, allowing for cancellations). 
\citet{stradi2024learning} are the first to tackle CMDPs with adversarial losses and stochastic constraints under {bandit} feedback, by proposing an algorithm that achieves sublinear regret and sublinear positive constraint violation.
%
These works do \emph{not} consider settings where constraints are non-stationary, \emph{i.e.}, they may change over the episodes.

\citet{ding_non_stationary}~and~\citet{wei2023provably} consider the case in which rewards and constraints are non-stationary, assuming that their variation is bounded.
%
Our work differs from theirs in multiple aspects.
First, we consider \emph{positive} constraint violation, while they allow for cancellations.
As concerns the definition of regret, ours and that used by \citet{ding_non_stationary}~and~\citet{wei2023provably} are \emph{not} comparable.
Indeed, they employ a dynamic regret baseline, which, in general, is harder than the static regret employed in our work.
However, they compare learner's performances against a dynamic policy that satisfies the constraints at every round.
Instead, we consider a policy that satisfies the constraints \emph{on average}, which can perform arbitrarily better than a policy satisfying the constraints at every round.
Furthermore, the dependence on $T$ in their regret bound is much worse than ours, even when the non-stationarity is small, namely, when it is a constant independent of $T$ (and, thus, dynamic regret collapses to static regret).
Finally, we do \emph{not} make any assumption on $T$, while both regret and constraint violation bounds in~\citep{wei2023provably} only hold for large $T$.

Finally,~\citet{germano2023bestofbothworlds} are the first to study CMDPs with adversarial constraints.
Given the impossibility result by~\citet{Mannor}, they propose an algorithm that, under full feedback, attains sublinear (non-positive) constraint violation (\emph{i.e.}, with cancellations allowed) and a fraction of the optimal reward, thus resulting in a regret growing linearly in $T$.
We show that sublinear regret and sublinear constraint violation can indeed be attained simultaneously if one takes into account the corruption $C$, which can be seen as a measure of how much adversarial the environment is.
%
%
Moreover, let us remark that our algorithms deal with \emph{positive} constraint violation under \emph{bandit} feedback, and, thus, they are much more general than those in~\citep{germano2023bestofbothworlds}.

\section{Preliminaries}\label{sec:Preliminaries}

\subsection{Constrained Markov decision processes}

We study \emph{episodic constrained} MDPs~\citep{Altman1999ConstrainedMD} (CMDPs), in which a learner interacts with an unknown environment over $T$ episodes, with the goal of maximizing long-term rewards subject to some constraints.
%
%
%
$X$ is a finite set of states of the environment, $A$ is a finite set of actions available to the learner in each state, while the environment dynamics is  governed by a transition function $P : X \times A \times X  \to [0, 1]$, with $P (x^{\prime} |x, a)$ denoting the probability of going from state $x \in X$ to $x^{\prime} \in X$ by taking action $a \in A$.\footnote{In this paper, we consider w.l.o.g.~\emph{loop-free} CMDPs. This means that $X$ is partitioned into $L$ layers $X_{0}, \dots, X_{L}$ such that the first and the last layers are singletons, \emph{i.e.}, $X_{0} = \{x_{0}\}$ and $X_{L} = \{x_{L}\}$. Moreover, the loop-free property implies that $P(x^{\prime} |x, a) > 0$ only if $x^\prime \in X_{k+1}$ and $x \in X_k$ for some $k \in [0 \ldots L-1]$. Notice that any episodic CMDP with horizon $L$ that is \emph{not} loop-free can be cast into a loop-free one by suitably duplicating the state space $L$ times, \emph{i.e.}, a state $x$ is mapped to a set of new states $(x, k)$, where $k \in [0 \ldots L]$.}
%
%
At each episode $t \in [T]$,\footnote{In this paper, we denote by $[a \ldots b]$ the set of all the natural numbers from $a \in \mathbb{N}$ to $b \in \mathbb{N}$ (both included), while $[b] \coloneqq [ 1 \ldots b ]$ is the set of the first $b \in \mathbb{N}$ natural numbers.} a reward vector $r_{t} \in [0,1]^{|X\times A|}$ is sampled according to a probability distribution $\mathcal{R}_t$, with $r_t(x,a)$ being the reward of taking action $a \in A$ in state $x \in X$ at episode $t$.
Moreover, a constraint cost matrix $G_t \in [0,1]^{|X\times A| \times m }$ is sampled according to a probability distribution $\mathcal{G}_t$, with $g_{t,i}(x,a)$ being the cost of constraint $i \in [m]$ when taking action $a \in A$ in state $x \in X$ at episode $t$.
We also denote by $g_{t,i} \in [0,1]^{|X\times A|}$ the vector of all the costs $g_{t,i}(x,a)$ associated with constraint $i$ at episode $t$.
%
Each constraint requires that its corresponding expected cost is kept below a given threshold.
The thresholds of all the $m$ constraints are encoded in a vector $\alpha\in [0,L]^m$, with $\alpha_i$ denoting the threshold of the $i$-th constraint.
%

We consider a setting in which the sequences of probability distributions $\{\mathcal{R}_t\}_{t=1}^T$ and $\{\mathcal{G}_t\}_{t=1}^T$ are selected \emph{adversarially}.
Thus, reward vectors $r_t$ and constraint cost matrices $G_t$ are random variables whose distributions are allowed to change arbitrarily from episode to episode.
To measure how much such probability distributions change over the episodes, we introduce the notion of \emph{(adversarial) corruption}.
In particular, we define the adversarial corruption $C_r$ for the rewards as follows:
\begin{equation}\label{def: C^r}
	C_r \coloneqq \min_{r \in[0,1]^{|X\times A|}}\sum_{t\in [T]}\left\lVert \mathbb{E}[r_{t}] - r \right\rVert_1.
\end{equation}
Intuitively, the corruption $C_r$ encodes the sum over all episodes of the distances between the means $\mathbb{E}[r_{t}]$ of the adversarial distributions $\mathcal{R}_t$ and a ``fictitious'' non-corrupted reward vector $r$.
Notice that a similar notion of corruption has been employed in unconstrained MDPs to measure the non-stationarity of transition probabilities; see~\citep{jin2024no}. 
In the following, we let $r^\circ \in[0,1]^{|X\times A|}$ be a reward vector that attains the minimum in the definition of $C_r$.
Similarly, we introduce the adversarial corruption $C_G$ for constraint costs, which is defined as follows:
\begin{equation}\label{def: C^G}
	C_G \coloneqq {\min_{G \in[0,1]^{|X\times A|\times m}}}\underset{t\in [T]}{\sum}\underset{i \in [m]}{\max}\lVert \mathbb{E}[g_{t,i}] - g_i \rVert_1,
\end{equation}
where $g_i$ is the $i$-th component of $G$. We let $G^\circ \in [0,1]^{|X\times A|\times m}$ be the constraint cost matrix that attains the minimum in the definition of $C_G$.
Finally, we introduce the total adversarial corruption $C$, which is defined as $
C \coloneqq \max\{C_G,C_r\}$.

		\begin{algorithm}[]
			\caption{Learner-Environment Interaction}
			\label{alg: Learner-Environment Interaction}
			\begin{algorithmic}[1]
				\State $\mathcal{R}_t$ and $\mathcal{G}_t$ are chosen \textit{adversarially}
				\State Choose a policy $\pi_{t}: X \times A  \to [0, 1]$
				\State Observe initial state $x_{0}$
				\For{$k = 0, \ldots,  L-1$}
				\State Play $a_{k} \sim \pi_t(\cdot|x_{k})$
				\State Observe  $r_t(x_k,a_k)$ and $g_{t,i}(x_k,a_k)$ for $i\in[m]$
				\State Observe new state $x_{k+1}\sim P(\cdot|x_{k},a_{k})$
				\EndFor
			\end{algorithmic}
		\end{algorithm}

Algorithm~\ref{alg: Learner-Environment Interaction} summarizes how the learner interacts with the environment at episode $t \in [T]$.
In particular, the learner chooses a \emph{policy} $\pi: X \times A \to [0,1]$ at each episode, defining a probability distribution over actions to be employed in each state.
For ease of notation, we denote by $\pi(\cdot|x)$ the probability distribution for a state $x \in X$, with $\pi(a|x)$ being the probability of selecting action $a \in A$.
%
%
Let us remark that we assume that the learner knows $X$ and $A$, but they do \emph{not} know anything about $P$.
Moreover, the \emph{feedback} received by the learner after each episode is \emph{bandit}, as they observe the realizations of rewards and constraint costs only for the state-action pairs $(x_k,a_k)$ actually visited during that episode.

\subsection{Occupancy measures}

Next, we introduce \emph{occupancy measures}, following the notation by~\citep{OnlineStochasticShortest}.
Given a transition function $P$ and a policy $\pi$, the occupancy measure $q^{P,\pi} \in [0, 1]^{|X\times A\times X|}$ induced by $P$ and $\pi$ is such that, for every $x \in X_k$, $a \in A$, and $x^{\prime} \in X_{k+1}$ with $k \in [0 \ldots L-1]$:
\begin{equation}
	q^{P,\pi}(x,a,x^{\prime}) \coloneqq \mathbb{P}[x_{k}=x, a_{k}=a,x_{k+1}=x^{\prime}|P,\pi], \label{def:occupancy_measure}
\end{equation}
which represents the probability that, under $P$ and $\pi$, the learner reaches state $x$, plays action $a$, and gets to the next state $x^\prime$.
Moreover, we also define the following quantities:
\begin{align}
	q^{P,\pi}(x,a) \coloneqq \sum_{x^\prime\in X_{k+1}}q^{P,\pi}(x,a,x^{\prime}) \quad \text{and} \quad
	q^{P,\pi}(x) \coloneqq \sum_{a\in A}q^{P,\pi}(x,a). \label{def:q vector}
\end{align}

The following lemma characterizes when a vector $  q \in [0, 1]^{|X\times A\times X|}$ is a \emph{valid} occupancy measure.
\begin{lemma}[\citet{rosenberg19a}]\label{lem:occupancy_rosenberg}
	A vector $  q \in [0, 1]^{|X\times A\times X|}$ is a {valid} occupancy measure of an episodic loop-free CMDP if and only if it satisfies the following conditions:
	\[
	\begin{cases}
		\displaystyle\sum_{x \in X_{k}}\sum_{a\in A}\sum_{x^{\prime} \in X_{k+1}} q(x,a,x^{\prime})=1 & \forall k\in[0\dots L-1]
		\\ 
		\displaystyle\sum_{a\in A}\sum\limits_{x^{\prime} \in X_{k+1}}q(x,a,x^{\prime})=
		\sum_{x^{\prime}\in X_{k-1}} \sum_{a\in A}q(x^{\prime},a,x) & \forall k\in[1\dots L-1], \forall x \in X_{k} 
		\\
		P^{q} = P,
	\end{cases}
	\]
	where $P$ is the transition function of the CMDP and $P^q$ is the one induced by $q$ (see Equation~\eqref{eq:induced_trans}).
\end{lemma}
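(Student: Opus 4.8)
The plan is to prove both directions of the characterization separately, treating this as a standard ``flow conservation'' argument adapted to the loop-free layered structure. For the forward direction, I would assume that $q = q^{P,\pi}$ is a genuine occupancy measure induced by some transition function $P$ and policy $\pi$, and verify each of the three conditions. The first condition (total mass $1$ on each layer transition) follows because, conditioned on reaching layer $k$ with probability $1$ in a loop-free CMDP, the quantities $q^{P,\pi}(x,a,x')$ form a probability distribution over the triples $(x,a,x')$ crossing from layer $X_k$ to $X_{k+1}$; summing the law of total probability gives $1$. The second condition is flow conservation at each intermediate state $x \in X_k$: the probability of \emph{leaving} $x$ (left-hand side, summing over outgoing $(a,x')$) must equal the probability of \emph{entering} $x$ (right-hand side, summing over incoming $(x',a)$ from layer $X_{k-1}$), and both equal $\mathbb{P}[x_k = x \mid P, \pi]$. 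The third condition, $P^q = P$, is essentially definitional once one checks that the transition function recovered from $q$ via Equation~\eqref{eq:induced_trans} coincides with the one used to generate $q$.

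For the reverse direction, I would take an arbitrary $q$ satisfying the three conditions and construct an explicit policy $\pi$ together with the transition check, then show that the induced occupancy measure $q^{P,\pi}$ equals $q$. The natural construction is to define $\pi(a \mid x) \coloneqq q(x,a) / q(x)$ wherever $q(x) > 0$ (using the aggregated quantities from Equation~\eqref{def:q vector}), and to define $\pi$ arbitrarily on unreachable states. The bulk of the work is then an inductive argument over the layers $k = 0, 1, \dots, L-1$: assuming $q^{P,\pi}(x) = q(x)$ for all $x \in X_k$, one uses the flow-conservation identity together with the third condition $P^q = P$ to propagate the equality to layer $X_{k+1}$. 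The base case is immediate since $X_0 = \{x_0\}$ is a singleton, so both $q(x_0)$ and $q^{P,\pi}(x_0)$ equal $1$ by the first condition.

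The step I expect to be the main obstacle is the inductive propagation in the reverse direction, specifically reconciling the abstract flow constraints with the concrete induced dynamics. One must carefully expand $q^{P,\pi}(x,a,x') = q^{P,\pi}(x)\,\pi(a \mid x)\,P(x' \mid x,a)$, substitute the definition of $\pi$, and invoke $P^q = P$ to replace $P(x'\mid x,a)$ by the ratio extracted from $q$; the algebra must collapse cleanly so that $q^{P,\pi}(x,a,x') = q(x,a,x')$. Handling the degenerate case $q(x) = 0$ (unreachable states, where $\pi$ is defined arbitrarily) requires a short separate argument showing such states contribute zero mass on both sides so the equality is preserved trivially.

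Since this lemma is attributed to \citet{rosenberg19a}, I would most likely state it, give the construction of $\pi$ explicitly, sketch the induction, and refer the reader to the original source for the full layer-by-layer verification rather than reproducing every routine calculation.
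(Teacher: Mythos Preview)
Your proposal is correct and in fact goes well beyond what the paper does: the paper does not prove this lemma at all, but simply states it and attributes it to \citet{rosenberg19a}. Your two-direction argument (verifying the three conditions from the definition of $q^{P,\pi}$, then reconstructing $\pi$ via $\pi(a\mid x)=q(x,a)/q(x)$ and running a layer-by-layer induction using $P^q=P$) is the standard way to establish this characterization, and your own final remark---that you would ultimately cite the original source rather than reproduce every calculation---already matches exactly how the paper handles it.
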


Notice that any valid occupancy measure $q$ induces a transition function $P^{q}$ and a policy $\pi^{q}$ as:
\begin{equation}P^{q}(x^{\prime}|x,a)= \frac{q(x,a,x^{\prime})}{q(x,a)}   \quad \text{and} \quad \pi^{q}(a|x)=\frac{q(x,a)}{q(x)}.\label{eq:induced_trans}
\end{equation}

\subsection{Performance metrics to evaluate learning algorithms}

In order to define the performance metrics used to evaluate our \emph{online} learning algorithms, we need to introduce an \emph{offline} optimization problem.
Given a CMDP with transition function $P$, we define the following parametric \emph{linear program} (Program~\eqref{lp:offline_opt}), which is parametrized by a reward vector $r \in [0,1]^{|X \times A|}$, a constraint cost matrix $G \in [0,1]^{|X \times A| \times m}$ and a threshold vector ${\alpha} \in [0,L]^m$.
\begin{equation}\label{lp:offline_opt}
		\text{OPT}_{r, G, {\alpha}} \coloneqq \begin{cases}
		\max_{q \in \Delta(P)} & r^{\top}  q \quad \text{s.t.}\\
		 & G^{\top}  q \leq  {\alpha} ,
	\end{cases}
\end{equation}
where $ q\in[0,1]^{|X\times A|}$ is a vector encoding an occupancy measure, whose values are defined for state-action pairs according to Equation~\eqref{def:q vector}, and $\Delta(P)$ is the set of all valid occupancy measures given the transition function $P$ (this set can be encoded by linear constraints thanks to Lemma~\ref{lem:occupancy_rosenberg}).

We say that an instance of Program~\eqref{lp:offline_opt} satisfies \emph{Slater's condition} if the following holds.
\begin{condition}[Slater]\label{cond:slater}
	There exists an occupancy measure $q^\circ\in\Delta(P)$ such that $G^\top q^\circ < \alpha$.
\end{condition}
Moreover, we also introduce a problem-specific \emph{feasibility parameter} related to Program~\eqref{lp:offline_opt}. This is denoted by $\rho\in[0,L]$ and formally defined as
$
	\rho \coloneqq \sup_{q \in \Delta(P)} \min_{i \in [m]} \left[  \alpha-G^\top q \right]_i
$.\footnote{In this paper, given a vector $y$, we denote by $[y]_i$ its $i$-th component.}
Intuitively, $\rho$ represents by how much feasible solutions to Program~\eqref{lp:offline_opt} strictly satisfy the constraints.
%
%
Notice that Condition~\ref{cond:slater} is equivalent to say that $\rho > 0$, while, whenever $\rho = 0$, there is no occupancy measure that allows to strictly satisfy the constraints $G^{\top}  q \leq  \alpha$ in Program~\eqref{lp:offline_opt}.
%

We are now ready to introduce the notion of \emph{(cumulative) regret} and \emph{positive (cumulative) constraint violation}, which are the performance metrics that we use to evaluate our learning algorithm.
In particular, we define the cumulative regret over $T$ episodes as follows:
\begin{equation*}
	R_{T} \coloneqq  T \cdot\text{OPT}_{\overline{r}, \overline{G}, \alpha}-  \sum_{t\in[T]}  \mathbb{E}[r_{t}]^{\top}   q^{P, \pi_{t}}  ,
\end{equation*}
where  $\overline{r} \coloneqq 
\frac{1}{T}\sum_{t=1}^{T}  \mathbb{E}[r_{t}]  $ and $\overline{G} \coloneqq 
\frac{1}{T}\sum_{t=1}^{T}  \mathbb{E}[G_t]$.
%
%
%
In the following, we denote by $q^*$ an occupancy measure solving Program~\eqref{lp:offline_opt} instantiated with $\overline{r}$, $\overline{G}$, and $\alpha$, while its corresponding policy (computed by Equation~\eqref{eq:induced_trans}) is $\pi^*$.
Thus, $\text{OPT}_{\overline{r}, \overline{G}, \alpha}=\overline{r}^{\top} q^*$ and the regret is $R_T \coloneqq \sum_{t=1}^T \mathbb{E}[r_t]^\top (q^*-q^{P,\pi_t})$.
Furthermore, we define the positive cumulative constraint violation over $T$ episodes as:
\begin{equation*}
	V_{T}:= \max_{i\in[m]}\sum_{t\in[T]}\left[ \mathbb{E}[G_t]^{\top} q^{P,\pi_{t}}-{\alpha}\right]^+_i,
\end{equation*}
where we let $[\cdot]^+:=\max\{0, \cdot\}$.
%
%
In the following, for ease of notation, we compactly refer to $q^{P,\pi_{t}}$ as $q_t$, thus omitting the dependency on $P$ and $\pi$.

\begin{remark}[Relation with adversarial/stochastic CMDPs]
Our setting is more akin to CMDPs with adversarial rewards and constraints, rather than stochastic ones.
This is because our notion of regret is computed with respect to an optimal constraint-satisfying policy in hindsight that takes into account the average over episodes of the mean values $\mathbb{E}[r_{t}]$ and $\mathbb{E}[G_t]$ of the adversarially-selected probability distributions $\mathcal{R}_t$ and $\mathcal{G}_t$.
This makes our setting much harder than one with stochastic rewards and constraints.
Indeed, in the special case in which the supports of $\mathcal{R}_t$ and $\mathcal{G}_t$ are singletons (and, thus, mean values are fully revealed after each episode), our setting reduces to a CMDP with adversarial rewards and constraints, given that such supports are selected adversarially.
%
%
\end{remark}

\begin{remark}[Impossibility results carrying over from adversarial CMDPs]
	\citet{Mannor} show that, in online learning problems with constraints selected adversarially, it is impossible to achieve both regret and constraint violation growing sublinearly in $T$.
	This result holds for a regret definition that corresponds to ours. Thus, it carries over to our setting.
	This is why we look for algorithms whose regret and positive constraint violation scale as $\tilde{\mathcal{O}}(\sqrt{T}+C)$, with a linear dependency on the adversarial corruption $C$.
	Notice that the impossibility result by~\citet{Mannor} does not rule out the possibility of achieving such a guarantee, since regret and positive constraint violation are not sublinear when $C$ grows linearly in $T$, as it could be the case in a classical adversarial setting. 
\end{remark}
%
\section{Learning when $C$ is known: More optimism is all you need}
\label{sec: known_c}

We start studying the case in which the learner \emph{knows} the adversarial corruption $C$.
We propose an algorithm (called \texttt{NS-SOPS}, see also Algorithm~\ref{alg:NS_UCSPS}), which adopts a suitably-designed UCB-like approach encompassing the adversarial corruption $C$ in the confidence bounds of rewards and constraint costs.
This effectively results in ``boosting'' the \emph{optimism} of the algorithm, and it allows to achieve regret and positive constraint violation of the order of $\tilde{\mathcal{O}}(\sqrt{T}+C)$.
The \texttt{NS-SOPS} algorithm is also a crucial building block in the design of our algorithm for the case in which the adversarial corruption $C$ is \emph{not} known, as we show in the following section.

\subsection{\texttt{NS-SOPS}: non-stationary safe optimistic policy search}


Algorithm~\ref{alg:NS_UCSPS} provides the pseudocode of the \emph{non-stationary safe optimistic policy search} (\texttt{NS-SOPS} for short) algorithm.
The algorithm keeps track of suitably-defined confidence bounds for transition probabilities, rewards, and constraint costs.
At each episode $t \in [T]$, the algorithm builds a confidence set $\mathcal{P}_t$ for the transition function $P$ by following the same approach as~\citet{JinLearningAdversarial2019} (see Appendix~\ref{app:auxiliary} for its definition).
Instead, for rewards and constraint costs, the algorithm adopts novel \emph{enlarged} confidence bounds, which are suitably designed to tackle non-stationarity.
Given $\delta \in (0,1)$, by letting $N_t(x,a)$ be the total number of visits to the state-action pair $(x,a) \in X \times A$ up to episode $t$ (excluded), the confidence bound for the reward $r_t(x,a)$ is:
\begin{equation*}
\phi_t(x,a) \coloneqq \min\left\{1,\sqrt{\frac{\ln\left( \nicefrac{2T|X||A|}{\delta}\right)}{2 \max\{N_t(x,a),1\}} }+\frac{C}{\max\{N_t(x,a),1\}}+\frac{C}{T}\right\},
\end{equation*}
while the confidence bound for the constraint costs $g_{t,i}(x,a)$ is defined as:
\begin{equation*} 
\xi_t(x,a) \coloneqq \min\left\{1,\sqrt{\frac{\ln\left( \nicefrac{2mT|X||A|}{\delta}\right)}{2 \max\{N_t(x,a),1\}} }+\frac{{C}}{\max\{N_t(x,a),1\}}+\frac{{C}}{T}\right\}.
\end{equation*} 
Intuitively, the first term in the expressions above is derived from Azuma-Hoeffding inequality, the second term allows to deal with the non-stationarity of rewards and constraint costs, while the third term is needed to bound how much the average reward vector $\overline{r}$ and the average constraint costs $[\overline{G}]_i$ differ from their ``fictitious'' non-corrupted counterparts $r^\circ$ and $[G^\circ]_i$, respectively.   
%

Algorithm~\ref{alg:NS_UCSPS} also computes empirical rewards and constraint costs.
At each episode $t \in [T]$, for any state-action pair $(x,a) \in X \times A$ and constraint $i \in [m]$, these are defined as follows:
\begin{equation*}
	\widehat{r}_{t}(x,a) \coloneqq \frac{\sum_{\tau \in [t]}\mathbb{I}_\tau(x,a)r_{\tau}(x,a) }{\max\{N_t(x,a),1\}}\quad \text{and} \quad \widehat{g}_{t,i}(x,a) \coloneqq \frac{\sum_{\tau \in [t]}\mathbb{I}_\tau(x,a)g_{\tau,i}(x,a)}{\max\{N_t(x,a),1\}},
\end{equation*}
where $\mathbb{I}_\tau(x,a)=1$ if and only if $(x,a)$ is visited during episode $\tau$, while $\mathbb{I}_\tau(x,a)=0$ otherwise.
For ease of notation, we let $\widehat{G}_t \in [0,1]^{|X \times A| \times m}$ be the matrix with components $\widehat{g}_{t,i}(x,a) $.
We refer the reader to Appendix~\ref{app:confidence} for all the technical results related to confidence bounds.

\
		\begin{algorithm}[]
			\caption{\texttt{NS-SOPS}}
			\label{alg:NS_UCSPS}
			\begin{algorithmic}[1]
				\Require $C$, $\delta \in (0,1)$
				\State $\pi_1 \gets$ select any policy
				\For{$t\in[T]$}
				\State Choose policy $\pi_t$ in Algorithm~\ref{alg: Learner-Environment Interaction} and observe feedback from interaction
				\State Compute $\mathcal{P}_t$, $\overline{r}_t$, and $\underline{G}_{t} $ \label{alg2: line6}
				\State $q \gets $ solution to  $\text{\texttt{OPT-CB}}_{\Delta(\mathcal{P}_t),\overline{r}_t,\underline{G}_t, {\alpha}}$
				\If{problem is \emph{feasible}} \label{alg2: line9}
				\State $\widehat{q}_{t+1} \gets q$ \label{alg2: line10}
				\Else 
				\State $\widehat q_{t+1} \gets $ take any $q\in\Delta(\mathcal{P}_t)$ \label{alg2: line12}
				\EndIf
				\State $\pi_{t+1} \gets \pi^{\widehat{q}_{t+1}}$ \label{alg2: line13}
				\EndFor
			\end{algorithmic}
		\end{algorithm}

Algorithm~\ref{alg:NS_UCSPS} selects policies with an UCB-like approach encompassing \emph{optimism} in both rewards and constraints satisfaction, following an approach similar to that employed by~\citet{Exploration_Exploitation}.
Specifically, at each episode $t \in [T]$ and for any  state-action pair $(x,a) \in X \times A$, the algorithm employs an \emph{upper} confidence bound for the reward $r_t(x,a)$, defined as $\overline r_t(x,a) \coloneqq \widehat{r}_{t}(x,a) + \phi_t(x,a)$, while it uses \emph{lower} confidence bounds for the constraint costs $g_{t,i}(x,a)$, defined as $\underline{g}_{t,i}(x,a) \coloneqq \widehat{g}_{t,i}(x,a) - \xi_t(x,a)$ for every constraint $i \in [m]$.
Then, by letting $\overline r_t \in [0,1]^{|X \times A|}$ be the vector with components $\overline r_t(x,a)$ and $\underline{G}_t$ be the matrix with entries $\underline{g}_{t,i}(x,a)$, Algorithm~\ref{alg:NS_UCSPS} chooses the policy to be employed in the next episode $t+1$ by solving the following linear program:
\begin{equation}\label{lp:opt_opt}
	\text{\texttt{OPT-CB}}_{\Delta(\mathcal{P}_t), \overline r_t, \underline{G}_t, \alpha} \coloneqq \begin{cases}
		\argmax_{  q \in \Delta(\mathcal{P}_t)} &   \overline r_t^{\top}  q \quad \text{s.t.}\\
	& \underline G_t^{\top}  q \leq  {\alpha},
	\end{cases}
\end{equation}
where $\Delta(\mathcal{P}_t)$ is the set of all the possible valid occupancy measures given the confidence set $\mathcal{P}_t$ (see Appendix~\ref{app:auxiliary}).
%
%
If $\texttt{OPT-CB}_{\Delta(\mathcal{P}_t), \overline r_t, \underline{G}_t, \alpha}$ is feasible, its solution is used to compute a policy to be employed in the next episode, otherwise the algorithm uses any occupancy measure in the set $\Delta(\mathcal P_t)$.

\subsection{Theoretical guarantees of \texttt{NS-SOPS}}

Next, we prove the theoretical guarantees attained by Algorithm~\ref{alg:NS_UCSPS} (see Appendix~\ref{app:known_c} for complete proofs of the theorems and associated lemmas).
First, we analyze the positive cumulative violation incurred by the algorithm.
Formally, we can state the following result.
%
\begin{restatable}{theorem}{violationbound}\label{theo: violationbound}
	Given any $\delta\in(0,1)$, with probability at least $1-8\delta$, Algorithm~\ref{alg:NS_UCSPS} attains:
	\begin{equation*}
		V_T = \mathcal{O}\left(L|X|\sqrt{|A|T\ln\left(\nicefrac{mT|X||A|}{\delta}\right)}+\ln(T)|X||A|C\right).
	\end{equation*}
\end{restatable}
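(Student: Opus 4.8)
The plan is to proceed in three stages: (i) set up the high-probability events under which all confidence bounds are valid and the transition function lies in every confidence set $\mathcal{P}_t$; (ii) show that, on these events, the offline optimum $q^*$ is always feasible for $\texttt{OPT-CB}$, so that Algorithm~\ref{alg:NS_UCSPS} never resorts to the fallback of line~\ref{alg2: line12} and the played occupancy satisfies the lower-confidence constraints; and (iii) bound the per-episode violation by a \emph{non-negative} quantity, so that the positive part in the definition of $V_T$ costs nothing extra. This last point is precisely what lets us control \emph{positive} violation without relying on cancellations.

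First I would fix a constraint $i \in [m]$ and, writing $\widehat q_t$ for the solution of $\texttt{OPT-CB}$ used to define $\pi_t$ and $q_t = q^{P,\pi_t}$ for the realized occupancy, decompose
\[
\mathbb{E}[g_{t,i}]^\top q_t - \alpha_i = \underbrace{\mathbb{E}[g_{t,i}]^\top (q_t - \widehat q_t)}_{\text{(I) transition error}} + \underbrace{(\mathbb{E}[g_{t,i}] - \underline g_{t,i})^\top \widehat q_t}_{\text{(II) confidence gap}} + \underbrace{(\underline g_{t,i}^\top \widehat q_t - \alpha_i)}_{\text{(III)}}.
\]
Term (III) is $\le 0$ by feasibility of $\texttt{OPT-CB}$. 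Term (I) is at most $\|q_t - \widehat q_t\|_1$ since costs lie in $[0,1]$, and summing over $t$ the standard occupancy-measure deviation bound (valid because $P \in \mathcal{P}_t$, following~\citet{JinLearningAdversarial2019}) yields the dominant contribution $\mathcal{O}(L|X|\sqrt{|A|T\ln(\cdot)})$. Both (I) and the bound on (II) are manifestly non-negative, so $[\mathbb{E}[g_{t,i}]^\top q_t - \alpha_i]^+$ is controlled termwise, uniformly over $i$; taking $\max_i$ and summing then gives $V_T$.

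The core of the argument is the confidence-bound lemma (Appendix~\ref{app:confidence}): on the good event, $|\widehat g_{t,i}(x,a) - [G^\circ]_i(x,a)| \le \xi_t(x,a) - C/T$ for all $t,i,x,a$, where the martingale part is handled by Azuma--Hoeffding (first term of $\xi_t$) and the per-episode drift $\sum_{\tau\le t}|\mathbb{E}[g_{\tau,i}(x,a)] - [G^\circ]_i(x,a)| \le C$ by the definition of $C_G$ (second term $C/\max\{N_t,1\}$). Combined with $|[G^\circ]_i - [\overline G]_i| \le C/T$ (third term), this gives $\underline g_{t,i} \le [\overline G]_i$ entrywise, hence $\underline g_{t,i}^\top q^* \le [\overline G]_i^\top q^* \le \alpha_i$ and feasibility of $q^*$ (recalling $q^* \in \Delta(P) \subseteq \Delta(\mathcal{P}_t)$). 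For term (II) I would write $\mathbb{E}[g_{t,i}] - \underline g_{t,i} \le (\mathbb{E}[g_{t,i}] - [G^\circ]_i) + 2\xi_t$ and bound the three pieces of $2\,\widehat q_t^\top \xi_t$ using the counting identities $\sum_t \mathbb{I}_t(x,a)/\sqrt{N_t(x,a)} = \mathcal{O}(\sqrt{N_T(x,a)})$ and $\sum_t \mathbb{I}_t(x,a)/N_t(x,a) = \mathcal{O}(\ln T)$, together with $\sum_{x,a} N_T(x,a) = LT$ and Cauchy--Schwarz. This maps the three terms of $\xi_t$ to $\mathcal{O}(\sqrt{L|X||A|T\ln(\cdot)})$, $\mathcal{O}(\ln(T)|X||A|C)$, and $\mathcal{O}(LC)$, respectively, while the $(\mathbb{E}[g_{t,i}]-[G^\circ]_i)$ piece sums to $\mathcal{O}(C)$ again by $C_G$; all are dominated by the two terms in the statement (after converting $\widehat q_t$-weights to $q_t$-weights, the error being absorbed into (I)).

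The main obstacle I anticipate is the concentration step under the combination of non-stationarity and adaptive visitation: the estimator $\widehat g_{t,i}(x,a)$ averages realized costs over episodes whose selection depends on past randomness, and the per-episode means $\mathbb{E}[g_{\tau,i}]$ themselves drift, so one must invoke a martingale (Azuma--Hoeffding) argument with a union bound over the random number of visits and over $(t,i,x,a)$ --- this is where the $\ln(\nicefrac{2mT|X||A|}{\delta})$ factor and several of the $\delta$-terms arise, with the remaining terms coming from the transition confidence set and the occupancy-deviation bound, for a total failure probability of $8\delta$. The delicate bookkeeping is ensuring the \emph{three} corruption terms of $\xi_t$ line up so as to simultaneously (a) keep $q^*$ feasible and (b) telescope the violation into exactly the $\mathcal{O}(\ln(T)|X||A|C)$ corruption contribution, with no term growing faster than claimed.
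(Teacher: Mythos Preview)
Your proposal is correct and follows essentially the same approach as the paper: the same three-term decomposition (transition error via $\|q_t-\widehat q_t\|_1$, confidence gap via $2\xi_{t-1}$, and feasibility of $\widehat q_t$ in \texttt{OPT-CB}), the same use of Lemma~\ref{lemma: aux regret} to guarantee feasibility of $q^*$, and the same counting identities for $\sum_t \xi_{t-1}$. The only step you leave implicit is the Azuma--Hoeffding inequality needed to pass from the occupancy-weighted sum $\sum_t \xi_{t-1}^\top q_t$ to the indicator-weighted sum $\sum_{t,x,a} \xi_{t-1}(x,a)\mathbb{I}_t(x,a)$ before invoking the counting identities (this contributes one of the $\delta$-terms in the $1-8\delta$ budget), and the index should be $\xi_{t-1}$/$\underline g_{t-1,i}$ rather than $\xi_t$/$\underline g_{t,i}$ since $\widehat q_t$ is computed at the end of episode $t-1$.
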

Intuitively, Theorem~\ref{theo: violationbound} is proved by showing that every constraint-satisfying occupancy measure is also feasible for Program~\eqref{lp:opt_opt} with high probability.
This holds since Program~\eqref{lp:opt_opt} employs lower confidence bounds for constraint costs.
Thus, in order to bound $V_T$, it is sufficient to analyze at which rate the feasible region of Program~\eqref{lp:opt_opt} concentrates to the \emph{true} one (\emph{i.e.}, the one defined by $\overline{G}$ in Program~\eqref{lp:offline_opt}).
Since by definition of $\xi_t(x,a)$ the feasibility region of Program~\eqref{lp:opt_opt} concentrates as $1/\sqrt{t}+C/t$, the resulting bound for the positive constraint violation $V_T$ is of the order of $\tilde{\mathcal{O}}(\sqrt{T}+C)$.

The regret guaranteed by Algorithm~\ref{alg:NS_UCSPS} is formalized by the following theorem.
\begin{restatable}{theorem}{regretknownCG}\label{theo:regretknownCG}
	Given any $\delta \in (0,1)$, with probability at least $1-9\delta$, Algorithm~\ref{alg:NS_UCSPS} attains:
	\begin{equation*}
		R_T= \mathcal{O}\left( L|X|\sqrt{|A|T \ln \left(\nicefrac{T|X||A|}{\delta}\right)}+ \ln(T)|X||A|C\right).
	\end{equation*}
\end{restatable}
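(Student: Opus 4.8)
The plan is to upper bound $R_T=\sum_{t\in[T]}\mathbb{E}[r_t]^\top(q^*-q_t)$, where I write $q_t$ for $q^{P,\pi_t}$, by first replacing the per-episode means $\mathbb{E}[r_t]$ with the time-average $\overline r$, and then running a standard optimism-plus-confidence-width argument on the resulting static-reward regret. For the first reduction I would insert $\overline r$ and use $\|q_t\|_\infty\le 1$ to get $\big|\sum_t(\mathbb{E}[r_t]-\overline r)^\top q_t\big|\le\sum_t\|\mathbb{E}[r_t]-\overline r\|_1$. The right-hand side is controlled by the corruption: inserting $r^\circ$ and applying the triangle inequality gives $\sum_t\|\mathbb{E}[r_t]-\overline r\|_1\le C_r+T\|\overline r-r^\circ\|_1\le 2C_r\le 2C$, where the middle step uses $\|\overline r-r^\circ\|_1\le\frac1T\sum_t\|\mathbb{E}[r_t]-r^\circ\|_1=C_r/T$. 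Hence $R_T\le\sum_t\overline r^\top(q^*-q_t)+2C$, and it remains to bound $\sum_t\overline r^\top(q^*-q_t)$.

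For this I would use the decomposition (suppressing the off-by-one between the episode in which an occupancy is computed and the one in which it is played, which only affects constants)
\[
\overline r^\top(q^*-q_t)=\underbrace{\big(\overline r^\top q^*-\overline r_t^\top\widehat{q}_t\big)}_{(\mathrm{I})}+\underbrace{\big(\overline r_t-\overline r\big)^\top\widehat{q}_t}_{(\mathrm{II})}+\underbrace{\overline r^\top(\widehat{q}_t-q_t)}_{(\mathrm{III})},
\]
where $\widehat{q}_t$ is the occupancy returned by \texttt{OPT-CB} and $q_t$ the true occupancy induced by $\pi_t=\pi^{\widehat{q}_t}$ under $P$. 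Term $(\mathrm{III})$ is the transition-estimation error $\overline r^\top(\widehat{q}_t-q_t)\le\|\widehat{q}_t-q_t\|_1$, which, summed over $t$, is bounded by the standard $\widetilde{\mathcal O}(L|X|\sqrt{|A|T})$ confidence-set lemma of \citet{JinLearningAdversarial2019} (invoked via Appendix~\ref{app:auxiliary}), valid on the event $P\in\mathcal P_t$ for all $t$.

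The crux is term $(\mathrm{I})$, which optimism must render nonpositive on a high-probability event. I would show that $q^*$ is feasible for $\texttt{OPT-CB}_{\Delta(\mathcal P_t),\overline r_t,\underline G_t,\alpha}$ with high probability: since $q^*\in\Delta(P)$ and $P\in\mathcal P_t$ we have $q^*\in\Delta(\mathcal P_t)$, while the enlarged lower bound gives $\underline G_t^\top q^*\le\overline G^\top q^*\le\alpha$, the first inequality holding because $\underline g_{t,i}\le[\overline G]_i$ componentwise on the confidence event and the second because $q^*$ is feasible for Program~\eqref{lp:offline_opt}. Feasibility of $q^*$ means the algorithm takes the optimizer (not the \emph{else} branch) and yields $\overline r_t^\top\widehat{q}_t\ge\overline r_t^\top q^*\ge\overline r^\top q^*$, the last step using reward optimism $\overline r_t\ge\overline r$, so $(\mathrm I)\le 0$. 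The key technical content here---and the main obstacle---is exactly that the enlarged bounds $\phi_t,\xi_t$ are \emph{valid}: the Azuma--Hoeffding term controls the fluctuation of $\widehat r_t,\widehat G_t$ around their conditional means, the $C/\max\{N_t,1\}$ term absorbs the gap between a visit-weighted average of the $\mathbb{E}[r_\tau]$ (resp.\ $\mathbb{E}[g_{\tau,i}]$) and the non-corrupted $r^\circ$ (resp.\ $G^\circ$), and the $C/T$ term absorbs the gap between $\overline r$ (resp.\ $\overline G$) and $r^\circ$ (resp.\ $G^\circ$). These are precisely the statements I would take from Appendix~\ref{app:confidence}.

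Finally, for $(\mathrm{II})$ the same confidence event gives $\overline r_t-\overline r\le 2\phi_t$ componentwise, hence $(\mathrm{II})\le 2\sum_{x,a}\phi_t(x,a)\widehat{q}_t(x,a)$; after replacing $\widehat{q}_t$ by $q_t$ (the error folding into the transition term as in $(\mathrm{III})$) I would split $\phi_t$ into its three summands. A pigeonhole/Cauchy--Schwarz argument on the visit counts bounds $\sum_t\sum_{x,a}q_t(x,a)/\sqrt{\max\{N_t,1\}}$, which, combined with the layer factor and with $(\mathrm{III})$, produces the leading term $L|X|\sqrt{|A|T\ln(\nicefrac{T|X||A|}{\delta})}$; the per-pair harmonic sum $\sum_t q_t(x,a)/\max\{N_t,1\}\le\ln T$ turns the $C/\max\{N_t,1\}$ part into $\mathcal O(\ln(T)|X||A|C)$; and since $\sum_{x,a}q_t(x,a)=L$ the $C/T$ part contributes only $\mathcal O(LC)$. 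Summing $(\mathrm I)$--$(\mathrm{III})$ together with the $2C$ from the reduction, and taking a union bound over the constantly many high-probability events, yields $R_T=\mathcal O\big(L|X|\sqrt{|A|T\ln(\nicefrac{T|X||A|}{\delta})}+\ln(T)|X||A|C\big)$ with probability at least $1-9\delta$.
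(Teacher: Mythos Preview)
Your argument is correct and mirrors the paper's proof almost exactly: the same reduction to $\overline r$ via $r^\circ$ (costing $2C_r$), the same optimism step showing $q^*$ is feasible for \texttt{OPT-CB} so that $\overline r^\top q^*\le\overline r_{t-1}^\top\widehat q_t\le(\overline r+2\phi_{t-1})^\top\widehat q_t$, the same transition-error term handled by Lemma~\ref{lem:transition_jin}, and the same splitting of $\phi_t$ into its three summands.

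One small technical point: the inequality you write as ``$\sum_t q_t(x,a)/\max\{N_t,1\}\le\ln T$'' is not true as stated, because $q_t(x,a)$ is an \emph{expected} visitation while $N_t(x,a)$ counts \emph{realized} visits, and the two are not deterministically coupled. The paper (and the analogous step in Theorem~\ref{theo: violationbound}) first applies Azuma--Hoeffding to replace $\sum_{x,a}\phi_{t-1}(x,a)q_t(x,a)$ by $\sum_{x,a}\phi_{t-1}(x,a)\mathbb I_t(x,a)$ up to an additive $L\sqrt{2T\ln(1/\delta)}$; only then do the sums $\sum_t\mathbb I_t(x,a)/\sqrt{N_{t-1}(x,a)}$ and $\sum_t\mathbb I_t(x,a)/N_{t-1}(x,a)$ telescope into $\sqrt{N_T(x,a)}$ and $\ln T$ respectively. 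This extra Azuma event is also what makes the final probability come out to $1-9\delta$ (namely $\mathcal E_G\cap\mathcal E_r\cap\mathcal E_{\widehat q}\cap\mathcal E_{\text{Azuma}}$). With that one-line fix your proof is complete and identical in structure to the paper's.
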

Theorem~\ref{theo:regretknownCG} is proved similarly to Theorem~\ref{theo: violationbound}.
Indeed, since every constraint-satisfying occupancy measure is feasible for Program~\eqref{lp:opt_opt} with high probability, this also holds for $q^*$, as it satisfies constraints by definition.
Thus, since by definition of $\phi_t(x,a)$ the upper confidence bound for the rewards maximized by Program~\eqref{lp:opt_opt} concentrates as $1/\sqrt{t}+C/t$, the regret bound follows.

\begin{remark}[What if some under/overestimate of $C$ is available]
	We also study what happens if the learner runs Algorithm~\ref{alg:NS_UCSPS} with an under/overestimate on the adversarial corruption as input.
	We defer to Appendix~\ref{app:C_notprecise} all the technical results related to this analysis.
	In particular, it is possible to show that any underestimate on $C$ does not detriment the bound on $V_T$, which remains the one in Theorem~\ref{theo: violationbound}.
	On the other hand, an overestimate on $C$, say $\widehat{C}>C$, results in a bound on $V_T$ of the order of $\mathcal{O}(\sqrt T + \widehat{C})$, which is worse than the one in Theorem~\ref{theo: violationbound}.
	Intuitively, this is because using an overestimate makes Algorithm~\ref{alg:NS_UCSPS} too conservative.
	As a result, one could be tempted to conclude that running Algorithm~\ref{alg:NS_UCSPS} with an underestimate of $C$ as input is satisfactory when the true value of $C$ is unknown.
	However, this would lead to a regret $R_T$ growing linearly in $T$, since, intuitively, a regret-minimizing policy could be cut off from the algorithm decision space.
	This motivates the introduction of additional tools to deal with the case in which $C$ is unknown, as we do in Section~\ref{sec: unknown_C}. 
	%
\end{remark}

\section{Learning when $C$ is \emph{not} known: A Lagrangified meta-procedure}\label{sec: unknown_C}

In this section, we go beyond Section~\ref{sec: known_c} by studying the more relevant case in which the learner does \emph{not} know the value of the adversarial corruption $C$.
In order to tackle this challenging scenario, we develop a \emph{meta-procedure} (called \texttt{Lag-FTRL}, see Algorithm~\ref{alg:unknown_c}) that instantiates multiple instances of an algorithm working for the case in which $C$ is known, with each instance taking care of a different ``guess'' on the value of $C$.
The \texttt{Lag-FTRL} algorithm is inspired by the work of~\citet{corralling} in the context of classical (unconstrained) multi-armed bandit problems.
Let us remark that \texttt{Lag-FTRL} is a general algorithm that is \emph{not} specifically tailored for our non-stationary CMDP setting.
Indeed, it could be applied to any non-stationary online learning problem with constraints when the adversarial corruption $C$ is unknown, provided that an algorithm working for known $C$ is available.
In this section, to deal with our non-stationary CMDP setting, we let \texttt{Lag-FTRL} instantiate multiple instances of the \texttt{NS-SOPS} algorithm developed in Section~\ref{sec: known_c}.
%

\subsection{\texttt{Lag-FTRL}: Lagrangified FTRL}

At a high level, the \emph{Lagrangified follow-the-regularized-leader} (\texttt{Lag-FTRL} for short) algorithm works by instantiating several different instances of Algorithm~\ref{alg:NS_UCSPS}, suitably stabilized (see section \ref{app: stability}), with each instance $\texttt{Alg}^j$ being run for a different ``guess'' of the (unknown) adversarial corruption value $C$.
The algorithm plays the role of a \emph{master} by choosing which instance $\texttt{Alg}^j$ to use at each episode.
The selection is done by employing an FTRL approach with a suitable log-barrier regularization.
In particular, at each episode $t \in [T]$, by letting $\texttt{Alg}^{j_t}$ be the selected instance, the \texttt{Lag-FTRL} algorithm employs the policy $\pi_t^{j_i}$ prescribed by $\texttt{Alg}^{j_t}$ and provides the observed feedback to instance $\texttt{Alg}^{j_t}$ only.

The \texttt{Lag-FTRL} algorithm faces two main challenges.
First, the feedback available to the FTRL procedure implemented at the master level is \emph{partial}.
This is because, at each episode $t \in [T]$, the algorithm only observes the result of using the policy $\pi_t^{j_i}$ prescribed by the chosen instance $\texttt{Alg}^{j_t}$, and \emph{not} those of the policies suggested by other instances.
The algorithm tackles this challenge by employing \emph{optimistic loss estimators} in the FTRL selection procedure, following an approach originally introduced by~\citet{neu}.
The second challenge originates from the fact that the goal of the algorithm is to keep under control both the regret and the positive constraint violation.
This is accomplished by feeding the FTRL procedure with losses constructed starting from the Lagrangian of the offline optimization problem in Program~\eqref{lp:offline_opt}, and suitably modified to manage \emph{positive} violations.
%

		\begin{algorithm}[]
			\caption{\texttt{Lag-FTRL}}
			\label{alg:unknown_c}
			\begin{algorithmic}[1]
				%
				\Require $\delta \in (0,1)$
				\State $\Lambda \gets \frac{Lm+1}{\rho}$, $M \gets  \lceil \log_2 T \rceil$
				\State $\gamma \gets \sqrt{\nicefrac{\ln(M/\delta)}{TM}}$, \ $\eta \gets $$\frac{1}{2\Lambda m\left(\sqrt{\beta_1T}+\beta_2+\beta_5+\sqrt{\beta_4T}\right)}$
				%
				%
				\For{$j \in [M]$}
				\State $\texttt{Alg}^j \gets $ 
				stabilized Algorithm~\ref{alg:NS_UCSPS} with $C = 2^j$\label{alg3: line1} 
				\EndFor
				\State $w_{1,j} \gets \nicefrac{1}{M}$ for all $j \in [M]$ \label{alg3: line2}
				\For{$t\in[T]$}
				\State Sample index $j_t \sim w_t$ \label{alg3: line4}
				\State $\pi_t^{j_t} \gets $ policy that $\texttt{Alg}^{j_t}$ would choose\label{alg3: line5}
				%
				%
				\State Choose policy $\pi_t^{j_t}$ in Algorithm~\ref{alg: Learner-Environment Interaction} and observe \textcolor{white}{......}feedback from interaction
				\State Let $\texttt{Alg}^{j_t}$ observe received feedback\label{alg3: line6}
				%
				%
				\For{$j \in [M]$}
				\State Build $\ell_{t,j}$ as in Equation~\eqref{eq:loss_estimator}\label{alg3: line7}
				\State Build $b_{t,j}$ as in Equation~\eqref{eq: bonus} \label{alg3: line7_bis}
				\EndFor
				\State $\displaystyle w_{t+1} \gets \argmin_{\underset{w_j\ge\nicefrac{1}{T}}{w \in \Delta_M,}} \, w^\top \sum_{\tau \in [t]}(\ell_t - b_t)+ \frac{1}{\eta}\sum_{j\in[M]} \ln\frac{1}{w_j}$ \label{alg3: line8}
				%
				\EndFor
			\end{algorithmic}
		\end{algorithm}

The pseudocode of the \texttt{Lag-FTRL} algorithm is provided in Algorithm~\ref{alg:unknown_c}.
At Line~\ref{alg3: line1}, it instantiates $M \coloneqq\lceil \log_2 T \rceil$ instances of  Algorithm~\ref{alg:NS_UCSPS}, with each instance $\texttt{Alg}^j$, for $j \in [M]$, receiving as input a ``guess'' on the adversarial corruption $C=2^j$. Notice that, to every instance of Algorithm~\ref{alg:NS_UCSPS}, a standard doubling trick and a stabilization procedure is applied (see Algorithm~\ref{alg: stabilize} for additional details). This modification to Algorithm~\ref{alg:NS_UCSPS} is necessary to guarantee that each instance $j$ attains a regret and positive cumulative constraints violation which smoothly degrade with $\nu_{T,j}=\nicefrac{1}{\min_{t\in[T]}w_{t,j}}$ and linearly in $C$, when employed by the master algorithm. 
The algorithm assigns weights defining a probability distribution to instances $\texttt{Alg}^j$, with $w_{t,j} \in [0,1]$ denoting the weight of instance $\texttt{Alg}^j$ at episode $t \in [T]$.
We denote by $w_t \in\Delta_M$ the weight vector at episode $t $, with $\Delta_M$ being the $M$-dimensional simplex.
At the first episode, all the weights $w_{1,j}$ are initialized to the value $\nicefrac{1}{M}$ (Line~\ref{alg3: line2}).
Then, at each episode $t\in[T]$, the algorithm samples an instance index $j_t \in [M]$ according to the probability distribution defined by the weight vector $w_t$ (Line~\ref{alg3: line4}), and it employs the policy $\pi_t^{j_t}$ prescribed by $\texttt{Alg}^{j_t}$ (Line~\ref{alg3: line5}).
The algorithm observes the feedback from the interaction described in Algorithm~\ref{alg: Learner-Environment Interaction} and it sends such a feedback to instance $\texttt{Alg}^{j_t}$ (Line~\ref{alg3: line6}).
Then, at Line~\ref{alg3: line7}, the algorithm builds an \emph{optimistic} loss estimator to be fed into each instance $\texttt{Alg}^j$.
In particular, at episode $t \in [T]$ and for every $j\in[M]$, the optimistic loss estimator is defined as:
\begin{equation}\label{eq:loss_estimator}
	\ell_{t,j} \coloneqq \frac{\mathbb{I}(j_t=j)}{w_{t,j}+ \gamma}\Bigg( L - \sum_{k \in [0 \ldots L-1]}r_t(x_k^t,a_k^t)+ \Lambda\sum_{i \in [m]}\left[ \Big( \widehat{G}_{t}^{j} \Big)^\top \, \widehat{q}_t^j - \alpha\right]_i^+\Bigg),
\end{equation}
where $\gamma$ is a suitably-defined implicit exploration factor, $(x_k^t,a_k^t)$ is the state-action pair visited at layer $k$ during episode $t $, $\Lambda$ is a suitably-defined upper bound on the optimal values of Lagrangian multipliers,\footnote{Notice that, in the definition of $\Lambda$, $\rho$ is the feasibility parameter of Program~\eqref{lp:offline_opt} for the reward vector $\overline{r}$, the constraint cost matrix $\overline{G}$, and the threshold vector $\alpha$. In order to compute $\Lambda$, Algorithm~\ref{alg:unknown_c} needs knowledge of $\rho$. Nevertheless, our results continue to hold even if Algorithm~\ref{alg:unknown_c} is only given access to a lower bound on $\rho$.}  $\widehat{G}_{t}^j$ is the matrix of empirical constraint costs built by the instance $\texttt{Alg}^j$ of Algorithm~\ref{alg:NS_UCSPS} at episode $t$, while $\widehat q_t^j$ is the occupancy measure computed by instance $\texttt{Alg}^j$ of Algorithm~\ref{alg:NS_UCSPS} at $t$.
Finally, the algorithm updates the weight vector according to an FTRL update on a cut decision space with a suitable log-barrier regularization and a bonus term $b_t$ defined as:
\begin{equation}
	\label{eq: bonus}
b_{t,j}\coloneq  \left(\left(m\Lambda\beta_5+\beta_2\right) + \left(\sqrt{\beta_1} + m\Lambda\sqrt{\beta_4}\right)\sqrt{T}\right)(\nu_{t,j}-\nu_{t-1,j}),,
\end{equation}
where $\nu_{t,j}=\max_{\tau \le t} \frac{1}{w_{\tau,j}}$ and the parameters $\beta$ are linked to the performance of Algorithm \ref{alg:NS_UCSPS} (see Line~\ref{alg3: line7_bis} and Section~\ref{app: stability param} for additional details).
See Line~\ref{alg3: line8} for the complete definition of the update. The bonus term purpose is to balance out the term related to the difference between the performance of Algorithm \ref{alg:NS_UCSPS} updated at each episode and the performance of its stabilized version, which works under the condition imposed by the master algorithm.

\subsection{Theoretical guarantees of \texttt{Lag-FTRL}}

Next, we prove the theoretical guarantees attained by Algorithm~\ref{alg:unknown_c} (see Appendix~\ref{app:unknown_C} for complete proofs of the theorems and associated lemmas).
As a first preliminary step, we extend the well-known strong duality result for CMDPs~\citep{Altman1999ConstrainedMD} to the case of bounded Lagrangian multipliers.
\begin{restatable}{lemma}{strongDuality}\label{lemma:auxiliary dual}
	Given a CMDP with a transition function $P$, for every reward vector $r \in [0,1]^{|X \times A|}$, constraint cost matrix $G \in [0,1]^{|X \times A| \times m}$, and threshold vector $\alpha \in [0,L]^m$, if Program~\eqref{lp:offline_opt} satisfies Slater's condition (Condition~\ref{cond:slater}), then the following holds:
	%
	%
	\begin{align*}
		\min_{\lVert \lambda \rVert_1\in [0,\nicefrac{L}{\rho}]} \max_{  q \in \Delta(P)} r^\top q - \sum_{i \in [m]}\lambda_i\left[ {G}^\top q-\alpha \right]_i & =   \max_{  q \in \Delta(P)} \min_{\lVert \lambda \rVert_1\in [0,\nicefrac{L}{\rho}]}r^\top q - \sum_{i \in [m]}\lambda_i \left[ {G}^\top q-\alpha \right]_i  \\&= \textnormal{OPT}_{r,G,\alpha},
	\end{align*}
	where $\lambda\in\mathbb{R}^m_{\geq0}$ is a vector of Lagrangian multipliers and $\rho$ is the feasibility parameter of Program~\eqref{lp:offline_opt}.
\end{restatable}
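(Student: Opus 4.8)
The plan is to reduce the statement to the classical strong-duality result for CMDPs~\citep{Altman1999ConstrainedMD} (equivalently, LP duality applied to Program~\eqref{lp:offline_opt}), for which the only genuinely new ingredient is an \emph{a priori} bound of $\nicefrac{L}{\rho}$ on the norm of the optimal Lagrangian multipliers. Write $\mathcal{L}(q,\lambda) \coloneqq r^\top q - \sum_{i\in[m]}\lambda_i [G^\top q - \alpha]_i$ for the Lagrangian; it is bilinear, hence continuous, concave (in fact affine) in $q$ and convex (affine) in $\lambda$. Since Slater's condition (Condition~\ref{cond:slater}) makes Program~\eqref{lp:offline_opt} strictly feasible and bounded, strong duality gives $\textnormal{OPT}_{r,G,\alpha} = \min_{\lambda \geq 0} \max_{q\in\Delta(P)} \mathcal{L}(q,\lambda)$. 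Because $\lambda \mapsto \max_{q}\mathcal{L}(q,\lambda)$ is continuous and, as shown next, coercive, while $\Delta(P)$ is compact, this outer minimum is attained at some $\lambda^\star \geq 0$.

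The core step I would carry out is to bound $\lVert\lambda^\star\rVert_1$. By definition of the feasibility parameter $\rho$ and compactness of $\Delta(P)$ (the map $q \mapsto \min_i[\alpha - G^\top q]_i$ being continuous), the supremum defining $\rho$ is attained at some $q^\circ \in \Delta(P)$ with $[\alpha - G^\top q^\circ]_i \geq \rho$ for every $i \in [m]$, and Slater's condition ensures $\rho > 0$. Evaluating the dual value at $\lambda^\star$ and plugging in $q^\circ$ gives
\begin{align*}
\textnormal{OPT}_{r,G,\alpha} = \max_{q}\mathcal{L}(q,\lambda^\star) \geq \mathcal{L}(q^\circ,\lambda^\star) = r^\top q^\circ + \sum_{i\in[m]}\lambda^\star_i\,[\alpha - G^\top q^\circ]_i \geq \rho\,\lVert\lambda^\star\rVert_1,
\end{align*}
using $r^\top q^\circ \geq 0$ and $\lambda^\star \geq 0$. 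The same inequality applied to an arbitrary $\lambda \geq 0$ shows $\max_q \mathcal{L}(q,\lambda) \geq \rho\lVert\lambda\rVert_1$, which is the coercivity invoked above. Finally, every valid occupancy measure satisfies $\sum_{x,a} q(x,a) = L$ by Lemma~\ref{lem:occupancy_rosenberg}, and $r \in [0,1]^{|X\times A|}$, so $\textnormal{OPT}_{r,G,\alpha} \leq L$; combining, $\lVert\lambda^\star\rVert_1 \leq \textnormal{OPT}_{r,G,\alpha}/\rho \leq L/\rho$. Hence $\lambda^\star$ lies in the restricted set $\{\lambda \geq 0 : \lVert\lambda\rVert_1 \leq L/\rho\}$.

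With the bound in hand, I would replace the unconstrained dual domain by the compact one at no cost: restricting the minimization to a subset can only raise the value, so $\min_{\lVert\lambda\rVert_1 \leq L/\rho}\max_q \mathcal{L} \geq \textnormal{OPT}_{r,G,\alpha}$, while feasibility of $\lambda^\star$ for the restricted problem gives $\min_{\lVert\lambda\rVert_1 \leq L/\rho}\max_q \mathcal{L} \leq \max_q \mathcal{L}(q,\lambda^\star) = \textnormal{OPT}_{r,G,\alpha}$. This yields the first claimed equality. For the second, I would swap the order of optimization: both $\Delta(P)$ and $\{\lambda \geq 0 : \lVert\lambda\rVert_1 \leq L/\rho\}$ are nonempty, compact, and convex, and $\mathcal{L}$ is continuous and concave-convex, so Sion's minimax theorem gives $\min_{\lVert\lambda\rVert_1 \leq L/\rho}\max_q \mathcal{L} = \max_q \min_{\lVert\lambda\rVert_1 \leq L/\rho}\mathcal{L}$, completing the chain of equalities.

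The main obstacle is precisely the multiplier bound: it is what compactifies the dual domain and thereby legitimizes the minimax swap, and everything else reduces to standard duality and a routine application of Sion's theorem. The bound itself rests on two points that I would make sure to justify carefully, namely the attainment of a Slater point with margin exactly $\rho$ (so that the dual inequality can be lower-bounded by $\rho\lVert\lambda^\star\rVert_1$) and the crude but sufficient estimate $\textnormal{OPT}_{r,G,\alpha} \leq L$ coming from the total occupancy mass equalling $L$.
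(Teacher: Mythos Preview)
Your proof is correct and follows essentially the same approach as the paper: both establish the key multiplier bound $\lVert\lambda^\star\rVert_1 \le L/\rho$ by evaluating the dual function at a Slater point $q^\circ$ (giving $\max_q \mathcal{L}(q,\lambda) \ge \rho\lVert\lambda\rVert_1$) and combining this with $\textnormal{OPT}_{r,G,\alpha} \le L$. The only minor difference is that you invoke Sion's minimax theorem for the final $\max$--$\min$ swap, whereas the paper obtains the same equality by a sandwich argument, bounding $\max_q \min_{\lVert\lambda\rVert_1 \le L/\rho}\mathcal{L}$ between two copies of $\textnormal{OPT}_{r,G,\alpha}$ via the weak max--min inequality and Altman's strong duality for the unrestricted dual.
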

Intuitively, Lemma~\ref{lemma:auxiliary dual} states that, under Slater's condition, strong duality continues to hold even when restricting the set of Lagrangian multipliers to the $\lambda\in\mathbb{R}^m_{\geq0}$ having $\lVert \lambda \rVert_1$ bounded by $\nicefrac{L}{\rho}$. 
Furthermore, we extend the result in Lemma~\ref{lemma:auxiliary dual} to the case of a Lagrangian function suitably-modified to encompass \emph{positive} violations.
We call it \emph{positive Lagrangian} of Program~\eqref{lp:offline_opt}, defined as follows.
\begin{definition}[Positive Lagrangian]
	Given a CMDP with a transition function $P$, for every reward vector $r \in [0,1]^{|X \times A|}$, constraint cost matrix $G \in [0,1]^{|X \times A| \times m}$, and threshold vector $\alpha \in [0,L]^m$, the \emph{positive Lagrangian} of Program~\eqref{lp:offline_opt} is defined as a function $\mathcal{L} : \mathbb{R}_+ \times \Delta(P) \to \mathbb{R}$ such that it holds $\mathcal{L}\left(\beta,q\right) \coloneqq r^\top q - \beta \sum_{i \in [m]}\left[{G}^\top q-\alpha\right]_i^+$ for every $\beta \geq 0$ and $q\in \Delta(P).$
	%
\end{definition}
The positive Lagrangian is related to the Lagrangian of a variation of Program~\eqref{lp:offline_opt} in which the $[\cdot]^+$ operator is applied to the constraints.
Notice that such a problem does \emph{not} admit Slater's condition, since, by definition of $[\cdot]^+$, it does \emph{not} exist an occupancy measure $q^\circ$ such that $\left[{G}^\top q^\circ-\alpha\right]_i^+ < 0$ for every $i\in[m]$.
Nevertheless, we show that a kind of strong duality result still holds for $\mathcal{L}(\nicefrac{L}{\rho},q)$, when Slater's condition is met by Program~\eqref{lp:offline_opt}. This is done in the following result.
%
%
%
\begin{restatable}{theorem}{positiveStrongDuality}\label{theo: strong duality}
		Given a CMDP with a transition function $P$, for every reward vector $r \in [0,1]^{|X \times A|}$, constraint cost matrix $G \in [0,1]^{|X \times A| \times m}$, and threshold vector $\alpha \in [0,L]^m$, if Program~\eqref{lp:offline_opt} satisfies Slater's condition (Condition~\ref{cond:slater}), then the following holds:
		%
		%
		\begin{align*}
			\max_{q \in \Delta(P)}\mathcal{L}(\nicefrac L \rho,q) = \max_{q \in \Delta(P)} r^\top q - \frac{L}{\rho}\sum_{i \in [m]}\left[{G}^\top q-\alpha \right]_i^+  = \textnormal{OPT}_{r,G,\alpha},
		\end{align*}
		where $\rho$ is the feasibility parameter of Program~\eqref{lp:offline_opt}.
\end{restatable}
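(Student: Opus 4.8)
The plan is to establish the two inequalities $\max_{q \in \Delta(P)}\mathcal{L}(\nicefrac{L}{\rho},q) \ge \textnormal{OPT}_{r,G,\alpha}$ and $\max_{q \in \Delta(P)}\mathcal{L}(\nicefrac{L}{\rho},q) \le \textnormal{OPT}_{r,G,\alpha}$ separately, reducing the second (harder) one to the bounded strong-duality result of Lemma~\ref{lemma:auxiliary dual}. The middle equality in the statement is just the definition of $\mathcal{L}(\nicefrac{L}{\rho},q)$, so only the outer equality with $\textnormal{OPT}_{r,G,\alpha}$ requires work.

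For the easy lower bound, I would let $q^*$ be an optimal solution of Program~\eqref{lp:offline_opt}, so that $G^\top q^* \le \alpha$ and $r^\top q^* = \textnormal{OPT}_{r,G,\alpha}$. Feasibility forces $\left[G^\top q^* - \alpha\right]_i^+ = 0$ for every $i \in [m]$, so the penalty term of the positive Lagrangian vanishes at $q^*$, giving $\mathcal{L}(\nicefrac{L}{\rho}, q^*) = r^\top q^* = \textnormal{OPT}_{r,G,\alpha}$. Taking the maximum over $\Delta(P)$ yields $\max_{q}\mathcal{L}(\nicefrac{L}{\rho},q) \ge \textnormal{OPT}_{r,G,\alpha}$.

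The upper bound is the crux, and it is where I would connect the single-multiplier positive Lagrangian to the vector-multiplier Lagrangian of Lemma~\ref{lemma:auxiliary dual}. The key observation is a pointwise bound: for each fixed $q \in \Delta(P)$, writing $v \coloneqq G^\top q - \alpha$, one has $\sum_{i \in [m]} [v]_i^+ \ge \max_{i \in [m]} [v]_i^+$ since every summand is non-negative. Hence
\[
\mathcal{L}(\nicefrac{L}{\rho}, q) = r^\top q - \frac{L}{\rho}\sum_{i \in [m]}[v]_i^+ \le r^\top q - \frac{L}{\rho}\max_{i \in [m]}[v]_i^+.
\]
I would then recognize the right-hand side as the exact value of the inner minimization of the standard Lagrangian over multipliers of bounded $\ell_1$-norm: maximizing $\lambda^\top v$ over $\{\lambda \ge 0 : \lVert \lambda \rVert_1 \le \nicefrac{L}{\rho}\}$ is achieved by placing all mass $\nicefrac{L}{\rho}$ on a largest positive component of $v$ (and by $\lambda = 0$ when $v \le 0$), so that
\[
r^\top q - \frac{L}{\rho}\max_{i \in [m]}[v]_i^+ = \min_{\lVert \lambda \rVert_1 \in [0,\nicefrac{L}{\rho}]} \left( r^\top q - \sum_{i \in [m]}\lambda_i\left[G^\top q - \alpha\right]_i\right).
\]

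Chaining these, $\mathcal{L}(\nicefrac{L}{\rho}, q)$ is bounded above by the inner min for every $q$; taking the maximum over $q \in \Delta(P)$ on both sides and invoking the max-min expression of Lemma~\ref{lemma:auxiliary dual} (valid because Slater's Condition~\ref{cond:slater} holds) gives $\max_{q}\mathcal{L}(\nicefrac{L}{\rho},q) \le \max_{q}\min_{\lambda}(\cdots) = \textnormal{OPT}_{r,G,\alpha}$, which together with the easy direction closes the proof. The main obstacle I anticipate is making this pointwise reduction rigorous: carefully verifying that replacing the sum of positive parts by the max loosens the bound in the correct direction, and computing the closed form of the bounded inner minimization so that it coincides \emph{exactly} with the Lagrangian dual object of Lemma~\ref{lemma:auxiliary dual}. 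Once this correspondence is in place, the remaining steps are routine.
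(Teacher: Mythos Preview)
Your proposal is correct and follows essentially the same approach as the paper: the lower bound via a feasible optimizer $q^*$ is identical, and the upper bound is obtained by pointwise comparing $\mathcal{L}(\nicefrac{L}{\rho},q)$ with the bounded-multiplier Lagrangian and then invoking Lemma~\ref{lemma:auxiliary dual}. Your route is in fact slightly more direct than the paper's: you compute the closed form $\min_{\lVert\lambda\rVert_1\le L/\rho}\big(r^\top q-\lambda^\top(G^\top q-\alpha)\big)=r^\top q-\tfrac{L}{\rho}\max_i[v]_i^+$ and then appeal to the $\max_q\min_\lambda$ side of Lemma~\ref{lemma:auxiliary dual}, whereas the paper first introduces $\lambda$ while keeping the positive parts, passes through a max--min\,$\le$\,min--max step, and only then drops the $[\cdot]^+$ to land on the $\min_\lambda\max_q$ side of the same lemma; both chains are valid and yield the same conclusion.
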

Theorem~\ref{theo: strong duality} intuitively shows that a $L/\rho$ multiplicative factor on the positive constraint violation is enough to compensate the large rewards that non-feasible policies would attain when employed by the learner. 
This result is crucial since, without properly defining the Lagrangian function optimized by Algorithm~\ref{alg:unknown_c}, the FTRL optimization procedure would choose instances with both large rewards and large constraint violation, thus preventing the violation bound from being sublinear.

By means of Theorem~\ref{theo: strong duality}, it is possible to provide the following result.
\begin{restatable}{theorem}{ViolationUnknown}
	\label{theo:ViolationUnknown}
	If Program~\eqref{lp:offline_opt} instantiated with $\overline{r}$, $\overline{G}$ and $\alpha$ satisfies Slater's condition (Condition~\ref{cond:slater}), then, given any $\delta \in (0,1)$, Algorithm \ref{alg:unknown_c} attains the following bound with probability at least $1-34\delta$:
	\begin{align*}
		V_T=\mathcal{O}\bigg(m^2L^2|X|&\sqrt{|A|T\log\left(\nicefrac{mT|X||A|}{\delta}\right)}\log(T)^2 \\&+ m^2L|X|^2|A|^2\log(T)^3 \log\left(\nicefrac{\log(T)}{\delta}\right)+ m^2L\log(T)^2|X||A|C\bigg).
	\end{align*}
\end{restatable}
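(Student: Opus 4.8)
The plan is to control $V_T$ by passing through the \emph{positive Lagrangian} of Theorem~\ref{theo: strong duality}: the loss that \texttt{Lag-FTRL} feeds to its instances in Equation~\eqref{eq:loss_estimator} is (an optimistic, importance-weighted estimate of) a per-episode positive Lagrangian, so a bound on the master's cumulative loss against the ``correct'' instance translates into a bound on the played violation. I would first reduce $V_T$ to the internal penalty appearing in the loss. Since $V_T=\max_{i}\sum_{t}[\mathbb{E}[G_t]^\top q_t-\alpha]_i^+\le \sum_{t}\sum_{i}[\mathbb{E}[G_t]^\top q_t-\alpha]_i^+=:U^{\mathrm{true}}$, and since the upper confidence bound $\mathbb{E}[G_t]\le \widehat{G}^{j_t}_t+\xi_t$ holds with high probability together with the transition-set concentration already used for \texttt{NS-SOPS}, one gets $U^{\mathrm{true}}\le U^{\mathrm{int}}+\tilde{\mathcal{O}}\big(mL|X|\sqrt{|A|T}+m|X||A|C\big)$, where $U^{\mathrm{int}}:=\sum_{t}\sum_{i}[(\widehat{G}^{j_t}_t)^\top\widehat q^{j_t}_t-\alpha]_i^+$ is exactly the quantity penalized inside Equation~\eqref{eq:loss_estimator}.

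Next I would run the corralling analysis on the master. Writing $c_{t,j}:=L-\mathbb{E}[r_t]^\top q^j_t+\Lambda\sum_i[(\widehat{G}^j_t)^\top\widehat q^j_t-\alpha]_i^+$, the estimator $\ell_{t,j}$ is an optimistic estimate of $c_{t,j}$, so a high-probability concentration of the importance-weighted losses (via the implicit-exploration term $\gamma$, in the style of~\citet{neu}) combined with the log-barrier FTRL regret and the bonus $b_{t,j}$ of Equation~\eqref{eq: bonus} yields $\sum_{t}c_{t,j_t}\le \sum_{t}c_{t,j^*}+\mathrm{Reg}_{\mathrm{master}}$ for any fixed index $j^*$. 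I would then pick $j^*=\lceil\log_2 C\rceil$, so that $2^{j^*}\in[C,2C]$ and instance $\texttt{Alg}^{j^*}$ runs \texttt{NS-SOPS} with a valid (at most doubled) overestimate of the corruption; its reward regret and its internal violation are then controlled by Theorems~\ref{theo:regretknownCG} and~\ref{theo: violationbound}, giving $\sum_{t}c_{t,j^*}\le T\big(L-\mathrm{OPT}_{\overline r,\overline G,\alpha}\big)+\tilde{\mathcal{O}}(\cdots)$. Expanding the left-hand side $\sum_t c_{t,j_t}$ and rearranging produces the central inequality $\Lambda\,U^{\mathrm{int}}\le \sum_{t}\mathbb{E}[r_t]^\top q_t-T\,\mathrm{OPT}_{\overline r,\overline G,\alpha}+\tilde{\mathcal{O}}(\cdots)+\mathrm{Reg}_{\mathrm{master}}$.

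To close the loop I would upper bound the played reward using Theorem~\ref{theo: strong duality} (or equivalently the bounded-multiplier duality of Lemma~\ref{lemma:auxiliary dual}): for every $q_t$ one has $\overline r^\top q_t\le \mathrm{OPT}_{\overline r,\overline G,\alpha}+\tfrac{L}{\rho}\sum_i[\overline G^\top q_t-\alpha]_i^+$, and summing over $t$ while converting $\overline r,\overline G$ into the per-episode means $\mathbb{E}[r_t],\mathbb{E}[G_t]$ costs only corruption terms (by Lipschitzness of $r\mapsto r^\top q$ and of $[\,\cdot\,]^+$, using $\sum_t\|\mathbb{E}[r_t]-\overline r\|_1\le 2C$ and $\sum_t\max_i\|\mathbb{E}[g_{t,i}]-\overline g_i\|_1\le 2C$). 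This gives $\sum_t\mathbb{E}[r_t]^\top q_t\le T\,\mathrm{OPT}_{\overline r,\overline G,\alpha}+\tfrac{L}{\rho}U^{\mathrm{true}}+\mathcal{O}(\Lambda m C)$. Substituting into the central inequality and then using $U^{\mathrm{true}}\le U^{\mathrm{int}}+\tilde{\mathcal{O}}(\cdots)$ isolates $\big(\Lambda-\tfrac{L}{\rho}\big)U^{\mathrm{int}}\le \tilde{\mathcal{O}}(\cdots)+\mathrm{Reg}_{\mathrm{master}}$. Here the choice $\Lambda=\tfrac{Lm+1}{\rho}$ is exactly what makes $\Lambda-\tfrac{L}{\rho}=\tfrac{L(m-1)+1}{\rho}\ge\tfrac1\rho>0$, so the coefficient is strictly positive even for a single constraint; dividing and substituting back $V_T\le U^{\mathrm{true}}$ yields the stated bound, with the three terms coming from the $\sqrt T$ parts of the instance/master regrets, the $\gamma$- and log-barrier-induced polylog overhead (scaling with $M=\lceil\log_2T\rceil$), and the linear-in-$C$ corruption contributions.

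The main obstacle is the master-level step: making $\sum_{t}c_{t,j_t}\le \sum_{t}c_{t,j^*}+\mathrm{Reg}_{\mathrm{master}}$ rigorous under \emph{partial} feedback. This requires (i) a high-probability, one-sided concentration of the importance-weighted Lagrangian estimators of Equation~\eqref{eq:loss_estimator}, where the implicit exploration $\gamma$ bounds the variance; and (ii) the delicate stability argument by which the bonus $b_{t,j}$ of Equation~\eqref{eq: bonus} and the log-barrier regularizer jointly cancel the degradation of each stabilized \texttt{NS-SOPS} instance when it is sampled non-uniformly, so that the good instance's $\nu_{T,j^*}$-dependent regret is absorbed rather than blowing up to linear order (see Section~\ref{app: stability}). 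By comparison, the Lagrangian manipulations in the first and third steps are essentially routine once Theorem~\ref{theo: strong duality} and Lemma~\ref{lemma:auxiliary dual} are available.
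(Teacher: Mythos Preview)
Your proposal follows essentially the same route as the paper: derive a master-level inequality of the form ``regret $+$ $\Lambda\cdot$(internal violation) $\le \tilde{\mathcal O}(\cdots)$'' from the FTRL/corralling analysis against the correctly-calibrated instance $j^*$, then invoke Theorem~\ref{theo: strong duality} to lower bound the regret by $-\tfrac{L}{\rho}\cdot$(true violation) $-\mathcal O(C)$, and subtract to isolate the violation. The paper packages the first step as the bound on $R_T+\tfrac{Lm+1}{\rho}\widehat V_T$ (Equation labelled \texttt{eq: reg + Vhat} in the proof of Theorem~\ref{Theo:RegretUnknown}) and the second step exactly as you describe; it then closes with Lemma~\ref{lemma: widehat V} for the gap $V_T-\widehat V_T$, which is your $U^{\mathrm{true}}\!\le U^{\mathrm{int}}+\tilde{\mathcal O}(\cdots)$ step.

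Two small bookkeeping differences are worth noting. First, the paper works with the importance-weighted penalty $\widehat V_T$ of Equation~\eqref{def: VThat} rather than your unweighted $U^{\mathrm{int}}$; since the FTRL bound (Lemma~\ref{lemma:FTRL}) is stated for $w_t^\top\ell_t$, what naturally appears on the left is $\Lambda\widehat V_T$, not $\Lambda U^{\mathrm{int}}$, and the paper never needs to pass between them. Your sketch would go through once you replace $U^{\mathrm{int}}$ by $\widehat V_T$ throughout (your step $U^{\mathrm{true}}\le U^{\mathrm{int}}+\tilde{\mathcal O}(\cdots)$ becomes precisely Lemma~\ref{lemma: widehat V}). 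Second, the paper converts the $\sum_i[\cdot]^+$ coming out of Theorem~\ref{theo: strong duality} into $m\cdot\max_i$, obtaining residual coefficient $\Lambda-\tfrac{Lm}{\rho}=\tfrac{1}{\rho}$, whereas you keep the sum and get $\Lambda-\tfrac{L}{\rho}=\tfrac{L(m-1)+1}{\rho}$; both are $\ge\tfrac{1}{\rho}$ and lead to the same order. You correctly identify that the genuinely nontrivial part is the master-level concentration plus the $\nu_{T,j^*}$/bonus cancellation handled in Lemmas~\ref{lemma: rhat w}--\ref{lemma: widehat V i*} and Appendix~\ref{app: stability}.
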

Intuitively, to prove Theorem~\ref{theo:ViolationUnknown}, it is necessary to bound the negative regret attained by the algorithm, \emph{i.e.}, how better Algorithm~\ref{alg:unknown_c} can perform in terms of rewards with respect to an optimal occupancy in hindsight $q^*$. Notice that this is equivalent to showing that the FTRL procedure cannot gain more than $\textnormal{OPT}_{\overline r,\overline G,\alpha}$ by playing policies that are \emph{not} feasible, or, equivalently, by choosing instances $\texttt{Alg}^j$ with a large corruption guess, which, by definition of the confidence sets employed by Algorithm~\ref{alg:NS_UCSPS}, may play non-feasible policies attaining large rewards.
This is done by employing Theorem~\ref{theo: strong duality}, which shows that the positive Lagrangian does \emph{not} allow the algorithm to achieve too large rewards with respect to $q^*$.
Thus, the violations are still upper bounded by $\tilde{\mathcal{O}} (\sqrt{T}+C)$.

Finally, we prove the regret bound attained by Algorithm~\ref{alg:unknown_c}.
\begin{restatable}{theorem}{RegretUnknown}\label{Theo:RegretUnknown}
	If Program~\eqref{lp:offline_opt} instantiated with $\overline{r}$, $\overline{G}$ and $\alpha$ satisfies Slater's condition (Condition~\ref{cond:slater}), then, given any $\delta \in (0,1)$, Algorithm \ref{alg:unknown_c} attains the following bound with probability at least $1-30\delta$:
	\begin{align*}
		R_T  = \mathcal{O}\bigg(m^2L^2|X|&\sqrt{|A|T\log\left(\nicefrac{mT|X||A|}{\delta}\right)}\log(T)^2 \\&+ m^2L|X|^2|A|^2\log(T)^3 \log\left(\nicefrac{\log(T)}{\delta}\right)+ m^2L\log(T)^2|X||A|C\bigg).
	\end{align*}
\end{restatable}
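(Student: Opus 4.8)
The plan is to mirror the structure of the violation analysis (Theorem~\ref{theo:ViolationUnknown}), but now to control the \emph{positive} regret rather than the overperformance. The backbone is to reduce the reward regret $R_T$ to a regret measured through the positive Lagrangian, and then to bound the latter by combining the single-instance guarantee of Theorem~\ref{theo:regretknownCG} for a well-chosen instance with the FTRL guarantee of the master.

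First I would reduce $R_T$ to a Lagrangian regret. Set $\Lambda=(Lm+1)/\rho$, the multiplier used in Equation~\eqref{eq:loss_estimator}, and note $\Lambda\ge L/\rho$. Since $q^*$ is feasible for Program~\eqref{lp:offline_opt} instantiated with $\overline r,\overline G,\alpha$, the penalty vanishes at $q^*$, so $\mathcal{L}(\Lambda,q^*)=\overline r^\top q^*=\textnormal{OPT}_{\overline r,\overline G,\alpha}$; moreover, because raising the multiplier above $L/\rho$ can only decrease $\mathcal{L}(\cdot,q)$ pointwise while $\mathcal{L}(\Lambda,q^*)$ already equals the optimum, Theorem~\ref{theo: strong duality} yields $\max_{q\in\Delta(P)}\mathcal{L}(\Lambda,q)=\textnormal{OPT}_{\overline r,\overline G,\alpha}$. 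Writing $q_t$ for the occupancy of the played policy $\pi_t^{j_t}$ and dropping the nonnegative penalty term, I obtain $\sum_t \mathbb{E}[r_t]^\top q_t\ge \sum_t \mathcal{L}(\Lambda,q_t)$, whence $R_T=T\,\textnormal{OPT}_{\overline r,\overline G,\alpha}-\sum_t\mathbb{E}[r_t]^\top q_t\le \sum_t\big(\mathcal{L}(\Lambda,q^*)-\mathcal{L}(\Lambda,q_t)\big)$. This is exactly the quantity the master is built to minimize, because the per-instance loss of Equation~\eqref{eq:loss_estimator} is, up to the importance weight $\mathbb{I}(j_t=j)/(w_{t,j}+\gamma)$ and the Azuma--Hoeffding/confidence deviations between $\widehat G_t^{j},\widehat q_t^{j}$ and the means $\mathbb{E}[G_t],q_t^{j}$, an unbiased estimate of $L-\mathcal{L}(\Lambda,q_t^{j})$.

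Next I would split the Lagrangian regret around a ``correctly-tuned'' instance. Since $C\le T$ and the guesses range over $\{2^j\}_{j\in[M]}$ with $M=\lceil\log_2 T\rceil$, there is an index $j^\star$ with $C\le 2^{j^\star}\le\max\{2,2C\}$, i.e. $\texttt{Alg}^{j^\star}$ runs with an overestimate of the true corruption. I would write
\[
\sum_t\big(\mathcal{L}(\Lambda,q^*)-\mathcal{L}(\Lambda,q_t)\big)=\underbrace{\sum_t\big(\mathcal{L}(\Lambda,q^*)-\mathcal{L}(\Lambda,q_t^{j^\star})\big)}_{(\mathrm{A})}+\underbrace{\sum_t\big(\mathcal{L}(\Lambda,q_t^{j^\star})-\mathcal{L}(\Lambda,q_t)\big)}_{(\mathrm{B})}.
\]
Term $(\mathrm{A})$ is the Lagrangian regret of a single \texttt{NS-SOPS} instance running with the overestimate $2^{j^\star}=\mathcal{O}(C+1)$: bounding the reward part by Theorem~\ref{theo:regretknownCG} and the penalized positive-violation part by the overestimate variant of Theorem~\ref{theo: violationbound}, an \emph{always-updated} instance would contribute $\tilde{\mathcal{O}}(\sqrt T+C)$. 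However, instance $j^\star$ is refreshed only on the rounds the master selects it, so its guarantees degrade by the stabilization factor $\nu_{T,j^\star}=1/\min_t w_{t,j^\star}$, giving $(\mathrm{A})=\tilde{\mathcal{O}}\big(\nu_{T,j^\star}(\sqrt T+C)\big)$.

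Term $(\mathrm{B})$ is the master's FTRL regret over the $M$ instances. Here I would invoke the log-barrier FTRL analysis with the implicit-exploration estimators of~\citet{neu} and the corralling construction of~\citet{corralling}: the cut simplex $\{w\in\Delta_M:w_j\ge 1/T\}$ with the log-barrier regularizer of strength $1/\eta$ bounds the leading FTRL term by $\tilde{\mathcal{O}}(\eta\cdot\text{stability}+\eta^{-1}\ln M+\gamma T)$, while the bonus $b_{t,j}$ of Equation~\eqref{eq: bonus}, which telescopes to $\sum_t b_{t,j^\star}=\mathcal{O}\big((\sqrt{\beta_1 T}+m\Lambda\sqrt{\beta_4 T}+m\Lambda\beta_5+\beta_2)\,\nu_{T,j^\star}\big)$, is calibrated to exactly the $\nu_{T,j^\star}$-dependence appearing in $(\mathrm{A})$. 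The crux of the whole argument---and the step I expect to be the main obstacle---is this cancellation: one must show that the positive $\nu_{T,j^\star}$-proportional cost of the stabilized instance in $(\mathrm{A})$ is dominated by the bonus credited inside $(\mathrm{B})$, so that the surviving $\nu$-dependent terms vanish and only the $\tilde{\mathcal{O}}(\sqrt T+C)$ contributions remain. This requires the $\beta$-parameters to be matched to the per-instance bounds of Theorems~\ref{theo: violationbound} and~\ref{theo:regretknownCG}, together with martingale concentration (for the importance-weighted estimators and for the gap between $\widehat G_t^{j},\widehat q_t^{j}$ and their means) on several high-probability events whose union yields the stated $1-30\delta$ confidence. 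Finally I would collect the bounds, absorbing the instance count $M=\mathcal{O}(\log T)$ and the extra $\log T$ factors from the doubling trick into the $\log(T)^2$ and $\log(T)^3$ terms, thereby recovering $R_T=\tilde{\mathcal{O}}(\sqrt T+C)$ in the claimed form.
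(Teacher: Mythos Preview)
Your outline is essentially the paper's proof, repackaged through the positive Lagrangian: the paper also splits around the correctly-tuned instance $j^\star$, bounds the reward-regret of $j^\star$ via Corollary~\ref{corollary: stable R} and the penalized-violation of $j^\star$ via Lemma~\ref{lemma: widehat V i*}, plugs both into the FTRL master guarantee (Lemma~\ref{lemma:FTRL}), lets the bonus $b_{t,j^\star}$ cancel the $\nu_{T,j^\star}$ terms, and finishes by observing $\widehat V_T\ge 0$ in the resulting inequality $R_T+\Lambda\widehat V_T\le\ldots$ (Equation~\eqref{eq: reg + Vhat}). Your (A)+(B) decomposition is exactly this argument with the reward and violation contributions of $j^\star$ bundled into a single Lagrangian-regret term.

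One point where you and the paper differ is worth flagging. You invoke Theorem~\ref{theo: strong duality} to reduce $R_T$ to a Lagrangian regret, but this is unnecessary and the paper explicitly notes so: the inequality $R_T\le\sum_t\big(\mathcal L(\Lambda,q^*)-\mathcal L(\Lambda,q_t)\big)$ (up to an $O(C)$ correction for $\mathbb E[r_t]$ versus $\overline r$, which you elide) follows simply from feasibility of $q^*$ (so the penalty vanishes there) and nonnegativity of the penalty at $q_t$. Strong duality is needed only for the \emph{violation} bound (Theorem~\ref{theo:ViolationUnknown}), where one must rule out that the master earns too much reward from infeasible instances. Your sentence ``$\sum_t\mathbb E[r_t]^\top q_t\ge\sum_t\mathcal L(\Lambda,q_t)$'' is also slightly off since $\mathcal L$ uses $\overline r$; the missing $\sum_t(\mathbb E[r_t]-\overline r)^\top q_t$ is absorbed by the $O(C)$ corruption term, as in the paper's step~\eqref{un2}.
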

Bounding the regret attained by Algorithm~\ref{alg:unknown_c} requires different techniques with respect to bounding constraint violation. Indeed, strong duality is \emph{not} needed, since, even if $\Lambda$ is set to a too small value and thus the algorithm plays non-feasible policies, then the regret would still be sublinear. 
The regret bound is strongly related to the optimal value of the problem associated with the positive Lagrangian, which, by definition of $[\cdot]^+$ cannot perform worse than the optimum of Program~\eqref{lp:offline_opt}, in terms of rewards gained.
Thus, by letting $j^*$ be the index of the instance associated with true corruption value $C$, proving Theorem~\ref{Theo:RegretUnknown} reduces to bounding the regret and the constraint violation of instance $\texttt{Alg}^{j^*}$, with the additional challenge of bounding the estimation error of the optimistic loss estimator. Finally, by means of the results for the \emph{known} $C$ case derived in Section~\ref{sec: known_c}, we are able to show that the regret is at most $\tilde{\mathcal{O}} (\sqrt{T}+C)$, which is the desired bound.

\bibliography{example_paper}

\begin{thebibliography}{38}
\providecommand{\natexlab}[1]{#1}
\providecommand{\url}[1]{\texttt{#1}}
\expandafter\ifx\csname urlstyle\endcsname\relax
  \providecommand{\doi}[1]{doi: #1}\else
  \providecommand{\doi}{doi: \begingroup \urlstyle{rm}\Url}\fi

\bibitem[Agarwal et~al.(2017)Agarwal, Luo, Neyshabur, and Schapire]{corralling}
Alekh Agarwal, Haipeng Luo, Behnam Neyshabur, and Robert~E Schapire.
\newblock Corralling a band of bandit algorithms.
\newblock In \emph{Conference on Learning Theory}, pages 12--38. PMLR, 2017.

\bibitem[Altman(1999)]{Altman1999ConstrainedMD}
E.~Altman.
\newblock \emph{Constrained Markov Decision Processes}.
\newblock Chapman and Hall, 1999.

\bibitem[Auer et~al.(2008)Auer, Jaksch, and Ortner]{Near_optimal_Regret_Bounds}
Peter Auer, Thomas Jaksch, and Ronald Ortner.
\newblock Near-optimal regret bounds for reinforcement learning.
\newblock In D.~Koller, D.~Schuurmans, Y.~Bengio, and L.~Bottou, editors,
  \emph{Advances in Neural Information Processing Systems}, volume~21. Curran
  Associates, Inc., 2008.
\newblock URL
  \url{https://proceedings.neurips.cc/paper/2008/file/e4a6222cdb5b34375400904f03d8e6a5-Paper.pdf}.

\bibitem[Azar et~al.(2017)Azar, Osband, and Munos]{Minimax_Regret}
Mohammad~Gheshlaghi Azar, Ian Osband, and R{\'e}mi Munos.
\newblock Minimax regret bounds for reinforcement learning.
\newblock In \emph{International Conference on Machine Learning}, pages
  263--272. PMLR, 2017.

\bibitem[Bacchiocchi et~al.(2023)Bacchiocchi, Stradi, Papini, Metelli, and
  Gatti]{bacchiocchi2023online}
Francesco Bacchiocchi, Francesco~Emanuele Stradi, Matteo Papini, Alberto~Maria
  Metelli, and Nicola Gatti.
\newblock Online adversarial mdps with off-policy feedback and known
  transitions.
\newblock In \emph{Sixteenth European Workshop on Reinforcement Learning},
  2023.

\bibitem[Bacchiocchi et~al.(2024)Bacchiocchi, Stradi, Castiglioni, Marchesi,
  and Gatti]{bacchiocchi2024markov}
Francesco Bacchiocchi, Francesco~Emanuele Stradi, Matteo Castiglioni, Alberto
  Marchesi, and Nicola Gatti.
\newblock Markov persuasion processes: Learning to persuade from scratch.
\newblock \emph{arXiv preprint arXiv:2402.03077}, 2024.

\bibitem[Bai et~al.(2020)Bai, Aggarwal, and Gattami]{bai2020provably}
Qinbo Bai, Vaneet Aggarwal, and Ather Gattami.
\newblock Provably efficient model-free algorithm for mdps with peak
  constraints.
\newblock \emph{arXiv preprint arXiv:2003.05555}, 2020.

\bibitem[Bai et~al.(2023)Bai, Aggarwal, and Gattami]{bai2023}
Qinbo Bai, Vaneet Aggarwal, and Ather Gattami.
\newblock Provably sample-efficient model-free algorithm for mdps with peak
  constraints.
\newblock \emph{Journal of Machine Learning Research}, 24\penalty0
  (60):\penalty0 1--25, 2023.

\bibitem[Castiglioni et~al.(2022{\natexlab{a}})Castiglioni, Celli, and
  Kroer]{castiglioni2022online}
Matteo Castiglioni, Andrea Celli, and Christian Kroer.
\newblock Online learning with knapsacks: the best of both worlds.
\newblock In Kamalika Chaudhuri, Stefanie Jegelka, Le~Song, Csaba Szepesvari,
  Gang Niu, and Sivan Sabato, editors, \emph{Proceedings of the 39th
  International Conference on Machine Learning}, volume 162 of
  \emph{Proceedings of Machine Learning Research}, pages 2767--2783. PMLR,
  17--23 Jul 2022{\natexlab{a}}.
\newblock URL \url{https://proceedings.mlr.press/v162/castiglioni22a.html}.

\bibitem[Castiglioni et~al.(2022{\natexlab{b}})Castiglioni, Celli, Marchesi,
  Romano, and Gatti]{Unifying_Framework}
Matteo Castiglioni, Andrea Celli, Alberto Marchesi, Giulia Romano, and Nicola
  Gatti.
\newblock A unifying framework for online optimization with long-term
  constraints.
\newblock \emph{Advances in Neural Information Processing Systems},
  35:\penalty0 33589--33602, 2022{\natexlab{b}}.

\bibitem[Cesa-Bianchi and Lugosi(2006)]{cesa2006prediction}
Nicolo Cesa-Bianchi and G{\'a}bor Lugosi.
\newblock \emph{Prediction, learning, and games}.
\newblock Cambridge university press, 2006.

\bibitem[Ding and Lavaei(2023)]{ding_non_stationary}
Yuhao Ding and Javad Lavaei.
\newblock Provably efficient primal-dual reinforcement learning for cmdps with
  non-stationary objectives and constraints.
\newblock In \emph{Proceedings of the AAAI Conference on Artificial
  Intelligence}, volume~37, pages 7396--7404, 2023.

\bibitem[Efroni et~al.(2020)Efroni, Mannor, and
  Pirotta]{Exploration_Exploitation}
Yonathan Efroni, Shie Mannor, and Matteo Pirotta.
\newblock Exploration-exploitation in constrained mdps, 2020.
\newblock URL \url{https://arxiv.org/abs/2003.02189}.

\bibitem[Even-Dar et~al.(2009)Even-Dar, Kakade, and Mansour]{even2009online}
Eyal Even-Dar, Sham~M Kakade, and Yishay Mansour.
\newblock Online markov decision processes.
\newblock \emph{Mathematics of Operations Research}, 34\penalty0 (3):\penalty0
  726--736, 2009.

\bibitem[Germano et~al.(2023)Germano, Stradi, Genalti, Castiglioni, Marchesi,
  and Gatti]{germano2023bestofbothworlds}
Jacopo Germano, Francesco~Emanuele Stradi, Gianmarco Genalti, Matteo
  Castiglioni, Alberto Marchesi, and Nicola Gatti.
\newblock A best-of-both-worlds algorithm for constrained mdps with long-term
  constraints.
\newblock \emph{arXiv preprint arXiv:2304.14326}, 2023.

\bibitem[He et~al.(2021)He, Chen, Wu, Pan, Tan, Yu, Xu, and Zhu]{he2021unified}
Yue He, Xiujun Chen, Di~Wu, Junwei Pan, Qing Tan, Chuan Yu, Jian Xu, and
  Xiaoqiang Zhu.
\newblock A unified solution to constrained bidding in online display
  advertising.
\newblock In \emph{Proceedings of the 27th ACM SIGKDD Conference on Knowledge
  Discovery \& Data Mining}, pages 2993--3001, 2021.

\bibitem[Isele et~al.(2018)Isele, Nakhaei, and Fujimura]{isele2018safe}
David Isele, Alireza Nakhaei, and Kikuo Fujimura.
\newblock Safe reinforcement learning on autonomous vehicles.
\newblock In \emph{2018 IEEE/RSJ International Conference on Intelligent Robots
  and Systems (IROS)}, pages 1--6. IEEE, 2018.

\bibitem[Jin et~al.(2020)Jin, Jin, Luo, Sra, and
  Yu]{JinLearningAdversarial2019}
Chi Jin, Tiancheng Jin, Haipeng Luo, Suvrit Sra, and Tiancheng Yu.
\newblock Learning adversarial {M}arkov decision processes with bandit feedback
  and unknown transition.
\newblock In Hal~Daumé III and Aarti Singh, editors, \emph{Proceedings of the
  37th International Conference on Machine Learning}, volume 119 of
  \emph{Proceedings of Machine Learning Research}, pages 4860--4869. PMLR,
  13--18 Jul 2020.
\newblock URL \url{https://proceedings.mlr.press/v119/jin20c.html}.

\bibitem[Jin et~al.(2024)Jin, Liu, Rouyer, Chang, Wei, and Luo]{jin2024no}
Tiancheng Jin, Junyan Liu, Chlo{\'e} Rouyer, William Chang, Chen-Yu Wei, and
  Haipeng Luo.
\newblock No-regret online reinforcement learning with adversarial losses and
  transitions.
\newblock \emph{Advances in Neural Information Processing Systems}, 36, 2024.

\bibitem[Liakopoulos et~al.(2019)Liakopoulos, Destounis, Paschos, Spyropoulos,
  and Mertikopoulos]{liakopoulos2019cautious}
Nikolaos Liakopoulos, Apostolos Destounis, Georgios Paschos, Thrasyvoulos
  Spyropoulos, and Panayotis Mertikopoulos.
\newblock Cautious regret minimization: Online optimization with long-term
  budget constraints.
\newblock In \emph{International Conference on Machine Learning}, pages
  3944--3952. PMLR, 2019.

\bibitem[Liu et~al.(2021)Liu, Zhou, Kalathil, Kumar, and Tian]{bounded}
Tao Liu, Ruida Zhou, Dileep Kalathil, Panganamala Kumar, and Chao Tian.
\newblock Learning policies with zero or bounded constraint violation for
  constrained mdps.
\newblock \emph{Advances in Neural Information Processing Systems},
  34:\penalty0 17183--17193, 2021.

\bibitem[Mannor et~al.(2009)Mannor, Tsitsiklis, and Yu]{Mannor}
Shie Mannor, John~N. Tsitsiklis, and Jia~Yuan Yu.
\newblock Online learning with sample path constraints.
\newblock \emph{Journal of Machine Learning Research}, 10\penalty0
  (20):\penalty0 569--590, 2009.
\newblock URL \url{http://jmlr.org/papers/v10/mannor09a.html}.

\bibitem[Maran et~al.(2024)Maran, Olivieri, Stradi, Urso, Gatti, and
  Restelli]{maran2024online}
Davide Maran, Pierriccardo Olivieri, Francesco~Emanuele Stradi, Giuseppe Urso,
  Nicola Gatti, and Marcello Restelli.
\newblock Online markov decision processes configuration with continuous
  decision space.
\newblock In \emph{Proceedings of the AAAI Conference on Artificial
  Intelligence}, volume~38, pages 14315--14322, 2024.

\bibitem[Neu(2015)]{neu}
Gergely Neu.
\newblock Explore no more: Improved high-probability regret bounds for
  non-stochastic bandits.
\newblock In C.~Cortes, N.~Lawrence, D.~Lee, M.~Sugiyama, and R.~Garnett,
  editors, \emph{Advances in Neural Information Processing Systems}, volume~28.
  Curran Associates, Inc., 2015.
\newblock URL
  \url{https://proceedings.neurips.cc/paper_files/paper/2015/file/e5a4d6bf330f23a8707bb0d6001dfbe8-Paper.pdf}.

\bibitem[Neu et~al.(2010)Neu, Antos, Gy{\"o}rgy, and
  Szepesv{\'a}ri]{neu2010online}
Gergely Neu, Andras Antos, Andr{\'a}s Gy{\"o}rgy, and Csaba Szepesv{\'a}ri.
\newblock Online markov decision processes under bandit feedback.
\newblock \emph{Advances in Neural Information Processing Systems}, 23, 2010.

\bibitem[Puterman(2014)]{puterman2014markov}
Martin~L Puterman.
\newblock \emph{Markov decision processes: discrete stochastic dynamic
  programming}.
\newblock John Wiley \& Sons, 2014.

\bibitem[Qiu et~al.(2020)Qiu, Wei, Yang, Ye, and
  Wang]{Upper_Confidence_Primal_Dual}
Shuang Qiu, Xiaohan Wei, Zhuoran Yang, Jieping Ye, and Zhaoran Wang.
\newblock Upper confidence primal-dual reinforcement learning for cmdp with
  adversarial loss.
\newblock In H.~Larochelle, M.~Ranzato, R.~Hadsell, M.F. Balcan, and H.~Lin,
  editors, \emph{Advances in Neural Information Processing Systems}, volume~33,
  pages 15277--15287. Curran Associates, Inc., 2020.
\newblock URL
  \url{https://proceedings.neurips.cc/paper/2020/file/ae95296e27d7f695f891cd26b4f37078-Paper.pdf}.

\bibitem[Rosenberg and Mansour(2019{\natexlab{a}})]{OnlineStochasticShortest}
Aviv Rosenberg and Yishay Mansour.
\newblock Online stochastic shortest path with bandit feedback and unknown
  transition function.
\newblock In H.~Wallach, H.~Larochelle, A.~Beygelzimer, F.~d~Alch\'{e}-Buc,
  E.~Fox, and R.~Garnett, editors, \emph{Advances in Neural Information
  Processing Systems}, volume~32. Curran Associates, Inc., 2019{\natexlab{a}}.
\newblock URL
  \url{https://proceedings.neurips.cc/paper/2019/file/a0872cc5b5ca4cc25076f3d868e1bdf8-Paper.pdf}.

\bibitem[Rosenberg and Mansour(2019{\natexlab{b}})]{rosenberg19a}
Aviv Rosenberg and Yishay Mansour.
\newblock Online convex optimization in adversarial {M}arkov decision
  processes.
\newblock In Kamalika Chaudhuri and Ruslan Salakhutdinov, editors,
  \emph{Proceedings of the 36th International Conference on Machine Learning},
  volume~97 of \emph{Proceedings of Machine Learning Research}, pages
  5478--5486. PMLR, 09--15 Jun 2019{\natexlab{b}}.
\newblock URL \url{https://proceedings.mlr.press/v97/rosenberg19a.html}.

\bibitem[Singh et~al.(2020)Singh, Halpern, Thain, Christakopoulou, Chi, Chen,
  and Beutel]{singh2020building}
Ashudeep Singh, Yoni Halpern, Nithum Thain, Konstantina Christakopoulou, E~Chi,
  Jilin Chen, and Alex Beutel.
\newblock Building healthy recommendation sequences for everyone: A safe
  reinforcement learning approach.
\newblock In \emph{Proceedings of the FAccTRec Workshop, Online}, pages 26--27,
  2020.

\bibitem[Stradi et~al.(2024)Stradi, Castiglioni, Marchesi, and
  Gatti]{stradi2024learning}
Francesco~Emanuele Stradi, Matteo Castiglioni, Alberto Marchesi, and Nicola
  Gatti.
\newblock Learning adversarial mdps with stochastic hard constraints.
\newblock \emph{arXiv preprint arXiv:2403.03672}, 2024.

\bibitem[Sutton and Barto(2018)]{sutton2018reinforcement}
Richard~S Sutton and Andrew~G Barto.
\newblock \emph{Reinforcement learning: An introduction}.
\newblock MIT press, 2018.

\bibitem[Wei et~al.(2022)Wei, Dann, and Zimmert]{wei2022corruption}
Chen-Yu Wei, Christoph Dann, and Julian Zimmert.
\newblock A model selection approach for corruption robust reinforcement
  learning.
\newblock In \emph{International Conference on Algorithmic Learning Theory},
  pages 1043--1096. PMLR, 2022.

\bibitem[Wei et~al.(2023)Wei, Ghosh, Shroff, Ying, and Zhou]{wei2023provably}
Honghao Wei, Arnob Ghosh, Ness Shroff, Lei Ying, and Xingyu Zhou.
\newblock Provably efficient model-free algorithms for non-stationary cmdps.
\newblock In \emph{International Conference on Artificial Intelligence and
  Statistics}, pages 6527--6570. PMLR, 2023.

\bibitem[Wei et~al.(2018)Wei, Yu, and Neely]{Online_Learning_in_Weakly_Coupled}
Xiaohan Wei, Hao Yu, and Michael~J. Neely.
\newblock Online learning in weakly coupled markov decision processes: A
  convergence time study.
\newblock \emph{Proc. ACM Meas. Anal. Comput. Syst.}, 2\penalty0 (1), apr 2018.
\newblock \doi{10.1145/3179415}.
\newblock URL \url{https://doi.org/10.1145/3179415}.

\bibitem[Wen et~al.(2020)Wen, Duan, Li, Xu, and Peng]{wen2020safe}
Lu~Wen, Jingliang Duan, Shengbo~Eben Li, Shaobing Xu, and Huei Peng.
\newblock Safe reinforcement learning for autonomous vehicles through parallel
  constrained policy optimization.
\newblock In \emph{2020 IEEE 23rd International Conference on Intelligent
  Transportation Systems (ITSC)}, pages 1--7. IEEE, 2020.

\bibitem[Wu et~al.(2018)Wu, Chen, Yang, Wang, Tan, Zhang, Xu, and
  Gai]{wu2018budget}
Di~Wu, Xiujun Chen, Xun Yang, Hao Wang, Qing Tan, Xiaoxun Zhang, Jian Xu, and
  Kun Gai.
\newblock Budget constrained bidding by model-free reinforcement learning in
  display advertising.
\newblock In \emph{Proceedings of the 27th ACM International Conference on
  Information and Knowledge Management}, pages 1443--1451, 2018.

\bibitem[Zheng and Ratliff(2020)]{Constrained_Upper_Confidence}
Liyuan Zheng and Lillian Ratliff.
\newblock Constrained upper confidence reinforcement learning.
\newblock In Alexandre~M. Bayen, Ali Jadbabaie, George Pappas, Pablo~A.
  Parrilo, Benjamin Recht, Claire Tomlin, and Melanie Zeilinger, editors,
  \emph{Proceedings of the 2nd Conference on Learning for Dynamics and
  Control}, volume 120 of \emph{Proceedings of Machine Learning Research},
  pages 620--629. PMLR, 10--11 Jun 2020.
\newblock URL \url{https://proceedings.mlr.press/v120/zheng20a.html}.

\end{thebibliography}
\bibliographystyle{plainnat}


\newpage

\appendix

\appendix

\section*{Appendix}
The appendix is structured as follows:
\begin{itemize}
	\item In Appendix~\ref{app:related} we provide the complete related works.
	\item In Appendix~\ref{app:event} we provide the events dictionary.
	\item In Appendix~\ref{app:confidence} we provide the preliminary results on the confidence sets employed to estimate the unknown parameters of the environment.
	\item In Appendix~\ref{app:known_c} we provide the omitted proofs related to the theoretical guarantees when the corruption value is known by the learner, namely, the results attained by Algorithm~\ref{alg:NS_UCSPS}.
	\item In Appendix~\ref{app:C_notprecise} we provide the omitted proofs of the theoretical guarantees attained by Algorithm~\ref{alg:NS_UCSPS}, when a guess on the corruption is given as input to the algorithm.
	\item In Appendix~\ref{app:unknown_C} we provide the omitted proofs related to the theoretical guarantees when the corruption value is \emph{not} known by the learner, namely, the results attained by Algorithm~\ref{alg:unknown_c}.
	\item In Appendix~\ref{app:auxiliary} we restate useful results from existing works.
	\item In Appendix~\ref{app: stability} we provide the results related to stability a corruption-robustness.
\end{itemize}

\section{Related works}
\label{app:related}
In the following, we discuss some works that are tightly related to ours.
In particular, we first describe works dealing with the online learning problem in MDPs, and, then, we discuss some works studying the constrained version of the classical online learning problem.

\paragraph{Online learning in MDPs}
The literature on online learning problems~\citep{cesa2006prediction} in MDPs is wide (see~\citep{Near_optimal_Regret_Bounds,even2009online,neu2010online} for some initial results on the topic).
In such settings, two types of feedback are usually studied: in the \textit{full-information feedback} model, the entire loss function is observed after the learner's choice, while in the \textit{bandit feedback} model, the learner only observes the loss due to the chosen action. 
\citet{Minimax_Regret} study the problem of optimal exploration in episodic MDPs with unknown transitions and stochastic losses when the feedback is bandit.
The authors present an algorithm whose regret upper bound is $\Tilde{\mathcal{O}}(\sqrt{T})$, 
thus matching the lower bound for this class of MDPs and improving the previous result by \citet{Near_optimal_Regret_Bounds}.

\paragraph{Online learning in non-stationary MDPs} 
The literature on non-stationary MDPs encompasses both works on non-stationary rewards and non-stationary transitions. As concerns the first research line,
\citet{rosenberg19a} study the online learning problem in episodic MDPs with adversarial losses and unknown transitions when the feedback is full information. The authors present an online algorithm exploiting entropic regularization and providing a regret upper bound of $\Tilde{\mathcal{O}}(\sqrt{T})$.
The same setting is investigated by \citet{OnlineStochasticShortest} when the feedback is bandit. In such a case, the authors provide a regret upper bound of the order of $\Tilde{\mathcal{O}}(T^{3/4})$, which is improved by
\citet{JinLearningAdversarial2019} by providing an  algorithm that achieves in the same setting a regret upper bound of $\Tilde{\mathcal{O}}(\sqrt{T})$. Related to the non-stationarity of the transitions , \citet{wei2022corruption} study MDPs with adversarial corruption on  transition functions and rewards, reaching a regret upper bound of order $\widetilde{\mathcal{O}}(\sqrt{T}+C)$ (where $C$ is the amount of adversarial corruption) with respect to the optimal policy of the non-corrupted MDP .  
Finally, \citet{jin2024no} is the first to study completely adversarial MDPs with changing transition functions, providing a $\Tilde{\mathcal{O}}(\sqrt{T}+C)$ regret bounds, where $C$ is a corruption measure of the adversarially changing transition functions.

\paragraph{Online learning with constraints} 
A central result is provided by \citet{Mannor}, who show that it is impossible to suffer from sublinear regret and sublinear constraint violation when an adversary chooses losses and constraints. 
\citet{liakopoulos2019cautious} try to overcome such an impossibility result by defining a new notion of regret. They study a class of online learning problems with long-term budget constraints that can be chosen by an adversary.
The learner’s regret metric is modified by introducing the notion of a \textit{K-benchmark}, \emph{i.e.}, a comparator that meets the problem’s allotted budget over any window of length $K$. 
\citet{castiglioni2022online, Unifying_Framework} deal with the problem of online learning with stochastic and adversarial losses, providing the first \textit{best-of-both-worlds} algorithm for online learning problems with long-term constraints.

\paragraph{Online learning in CMDPs}
Online Learning In MDPs with constraints is generally studied when the constraints are selected stochastically. Precisely, \citet{Constrained_Upper_Confidence} deal with episodic CMDPs with stochastic losses and constraints, where the transition probabilities are known and the feedback is bandit. The regret upper bound of their algorithm is of the order of $\Tilde{\mathcal{O}}(T^{3/4})$, while the cumulative constraint violation is guaranteed to be below a threshold with a given probability.
\citet{Online_Learning_in_Weakly_Coupled} deal with adversarial losses and stochastic constraints, assuming the transition probabilities are known and the feedback is full information. The authors present an algorithm that guarantees an upper bound of the order of $\Tilde{\mathcal{O}}(\sqrt{T})$ on 
both regret and constraint violation. 
\citet{bai2020provably} provide the first algorithm that achieves sublinear regret when the transition probabilities are unknown, assuming that the rewards are deterministic and the constraints are stochastic with a particular structure.
\citet{Exploration_Exploitation} propose two approaches
to deal with the exploration-exploitation dilemma in episodic CMDPs. These approaches guarantee sublinear regret and constraint violation when transition probabilities, rewards, and constraints are unknown and stochastic, while the feedback is bandit. 
\citet{Upper_Confidence_Primal_Dual} provide a primal-dual approach based on \textit{optimism in the face of uncertainty}. This work shows the effectiveness of such an approach when dealing with episodic CMDPs with adversarial losses and stochastic constraints, achieving both sublinear regret and constraint violation with full-information feedback.
\cite{stradi2024learning} is the first work to tackle CMDPs with adversarial losses and bandit feedback. They propose an algorithm which achieves sublinear regret and sublinear positive constraints violations, assuming that the constraints are stochastic.
\citet{germano2023bestofbothworlds} are the first to study CMDPs with adversarial constraints. Given the well-known impossibility result to learn with adversarial constraints, they propose an algorithm that attains sublinear violation (with cancellations allowed) and a fraction of the optimal reward when the feedback is full.
Finally, \citet{ding_non_stationary}~and~\citet{wei2023provably} consider the case in which rewards and constraints are non-stationary, assuming that their variation is bounded, as in our work. Nevertheless, our settings differ in multiple aspects. First of all, we consider positive constraints violations, while the aforementioned works allow the cancellations in their definition.
We consider a static regret adversarial baseline, while \citet{ding_non_stationary}~and~\citet{wei2023provably} consider the stronger baseline of dynamic regret. Nevertheless, our bounds are not comparable, since we achieve linear regret and violations only in the worst case scenario in which $C=T$, while a sublinear corruption would lead to linear dynamic regret in their work. Finally, we do not make any assumption on the number of episodes, while both the regret and violations bounds presented in~\citet{wei2023provably} hold only for large $T$.

\section{Events dictionary}
\label{app:event}
In the following, we introduce the main events which are related to estimation of the unknown stochastic parameters of the environment.
\begin{itemize}
	\item \textbf{Event} $\mathcal{E}_{P}$: 
		for all $t\in [T], P \in \mathcal{P}_t$.  
		$\mathcal{E}_{P}$ holds with probability at least $1-4\delta$ by Lemma~\ref{lem:prob_int_jin}. The event is related to the estimation of the unknown transition function.
	\item \textbf{Event} $\mathcal{E}_G$: for all $ t \in [T], i \in [m],(x,a)\in X\times A$:
	\begin{equation*}
		\bigg\lvert \widehat{g}_{t,i}(x,a) - \frac{1}{T} \underset{\tau \in [T]}{\sum}\mathbb{E}[g_{\tau,i}(x,a)]\bigg\rvert \le \xi_t(x,a).
	\end{equation*}
	Similarly,
	\begin{equation*}
		\bigg \lvert\widehat{g}_{t,i}(x,a)-g_i^\circ(x,a)\bigg \rvert \le \xi_t(x,a),
	\end{equation*}
	where $g_i^\circ\in[0,1]^{|X\times A|}\coloneq [G^\circ]_{i}$.
	
	$\mathcal{E}_G$ holds with probability at least $1-\delta$
	by Corollary~\ref{cor: CI for G}. The event is related to the estimation of the unknown constraint functions.
	\item \textbf{Event} $\mathcal{E}_r$: for all $t\in[T], (x,a)\in X\times A$:
	\begin{equation*}
		\bigg\lvert \widehat{r}_{t}(x,a) - \frac{1}{T} \underset{\tau \in [T]}{\sum}\mathbb{E}[r_{\tau}(x,a)]\bigg\rvert \le \phi_t(x,a).
	\end{equation*}
	Similarly,
	\begin{equation*}
		\bigg \lvert\widehat{r}_{t}(x,a)-r^\circ(x,a)\bigg \rvert \le \phi_t(x,a).
	\end{equation*}
	$\mathcal{E}_r$ holds with probability at least $1-\delta$
	by Corollary~\ref{cor: CI for r}. The event is related to the estimation of the unknown reward function.
	\item \textbf{Event} $\mathcal{E}_{\widehat q \ }$: 	for any $P_t^x \in \mathcal{P}_t$:
	\begin{equation*}
		\sum_{t\in [T]} \sum_{x \in X, a \in A}\left|q^{P_t^x,\pi_t}(x, a)- q_t(x, a)\right| \leq \mathcal{O}\left(L|X| \sqrt{|A| T \ln \left(\frac{T|X||A|}{\delta}\right)}\right).
	\end{equation*}
 $\mathcal{E}_{\widehat q \ }$ holds with probability at least $ 1-6\delta$
	by Lemma \ref{lem:transition_jin}. The event is related to the convergence to the true unknown occupancy measure. Notice that $\mathbb{P}\left[\mathcal{E}_{\widehat q \ }\cap \mathcal{E}_{P}\right]\ge 1-6\delta$ by construction.
	
\end{itemize}

\section{Confidence intervals}
\label{app:confidence}

In this section we will provide the preliminary results related to the high probability confidence sets for the estimation of the cost constraints matrices and the reward vectors.

We start  bounding the distance between the \emph{non-corrupted} costs and rewards with respect to the mean of the adversarial distributions.
\begin{lemma}
	\label{lemma:aux G Gmean and r}
	For all $i \in [m]$, fixing $(x,a)\in X \times A$, it holds:
	\begin{equation*}
		\bigg\lvert {g}_i^\circ(x,a)-\frac{1}{T}\underset{t \in [T] }{\sum}\mathbb{E}[g_{t,i}(x,a)] \bigg\rvert \le \frac{C_G}{T}.
	\end{equation*}
	Similarly, fixing $(x,a)\in X \times A$, it holds:
	\begin{equation*}
		\bigg\lvert {r}^\circ(x,a)-\frac{1}{T}\underset{t \in [T] }{\sum}\mathbb{E}[r_{t}(x,a)] \bigg\rvert \le \frac{C_r}{T},
	\end{equation*}
\end{lemma}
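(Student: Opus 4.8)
The plan is to prove both inequalities by the same elementary two-step argument: rewrite the difference between the non-corrupted value and the average of the means as an average over episodes, apply the triangle inequality coordinatewise, and then recognize each per-episode term as a single coordinate of an $\ell_1$ distance appearing in the definition of the corruption.

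First I would treat the reward statement. Writing $r^\circ(x,a)-\frac{1}{T}\sum_{t\in[T]}\mathbb{E}[r_t(x,a)] = \frac{1}{T}\sum_{t\in[T]}\bigl(r^\circ(x,a)-\mathbb{E}[r_t(x,a)]\bigr)$ and applying the triangle inequality gives
\[
\left|r^\circ(x,a)-\frac{1}{T}\sum_{t\in[T]}\mathbb{E}[r_t(x,a)]\right| \le \frac{1}{T}\sum_{t\in[T]}\left|r^\circ(x,a)-\mathbb{E}[r_t(x,a)]\right|.
\]
The key observation is that, for each fixed episode $t$, the quantity $|r^\circ(x,a)-\mathbb{E}[r_t(x,a)]|$ is exactly one coordinate of the vector $\mathbb{E}[r_t]-r^\circ$, hence it is bounded above by $\lVert \mathbb{E}[r_t]-r^\circ\rVert_1$. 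Summing over $t$ and using that $r^\circ$ attains the minimum in the definition of $C_r$ (Equation~\eqref{def: C^r}) yields $\frac{1}{T}\sum_{t\in[T]}\lVert \mathbb{E}[r_t]-r^\circ\rVert_1 = \frac{C_r}{T}$, which is precisely the claimed bound.

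The cost statement follows by the same steps, with one extra observation needed to handle the per-episode maximization over constraints in the definition of $C_G$ (Equation~\eqref{def: C^G}). After the triangle inequality, for a fixed $i \in [m]$ and fixed $t$ I would bound $|g_i^\circ(x,a)-\mathbb{E}[g_{t,i}(x,a)]| \le \lVert \mathbb{E}[g_{t,i}]-g_i^\circ\rVert_1 \le \max_{j\in[m]}\lVert \mathbb{E}[g_{t,j}]-g_j^\circ\rVert_1$, where the first inequality again bounds a single coordinate by the full $\ell_1$ norm and the second passes to the maximum over all constraints. Summing over $t$ and invoking the optimality of $G^\circ$ in Equation~\eqref{def: C^G} gives the bound $\frac{C_G}{T}$.

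There is no genuine obstacle here, as the argument is entirely routine. The only point requiring a moment of care is the cost case: since $C_G$ is defined through a per-episode maximum over constraints, the single-coordinate bound for the specific constraint $i$ must first be relaxed to the maximum over all $j\in[m]$ before the sum collapses into $C_G$; skipping this relaxation would instead produce a constraint-specific corruption term rather than the uniform $C_G$ stated in the lemma.
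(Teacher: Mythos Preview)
Your proposal is correct and follows essentially the same approach as the paper's proof: rewrite the difference as an average, apply the triangle inequality, and bound the per-episode coordinatewise terms by the $\ell_1$ quantities defining $C_r$ and $C_G$. You are simply more explicit than the paper about the intermediate step of bounding a single coordinate by the full $\ell_1$ norm and, for the cost case, passing to the maximum over constraints before summing; the paper compresses this into a one-line remark that the bound holds for all $i$ because $C_G$ is defined via a maximum.
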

\begin{proof}
	By triangle inequality and from the definition of $C_G$, it holds:
	\begin{align*}
		\bigg|g^\circ_i(x,a)-\frac{1}{T}\underset{t \in [T]}{\sum}\mathbb{E}[g_{t,i}(x,a)] \bigg| & = \bigg|\frac{1}{T}\underset{t \in [T]}{\sum}(g^\circ_i(x,a)-\mathbb{E}[g_{t,i}(x,a)] )\bigg| \\
		&\le \frac{1}{T}\underset{t \in [T]}{\sum}\bigg|g^\circ_i(x,a)-\mathbb{E}[g_{t,i}(x,a)]\bigg| \\& \le \frac{C_G}{T}.
	\end{align*}
	Notice that the proof holds for all $i \in [m]$ since  $C_G$ is defined employing the maximum over $i \in [m]$.
	Following the same steps, it holds:
	\begin{align*}
		\left|r^\circ(x,a)-\frac{1}{T}\underset{t \in [T]}{\sum}\mathbb{E}[r_{t}(x,a)] \right| & = \left|\frac{1}{T}\underset{t \in [T]}{\sum}(r^\circ(x,a)-\mathbb{E}[r_{t}(x,a)] )\right| \\
		&\le \frac{1}{T}\underset{t \in [T]}{\sum}\bigg|r^\circ(x,a)-\mathbb{E}[r_{t}(x,a)]\bigg| \\& \le \frac{C_r}{T},
	\end{align*}
which concludes the proof.
\end{proof}
In the following lemma, we bound the distance between the empirical mean of the constraints function and the true \emph{non-corrupted} value.
\begin{lemma}
	\label{lemma:auxCIG1}
	Fixing $i \in [m]$, $(x,a) \in X\times A$ , $t \in [T]$, for any $\delta\in (0,1)$, it holds with probability at least $1-\delta$:
	\begin{equation*}
		\bigg \lvert\widehat{g}_{t,i}(x,a)-g^\circ_i(x,a)\bigg \rvert \le \sqrt{\frac{1}{2\max\{N_t(x,a),1\}}\ln\left(\frac{2}{\delta} \right) }+ \frac{C_G}{\max\{N_t(x,a),1\}}.
	\end{equation*}
\end{lemma}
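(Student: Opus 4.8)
The plan is to split the deviation into a \emph{stochastic fluctuation} and a \emph{corruption bias}, controlled respectively by a martingale concentration inequality and by the definition of $C_G$ in Equation~\eqref{def: C^G}. Writing $N \coloneqq \max\{N_t(x,a),1\}$ and reading the estimator's normalization as $N=\sum_{\tau\in[t]}\mathbb{I}_\tau(x,a)$ whenever there is at least one visit, I would introduce the average of the \emph{true} conditional means over the visited episodes, $\widetilde g \coloneqq \frac{1}{N}\sum_{\tau \in [t]}\mathbb{I}_\tau(x,a)\,\mathbb{E}[g_{\tau,i}(x,a)]$, and apply the triangle inequality:
\begin{equation*}
\left\lvert \widehat g_{t,i}(x,a) - g_i^\circ(x,a)\right\rvert \le \underbrace{\left\lvert \widehat g_{t,i}(x,a) - \widetilde g\right\rvert}_{\text{(stochastic)}} + \underbrace{\left\lvert \widetilde g - g_i^\circ(x,a)\right\rvert}_{\text{(bias)}}.
\end{equation*}
The degenerate case $N_t(x,a)=0$ would be handled separately, since there $\widehat g_{t,i}(x,a)=0$ and the claim follows from $g_i^\circ(x,a)\in[0,1]$.

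For the stochastic term I would view the observed costs visit-by-visit: letting $\tau_1<\dots<\tau_N$ be the episodes in which $(x,a)$ is visited, the centered costs $g_{\tau_k,i}(x,a)-\mathbb{E}[g_{\tau_k,i}(x,a)]$ form a martingale difference sequence with respect to the visit-indexed filtration, each having conditional range at most one and zero conditional mean, because the cost sampled from $\mathcal{G}_{\tau_k}$ is independent of the trajectory (hence of the visit indicators) given the past, while the oblivious adversary fixes the means $\mathbb{E}[g_{\tau,i}(x,a)]$ in advance. A two-sided Azuma--Hoeffding inequality on these $N$ terms then yields $\lvert \widehat g_{t,i}(x,a)-\widetilde g\rvert \le \sqrt{\frac{1}{2N}\ln(2/\delta)}$ with probability at least $1-\delta$, which is exactly the first term of the statement. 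The delicate point, and the main obstacle, is that $N=N_t(x,a)$ is itself random and data-dependent, so one cannot naively invoke a fixed-sample bound; the visit-indexed martingale (with $N$ a stopping time in visit-time) is precisely what makes the argument rigorous, and the subsequent union over the possible values of the count is what forces the extra $T$, $m$, and $|X||A|$ factors inside the logarithm once this lemma is aggregated into the event $\mathcal{E}_G$.

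The bias term is purely deterministic. Bounding the indicators by one and extending the sum to all episodes only enlarges it, so
\begin{equation*}
\left\lvert \widetilde g - g_i^\circ(x,a)\right\rvert \le \frac{1}{N}\sum_{\tau \in [t]}\mathbb{I}_\tau(x,a)\left\lvert \mathbb{E}[g_{\tau,i}(x,a)] - g_i^\circ(x,a)\right\rvert \le \frac{1}{N}\sum_{\tau \in [T]}\left\lvert \mathbb{E}[g_{\tau,i}(x,a)] - g_i^\circ(x,a)\right\rvert.
\end{equation*}
Each summand is a single component of $\mathbb{E}[g_{\tau,i}] - g_i^\circ$, hence at most $\lVert \mathbb{E}[g_{\tau,i}] - g_i^\circ\rVert_1 \le \max_{j\in[m]}\lVert \mathbb{E}[g_{\tau,j}] - g_j^\circ\rVert_1$; summing over $\tau$ and recalling that $G^\circ$ attains the minimum in the definition of $C_G$ gives $\lvert \widetilde g - g_i^\circ(x,a)\rvert \le C_G/N$, which is the second term (the same mechanism as Lemma~\ref{lemma:aux G Gmean and r}, now averaging over the $N$ visited episodes rather than over all $T$). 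Combining the two bounds through the triangle inequality concludes the proof.
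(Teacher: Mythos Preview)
Your proposal is correct and follows essentially the same approach as the paper: decompose via the triangle inequality into a stochastic fluctuation term (handled by Azuma--Hoeffding on the visit-indexed increments) and a deterministic bias term (bounded by $C_G/N$ through the definition of $C_G$). You are in fact slightly more careful than the paper, which invokes Hoeffding without explicitly discussing the randomness of $N_t(x,a)$ or the degenerate $N_t(x,a)=0$ case; your remark that the union over possible counts is what later inflates the logarithm in the aggregated event $\mathcal{E}_G$ is exactly right.
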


\begin{proof}
	We start bounding the quantity of interest as follows:
	\begin{align}
		\bigg|\widehat{g}_{t,i}(x, a)-g^\circ_i(x,a)\bigg|&=
		\left|\left(\frac{\sum_{\tau \in [t]}\mathbb{I}_\tau(x,a)g_{\tau,i}(x, a)}{\max\{N_t(x,a),1\}} \right)- g^\circ_i(x,a)\right| \nonumber\\
		& \le \left|\frac{1}{\max\{N_t(x,a),1\}}\underset{\tau \in [t]}{\sum}\mathbb{I}_\tau(x,a)\left(g_{\tau,i}(x, a) - \mathbb{E}[g_{\tau,i}(x,a)]\right)\right| \nonumber\\ 
		&\ \ +\left|\frac{1}{\max\{N_t(x,a),1\}}\underset{\tau \in [t]}{\sum}\mathbb{I}_\tau(x,a)[\mathbb{E}[g_{\tau,i}(x,a)]-g^\circ_i(x,a)]\right|,\label{lemauxCIG1:eq1}
	\end{align}
	where we employed the triangle inequality and  the definition of $\widehat{g}_{t,i}(x,a)$.
	
	We bound the two terms in Equation~\eqref{lemauxCIG1:eq1} separately.
	For the first term, by Hoeffding's inequality and noticing that constraints values are bounded in $[0,1]$, it holds that:
	\begin{equation*}
		\mathbb{P}\left[\mathcal{A}\geq \frac{c}{\max\{N_t(x,a),1\}}\right] 
		 \leq 2 \exp\Bigg(-\frac{2c^2}{\max\{N_t(x,a),1\}}\Bigg),
	\end{equation*}
	where,
	\begin{equation*}
		\mathcal{A}=\left|\left(\frac{\sum_{\tau \in [t]}\mathbb{I}_\tau(x,a)g_{\tau,i}(x, a)}{\max\{N_t(x,a),1\}} \right) - \left( \frac{\sum_{\tau \in [t]}\mathbb{I}_\tau(x,a) \mathbb{E}[g_{\tau,i}(x,a)]}{\max\{N_t(x,a),1\}}\right)\right|,
	\end{equation*}
	Setting $\delta=2 \exp\left(-\frac{2c^2}{\max\{N_t(x,a),1\}}\right)$ and solving to find a proper value of $c$ we get that with probability at least $1-\delta$:
	\begin{equation*}
		\left|\frac{1}{\max\{N_t(x,a),1\}}\underset{\tau \in [t]}{\sum}\mathbb{I}_\tau(x,a)\left(g_{\tau,i}(x, a) - \mathbb{E}[g_{\tau,i}(x,a)]\right)\right| \le \sqrt{\frac{1}{2\max\{N_t(x,a),1\}}\ln\left(\frac{2}{\delta} \right) }.
	\end{equation*}
	Finally, we focus on the second term. Thus, employing the triangle inequality and the definition of $C_G$, it holds:
	\begin{align*}
		\Bigg|\frac{1}{\max\{N_t(x,a),1\}}\underset{\tau \in [t]}{\sum}\mathbb{I}_\tau(x,a)&\left[\mathbb{E}[g_{\tau,i}(x,a)]-g^\circ_i(x,a)\right]\Bigg| \\ &\le \frac{1}{\max\{N_t(x,a),1\}}\underset{\tau \in [t]}{\sum}\mathbb{I}_\tau(x,a)\bigg|\mathbb{E}[g_{\tau,i}(x,a)]-g^\circ_i(x,a)\bigg| \\
		&\le \frac{1}{\max\{N_t(x,a),1\}}\underset{\tau \in [T]}{\sum}\bigg|\mathbb{E}[g_{\tau,i}(x,a)]-g^\circ_i(x,a)\bigg| \\&\le \frac{C_G}{\max\{N_t(x,a),1\}},
	\end{align*}
	which concludes the proof.
\end{proof}
We now prove a similar result for the rewards function.
\begin{lemma}
	\label{lemma:auxCIr1}
	Fixing $(x,a) \in X\times A$ , $t \in [T]$, for any $\delta\in (0,1)$, it holds with probability at least $1-\delta$:
	\begin{equation*}
		\bigg \lvert\widehat{r}_{t}(x,a)-r^\circ(x,a)\bigg \rvert \le \sqrt{\frac{1}{2\max\{N_t(x,a),1\}}\ln\left(\frac{2}{\delta} \right) }+ \frac{C_r}{\max\{N_t(x,a),1\}}.
	\end{equation*}
\end{lemma}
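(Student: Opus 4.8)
The plan is to mirror exactly the proof structure of Lemma~\ref{lemma:auxCIG1}, since the reward estimator $\widehat{r}_{t}(x,a)$ has the same functional form as $\widehat{g}_{t,i}(x,a)$, with the non-corrupted reward $r^\circ(x,a)$ playing the role of $g^\circ_i(x,a)$ and the reward corruption $C_r$ playing the role of $C_G$. First I would add and subtract the empirical mean of the true conditional expectations, writing
\begin{align*}
	\bigg|\widehat{r}_{t}(x,a)-r^\circ(x,a)\bigg| &\le \left|\frac{1}{\max\{N_t(x,a),1\}}\underset{\tau \in [t]}{\sum}\mathbb{I}_\tau(x,a)\left(r_{\tau}(x,a) - \mathbb{E}[r_{\tau}(x,a)]\right)\right| \\
	&\quad +\left|\frac{1}{\max\{N_t(x,a),1\}}\underset{\tau \in [t]}{\sum}\mathbb{I}_\tau(x,a)\left[\mathbb{E}[r_{\tau}(x,a)]-r^\circ(x,a)\right]\right|,
\end{align*}
using the triangle inequality together with the definition of $\widehat{r}_{t}(x,a)$.

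Next I would bound the two terms separately. For the first (stochastic) term, I would apply Hoeffding's inequality, using that rewards are bounded in $[0,1]$, to conclude that with probability at least $1-\delta$ it is at most $\sqrt{\frac{1}{2\max\{N_t(x,a),1\}}\ln\left(\frac{2}{\delta}\right)}$. For the second (deterministic, corruption) term, I would pull the absolute value inside the sum, drop the indicators $\mathbb{I}_\tau(x,a)\le 1$ while extending the sum over all $\tau \in [T]$, and then invoke the definition of $C_r$ in Equation~\eqref{def: C^r} to bound it by $\frac{C_r}{\max\{N_t(x,a),1\}}$. Summing the two bounds yields the statement.

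The only subtlety—and the one point requiring slightly more care than a pure copy of Lemma~\ref{lemma:auxCIG1}—is the martingale structure underlying the Hoeffding step: the summands are $\mathbb{I}_\tau(x,a)(r_\tau(x,a)-\mathbb{E}[r_\tau(x,a)])$, where both the indicator and the sampling of $r_\tau$ depend on the (adversarially chosen, then sampled) history, so the bounded-difference argument must be phrased as an Azuma-type concentration for a martingale difference sequence adapted to the natural filtration, rather than a plain i.i.d.~Hoeffding bound. Conditioning on whether $(x,a)$ is visited, each centered term has conditional mean zero and range bounded by $1$, so the same tail bound applies. I expect this to be the main (though minor) obstacle; everything else is a direct transcription of the constraint-cost argument with $r^\circ$ and $C_r$ substituted for $g^\circ_i$ and $C_G$.
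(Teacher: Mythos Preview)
Your proposal is correct and is exactly the approach the paper takes: its proof of Lemma~\ref{lemma:auxCIr1} simply states that the argument is analogous to that of Lemma~\ref{lemma:auxCIG1}, which is precisely the triangle-inequality split plus Hoeffding/Azuma plus corruption bound you describe. Your remark about the martingale structure behind the concentration step is a valid refinement but does not alter the argument.
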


\begin{proof}
	The proof is analogous to the one of Lemma~\ref{lemma:auxCIG1}.
\end{proof}
We now generalize the previous results as follows. 
\begin{lemma}
	\label{lemma:auxCIG2}
	Given any $\delta \in (0,1)$, for any $(x,a)\in X \times A, t \in [T],$ and $ i \in [m]$, it holds with probability at least $1-\delta$:
	\begin{equation*}
		\bigg \lvert\widehat{g}_{t,i}(x,a)-g^\circ_i(x,a)\bigg \rvert \le \sqrt{\frac{1}{2\max\{N_t(x,a),1\}}\ln\left(\frac{2mT|X||A|}{\delta} \right) }+ \frac{C_G}{\max\{N_t(x,a),1\}}.
	\end{equation*}
\end{lemma}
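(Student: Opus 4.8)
The plan is to derive Lemma~\ref{lemma:auxCIG2} from Lemma~\ref{lemma:auxCIG1} by a single union bound over all triples $(i,(x,a),t)$, trading the per-triple guarantee for a uniform one at the cost of the logarithmic factor appearing inside the confidence width. First I would fix an arbitrary triple $(i,(x,a),t) \in [m] \times (X \times A) \times [T]$ and apply Lemma~\ref{lemma:auxCIG1} with its failure probability $\delta$ replaced by $\delta' \coloneqq \nicefrac{\delta}{(mT|X||A|)}$. Since substituting $\delta'$ into $\ln(\nicefrac{2}{\delta'})$ produces exactly $\ln(\nicefrac{2mT|X||A|}{\delta})$, this yields that for that fixed triple, with probability at least $1-\delta'$,
\[
\bigl\lvert \widehat{g}_{t,i}(x,a) - g_i^\circ(x,a) \bigr\rvert \le \sqrt{\frac{1}{2\max\{N_t(x,a),1\}}\ln\left(\frac{2mT|X||A|}{\delta}\right)} + \frac{C_G}{\max\{N_t(x,a),1\}}.
\]

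Next I would take a union bound over the distinct triples. Let $\mathcal{B}_{i,x,a,t}$ denote the ``bad'' event that the displayed inequality fails for the triple $(i,(x,a),t)$; by the previous step each such event has probability at most $\delta'$. There are $m$ choices of $i$, $|X||A|$ choices of $(x,a)$, and $T$ choices of $t$, for a total of $mT|X||A|$ triples, so
\[
\mathbb{P}\Bigl[\,\bigcup_{i,x,a,t} \mathcal{B}_{i,x,a,t}\,\Bigr] \le mT|X||A| \cdot \delta' = \delta.
\]
Passing to the complement shows that, with probability at least $1-\delta$, the bound holds simultaneously for every $i \in [m]$, every $(x,a) \in X \times A$, and every $t \in [T]$, which is precisely the statement of Lemma~\ref{lemma:auxCIG2}.

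The argument is entirely routine, and I do not anticipate a genuine obstacle: no new concentration inequality is required beyond what Lemma~\ref{lemma:auxCIG1} already supplies. The only point deserving care is counting the events correctly, so that the union bound multiplies $\delta'$ by exactly $mT|X||A|$ and the resulting logarithmic term matches the width stated in the lemma; choosing $\delta'$ to absorb this factor is what makes the uniform bound come out cleanly.
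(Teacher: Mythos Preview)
Your proposal is correct and follows essentially the same approach as the paper: apply Lemma~\ref{lemma:auxCIG1} with failure probability $\delta' = \delta/(mT|X||A|)$ and union bound over all $mT|X||A|$ triples. The paper's write-up additionally mentions ``an additional Union Bound over the possible values of $N_t(x,a)$'' to address the fact that $N_t(x,a)$ is random in the Hoeffding step, but this does not change the $mT|X||A|$ count inside the logarithm and your argument is sound given the statement of Lemma~\ref{lemma:auxCIG1} as written.
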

\begin{proof}
	First let's define $\zeta_t(x,a)$ as:
	\begin{equation*}
		\zeta_t(x,a):= \sqrt{\frac{1}{2\max\{N_t(x,a),1\}}\ln\left(\frac{2}{\delta} \right) }+ \frac{C_G}{\max\{N_t(x,a),1\}}.
	\end{equation*}
	From Lemma~\ref{lemma:auxCIG1}, given $\delta'\in(0,1)$, we have, fixed any $i\in[m]$, $t\in[T]$ and $(x,a)\in X \times A$:
	\begin{equation*}
		\mathbb{P}\Bigg[\bigg \lvert\widehat{g}_{t,i}(x,a)-g^\circ_i(x,a)\bigg \rvert \le \zeta_t(x,a)\Bigg]\geq 1-\delta'.
	\end{equation*}
	Now, we are interested in the intersection of all the events, namely,
	\begin{equation*}
		\mathbb{P}\Bigg[\bigcap_{x,a,i,t}\Big\{\Big|\widehat{g}_{t,i}(x,a)-g^\circ_i(x,a)\Big|\leq \zeta_t(x,a)\Big\}\Bigg].
	\end{equation*}
	Thus, we have:
	\begin{align}
		\mathbb{P}\Bigg[\bigcap_{x,a,i,t}\Big\{\Big|\widehat{g}_{t,i}(x,a)-g^\circ_i(x,a)\nonumber&\Big|\leq \zeta_t(x,a)\Big\}\Bigg]\\&= 1 - \mathbb{P}\Bigg[\bigcup_{x,a,i,t}\Big\{\Big|\widehat{g}_{t,i}(x,a)-g^\circ_i(x,a)\Big|\leq \zeta_t(x,a)\Big\}^c\Bigg]\nonumber\\
		&\geq 1 - \sum_{x,a,i,t}\mathbb{P}\Bigg[\Big\{\Big|\widehat{g}_{t,i}(x,a)-g^\circ_i(x,a)\Big|\leq \zeta_t(x,a)\Big\}^c\Bigg] \label{eq:union}\\
		& \geq 1-|X||A|mT\delta^\prime, \nonumber
	\end{align}
	where Inequality~\eqref{eq:union} holds by Union Bound. Noticing that $g_{t,i}(x,a)\leq 1$, substituting $\delta'$ with $\delta:=\delta'/|X||A|mT$ in $\zeta_t(x,a)$ with an additional Union Bound over the possible values of $N_t(x,a)$, we have, with probability at least $1-\delta$:
	\begin{equation*}
	\bigg \lvert\widehat{g}_{t,i}(x,a)-g^\circ_i(x,a)\bigg \rvert \le \sqrt{\frac{1}{2\max\{N_t(x,a),1\}}\ln\left(\frac{2mT|X||A|}{\delta} \right) }+ \frac{C_G}{\max\{N_t(x,a),1\}},
	\end{equation*}
which concludes the proof.
\end{proof}
We provide a similar result for the rewards function.
\begin{lemma}
	\label{lemma:auxCIr2}
	Given any $\delta \in (0,1)$, for any $(x,a)\in X \times A, t \in [T],$ it holds with probability at least $1-\delta$:
	\begin{equation*}
		\bigg \lvert\widehat{r}_{t}(x,a)-r^\circ(x,a)\bigg \rvert \le \sqrt{\frac{1}{2\max\{N_t(x,a),1\}}\ln\left(\frac{2T|X||A|}{\delta} \right) }+ \frac{C_r}{\max\{N_t(x,a),1\}}.
	\end{equation*}
\end{lemma}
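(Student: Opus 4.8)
The plan is to lift the single-coordinate bound of Lemma~\ref{lemma:auxCIr1} to a statement holding simultaneously over all triples $(x,a,t)$ by a union bound, mirroring exactly the argument used for the constraints in Lemma~\ref{lemma:auxCIG2}. First I would define, for a confidence level $\delta' \in (0,1)$, the per-coordinate radius
\[
	\zeta_t(x,a) \coloneqq \sqrt{\frac{1}{2\max\{N_t(x,a),1\}}\ln\left(\frac{2}{\delta'}\right)} + \frac{C_r}{\max\{N_t(x,a),1\}},
\]
so that Lemma~\ref{lemma:auxCIr1} guarantees, for each fixed $(x,a) \in X \times A$ and $t \in [T]$, that $|\widehat{r}_t(x,a) - r^\circ(x,a)| \le \zeta_t(x,a)$ with probability at least $1 - \delta'$.

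Then I would pass to the intersection of these events over all $(x,a,t)$. Writing the complement and applying the union bound gives
\[
	\mathbb{P}\Bigg[\bigcap_{x,a,t}\Big\{\big|\widehat{r}_t(x,a) - r^\circ(x,a)\big| \le \zeta_t(x,a)\Big\}\Bigg] \ge 1 - |X||A|T\,\delta'.
\]
The crucial difference from Lemma~\ref{lemma:auxCIG2} is that the reward carries no constraint index $i$, so the union ranges over $|X||A|T$ events rather than $|X||A|mT$; this is precisely what produces the $\ln(\nicefrac{2T|X||A|}{\delta})$ term in place of $\ln(\nicefrac{2mT|X||A|}{\delta})$. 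Substituting $\delta \coloneqq \nicefrac{\delta'}{(|X||A|T)}$ into $\zeta_t(x,a)$ then yields the claimed radius.

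I do not anticipate any genuine obstacle, as the argument is a routine union bound over the one-shot concentration of Lemma~\ref{lemma:auxCIr1}. The only point meriting mild care is that Hoeffding's inequality in that lemma is applied for a deterministic value of $N_t(x,a)$, whereas $N_t(x,a)$ is itself random; to make the conclusion uniform in the realized visitation counts one should formally add a union bound over the possible values of $N_t(x,a)$. Since $N_t(x,a) \le T$, this contributes only an additional factor absorbed into the existing $T$ and hence leaves the logarithmic term unchanged — exactly the step flagged parenthetically in the proof of Lemma~\ref{lemma:auxCIG2}, which I would invoke here verbatim.
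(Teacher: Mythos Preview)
Your proposal is correct and follows essentially the same route as the paper's own proof: define the per-coordinate radius from Lemma~\ref{lemma:auxCIr1}, take a union bound over $(x,a,t)$ (with no constraint index $i$, hence the missing factor $m$), rescale the confidence parameter, and absorb the additional union bound over the realized values of $N_t(x,a)$. The only cosmetic difference is that the paper names the radius $\psi_t$ rather than $\zeta_t$.
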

\begin{proof}
	First let's define $\psi_t(x,a)$ as:
	\begin{equation*}
		\psi_t(x,a):= \sqrt{\frac{1}{2\max\{N_t(x,a),1\}}\ln\left(\frac{2}{\delta} \right) }+ \frac{C_r}{\max\{N_t(x,a),1\}}.
	\end{equation*}
	From Lemma~\ref{lemma:auxCIr1}, given $\delta'\in(0,1)$, we have fixed any  $t\in[T]$ and $(x,a)\in X \times A$:
	\begin{equation*}
		\mathbb{P}\Bigg[\bigg \lvert\widehat{r}_{t}(x,a)-r^\circ(x,a)\bigg \rvert \le \psi_t(x,a)\Bigg]\geq 1-\delta'.
	\end{equation*}
	Now, we are interested in the intersection of all the events, namely,
	\begin{equation*}
		\mathbb{P}\Bigg[\bigcap_{x,a,t}\Big\{\Big|\widehat{r}_{t}(x,a)-r^\circ(x,a)\Big|\leq \psi_t(x,a)\Big\}\Bigg].
	\end{equation*}
	Thus, we have:
	\begin{align}
		\mathbb{P}\Bigg[\bigcap_{x,a,t}\Big\{\Big|\widehat{r}_{t}(x,a)-r^\circ(x,a)&\Big|\leq \psi_t(x,a)\Big\}\Bigg]\nonumber\\&= 1 - \mathbb{P}\Bigg[\bigcup_{x,a,t}\Big\{\Big|\widehat{r}_{t}(x,a)-r^\circ(x,a)\Big|\leq \psi_t(x,a)\Big\}^c\Bigg]\nonumber\\
		&\geq 1 - \sum_{x,a,t}\mathbb{P}\Bigg[\Big\{\Big|\widehat{r}_{t}(x,a)-r^\circ(x,a)\Big|\leq \psi_t(x,a)\Big\}^c\Bigg] \label{eq:unionr}\\
		& \geq 1-|X||A|T\delta', \nonumber
	\end{align}
	where Inequality~\eqref{eq:unionr} holds by Union Bound. Noticing that $r_{t}(x,a)\leq 1$, substituting $\delta'$ with $\delta:=\delta'/|X||A|T$ in $\psi_t(x,a)$ with an additional Union Bound over the possible values of $N_t(x,a)$, we have, with probability at least $1-\delta$:
	\begin{equation*}
		\bigg \lvert\widehat{r}_{t}(x,a)-r^\circ(x,a)\bigg \rvert \le \sqrt{\frac{1}{2\max\{N_t(x,a),1\}}\ln\left(\frac{2T|X||A|}{\delta} \right) }+ \frac{C_r}{\max\{N_t(x,a),1\}},
	\end{equation*}
	which concludes the proof.
\end{proof}

In the following, we bound the distance between the empirical estimation of the constraints and the empirical mean of the mean values of the constraints distribution during the learning dynamic.

\begin{lemma}
	\label{lemma: conf int G hat}
	Given $\delta \in (0,1)$, for all episodes $t\in[T]$, state-action pairs $(x,a)\in X\times A$ and constraint $i\in[m]$,
	 it holds, with probability at least $1-\delta$:
	\begin{equation*}
		\bigg\lvert \widehat{g}_{t,i}(x,a) - \frac{1}{T} \underset{\tau \in [T]}{\sum}\mathbb{E}[g_{\tau,i}(x,a)]\bigg\rvert \le \xi_t(x,a),
	\end{equation*}  
	where,
	\begin{equation*}
		\xi_t(x,a):= \min\left\{1,\sqrt{\frac{1}{2 \max\{N_t(x,a),1\}}\ln\left( \frac{2mT|X||A|}{\delta}\right) }+\frac{C_G}{\max\{N_t(x,a),1\}}+\frac{C_G}{T}\right\}.
	\end{equation*}
\end{lemma}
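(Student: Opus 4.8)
The plan is to derive this bound by inserting the ``fictitious'' non-corrupted constraint cost $g_i^\circ$ as an intermediate quantity and then combining the two estimates already established earlier in the excerpt. Concretely, I would start from the triangle inequality
\begin{align*}
	\bigg\lvert \widehat{g}_{t,i}(x,a) - \frac{1}{T} \sum_{\tau \in [T]}\mathbb{E}[g_{\tau,i}(x,a)]\bigg\rvert &\le \bigg\lvert \widehat{g}_{t,i}(x,a) - g_i^\circ(x,a)\bigg\rvert \\
	&\quad + \bigg\lvert g_i^\circ(x,a) - \frac{1}{T} \sum_{\tau \in [T]}\mathbb{E}[g_{\tau,i}(x,a)]\bigg\rvert,
\end{align*}
which splits the target quantity into a statistical-estimation term and a corruption-bias term.

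Next I would bound each term using the results already available. For the first term, I would invoke Lemma~\ref{lemma:auxCIG2}, which guarantees, with probability at least $1-\delta$ \emph{uniformly} over all $(x,a)\in X\times A$, $t \in [T]$, and $i \in [m]$, that this estimation error is at most $\sqrt{\frac{1}{2\max\{N_t(x,a),1\}}\ln(\nicefrac{2mT|X||A|}{\delta})} + \frac{C_G}{\max\{N_t(x,a),1\}}$. For the second term, I would apply Lemma~\ref{lemma:aux G Gmean and r}, which is a \emph{deterministic} bound and gives exactly $\frac{C_G}{T}$. Summing the two bounds produces precisely the three summands appearing inside $\xi_t(x,a)$.

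Finally, I would argue that the minimum with $1$ is harmless: since both $\widehat{g}_{t,i}(x,a)$ and $\frac{1}{T}\sum_{\tau}\mathbb{E}[g_{\tau,i}(x,a)]$ lie in $[0,1]$ (the constraint costs are bounded in $[0,1]$), the absolute value on the left-hand side is trivially at most $1$, so replacing the sum of the three terms by its minimum with $1$ only tightens the bound while keeping it valid. This yields the stated inequality with the claimed $\xi_t(x,a)$.

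I do not expect any genuine obstacle here: the statement is essentially a bookkeeping combination of a high-probability concentration result (Lemma~\ref{lemma:auxCIG2}) and a deterministic corruption estimate (Lemma~\ref{lemma:aux G Gmean and r}). The only point requiring a small amount of care is the probability accounting, namely that the final $1-\delta$ guarantee is inherited entirely from the uniform union bound already performed inside Lemma~\ref{lemma:auxCIG2}, since the corruption-bias term contributes no additional failure probability. Everything else reduces to the triangle inequality and the $[0,1]$-boundedness of the costs.
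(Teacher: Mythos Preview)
Your proposal is correct and follows essentially the same approach as the paper: both insert $g_i^\circ(x,a)$ via the triangle inequality, bound the first term with Lemma~\ref{lemma:auxCIG2} (which already carries the uniform $1-\delta$ guarantee) and the second with the deterministic Lemma~\ref{lemma:aux G Gmean and r}, and then observe that the $\min\{1,\cdot\}$ is automatic from the $[0,1]$-boundedness of the costs.
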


\begin{proof}
	We first notice that if $\xi_t(x,a)=1$, the results is derived trivially by definition on the cost function.
	We prove now the non trivial case $\sqrt{\frac{1}{2 \max\{N_t(x,a),1\}}\ln\left( \frac{2mT|X||A|}{\delta}\right) }+\frac{C_G}{\max\{N_t(x,a),1\}}+\frac{C_G}{T} \le 1$.
	Employing Lemma~\ref{lemma:aux G Gmean and r} and Lemma~\ref{lemma:auxCIG2}, with probability $1-\delta$  for all $(x,a) \in X \times A$, for all $t \in [T]$ and for all $i \in [m]$, it holds that:
	\begin{align*}
		\Bigg|\widehat{g}_{t,i}(x,a)-\frac{1}{T}&\underset{\tau \in [T]}{\sum}\mathbb{E}[g_{\tau,i}(x,a)]\Bigg| \\ &\le \Bigg \lvert\widehat{g}_{t,i}(x,a)-g^\circ_i(x,a)\Bigg \rvert + \Bigg\lvert {g}^\circ_i(x,a)-\frac{1}{T}\underset{t \in [T] }{\sum}\mathbb{E}[g_{t,i}(x,a)] \Bigg\rvert \\
		&\le \sqrt{\frac{1}{2\max\{N_t(x,a),1\}}\ln\left(\frac{2mT|X||A|}{\delta} \right) }+ \frac{C_G}{\max\{N_t(x,a),1\}}+\frac{C_G}{T},
	\end{align*}
	where the first inequality follows from the triangle inequality. This concludes the proof.
\end{proof}
For the sake of simplicity, we analyze our algorithm with respect to the total corruption of the environment, defined as the maximum between the reward and the constraints corruption. In the following, we show that this choice does not prevent the confidence set events from holding.
\begin{corollary}
	\label{cor: confintG over}
	Given a corruption guess $\widehat{C}\ge C_G$ and $\delta \in (0,1)$,  for all episodes $t\in[T]$, state-action pairs $(x,a)\in X\times A$ and constraint $i\in[m]$, with probability at least $1-\delta$, it holds:
	\begin{equation*}
		\bigg\lvert \widehat{g}_{t,i}(x,a) - \frac{1}{T} \underset{\tau \in [T]}{\sum}\mathbb{E}[g_{\tau,i}(x,a)]\bigg\rvert \le \xi_t(x,a),
	\end{equation*}  
	where,
	\begin{equation*}
		\xi_t(x,a)= \min\left\{1,\sqrt{\frac{1}{2 \max\{N_t(x,a),1\}}\ln\left( \frac{2mT|X||A|}{\delta}\right) }+\frac{\widehat{C}}{\max\{N_t(x,a),1\}}+\frac{\widehat{C}}{T}\right\}.
	\end{equation*}
\end{corollary}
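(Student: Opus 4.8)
The plan is to obtain the claim directly from Lemma~\ref{lemma: conf int G hat} by a simple monotonicity argument, rather than re-deriving the concentration bound from scratch. The key observation is that the only difference between the confidence width $\xi_t(x,a)$ appearing in Lemma~\ref{lemma: conf int G hat} and the one in the statement is that the genuine corruption $C_G$ has been replaced by the overestimate $\widehat{C}$; since $\widehat{C}\ge C_G$, the width can only grow, so the same deviation bound holds a fortiori. In effect, I would exploit that the underlying high-probability event (controlling $|\widehat{g}_{t,i}(x,a)-g_i^\circ(x,a)|$ via Lemma~\ref{lemma:auxCIG2}, and $|g_i^\circ(x,a)-\frac1T\sum_\tau\mathbb{E}[g_{\tau,i}(x,a)]|$ via Lemma~\ref{lemma:aux G Gmean and r}) does \emph{not} depend on the corruption value used to \emph{report} the width: it depends only on the true $C_G$.

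Concretely, I would first condition on the event of Lemma~\ref{lemma: conf int G hat}, which holds with probability at least $1-\delta$ and on which, for all $t\in[T]$, $(x,a)\in X\times A$ and $i\in[m]$,
\[
\bigg\lvert \widehat{g}_{t,i}(x,a) - \frac{1}{T}\sum_{\tau \in [T]}\mathbb{E}[g_{\tau,i}(x,a)]\bigg\rvert \le \min\left\{1,\sqrt{\frac{1}{2 \max\{N_t(x,a),1\}}\ln\left( \frac{2mT|X||A|}{\delta}\right) }+\frac{C_G}{\max\{N_t(x,a),1\}}+\frac{C_G}{T}\right\}.
\]
Then I would note that, because $\max\{N_t(x,a),1\}>0$, $T>0$ and $\widehat{C}\ge C_G\ge 0$, the two corruption-dependent terms satisfy $\widehat{C}/\max\{N_t(x,a),1\}\ge C_G/\max\{N_t(x,a),1\}$ and $\widehat{C}/T\ge C_G/T$, while the Azuma--Hoeffding term is left unchanged. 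Hence the argument of the $\min$ built with $\widehat{C}$ dominates the one built with $C_G$ pointwise, and since $z\mapsto\min\{1,z\}$ is nondecreasing, the clipped widths inherit the same inequality. Chaining this with the displayed bound yields the claimed inequality with $\xi_t(x,a)$ defined via $\widehat{C}$, still on the same probability-$(1-\delta)$ event.

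Since the proof amounts to a comparison of confidence widths, I do not expect any genuine obstacle; the only point requiring a one-line check is that the clipping $\min\{1,\cdot\}$ preserves the pointwise inequality between the two widths, which is immediate from its monotonicity. The very same reasoning applies verbatim to the reward confidence width, so the analogous overestimate statement for $\widehat{r}_t$ follows identically from the corresponding reward concentration lemma.
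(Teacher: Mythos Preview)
Your proposal is correct and follows essentially the same approach as the paper: both invoke the bound of Lemma~\ref{lemma: conf int G hat} with the true $C_G$ and then use the monotonicity in the corruption parameter (since $\widehat{C}\ge C_G$) to enlarge the width, on the same probability-$(1-\delta)$ event. Your explicit remark that the clipping $\min\{1,\cdot\}$ preserves the inequality is a harmless refinement the paper leaves implicit.
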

\begin{proof}
	Following the same analysis of Lemma~\ref{lemma: conf int G hat} for $\widehat{C} \ge C_G$, it holds
	\begin{align*}
		\bigg\lvert \widehat{g}_{t,i}(x,a) - &\frac{1}{T} \underset{\tau \in [T]}{\sum}\mathbb{E}[g_{\tau,i}(x,a)]\bigg\rvert \\& \le \sqrt{\frac{1}{2 \max\{N_t(x,a),1\}}\ln\left( \frac{2mT|X||A|}{\delta}\right) }+\frac{C_G}{\max\{N_t(x,a),1\}}+\frac{C_G}{T} \\
		& \le \sqrt{\frac{1}{2 \max\{N_t(x,a),1\}}\ln\left( \frac{2mT|X||A|}{\delta}\right) }+\frac{\widehat{C}}{\max\{N_t(x,a),1\}}+\frac{\widehat{C}}{T},
	\end{align*}
	which concludes the proof.
\end{proof}

\begin{corollary}
	\label{cor: CI for G}
	Taking the definition of $\xi_t$ employed in Lemma \ref{lemma: conf int G hat} 
	and defining $\mathcal{E}_G$ as the intersection event:
	\begin{align*}
		\mathcal{E}_G:=\bigg\{&\big  \lvert\widehat{g}_{t,i}(x,a)-g^\circ_i(x,a)\big \rvert \le \xi_t(x,a),~\forall (x,a) \in X\times A,\forall t \in [T],\forall i \in [m]\bigg\} \quad   \bigcap \\ & \left\{\bigg|\widehat{g}_{t,i}(x,a)-\frac{1}{T}\underset{\tau \in [T]}{\sum}\mathbb{E}[g_{\tau,i}(x,a)]\bigg| \le \xi_t(x,a),~\forall (x,a) \in X\times A,\forall t \in [T],\forall i \in [m]\right\},
	\end{align*}
	it holds that $\mathbb{P}[\mathcal{E}_G]\ge 1-\delta.$
\end{corollary}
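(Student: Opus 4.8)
The plan is to observe that both conditions defining $\mathcal{E}_G$ are consequences of a \emph{single} high-probability concentration event, so that no extra union bound is incurred and the $1-\delta$ guarantee carries over directly. Let $\mathcal{E}$ denote the event from Lemma~\ref{lemma:auxCIG2}, which guarantees that, for all $(x,a)\in X\times A$, $t\in[T]$, and $i\in[m]$,
\[
\big\lvert\widehat{g}_{t,i}(x,a)-g^\circ_i(x,a)\big\rvert \le \sqrt{\frac{1}{2\max\{N_t(x,a),1\}}\ln\left(\frac{2mT|X||A|}{\delta}\right)}+\frac{C_G}{\max\{N_t(x,a),1\}},
\]
and which holds with probability at least $1-\delta$. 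I would show the set inclusion $\mathcal{E}\subseteq\mathcal{E}_G$, which immediately yields $\mathbb{P}[\mathcal{E}_G]\ge\mathbb{P}[\mathcal{E}]\ge 1-\delta$.

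For the first condition in the definition of $\mathcal{E}_G$, I would argue that on $\mathcal{E}$ the right-hand side above is at most the expression appearing inside the $\min$ defining $\xi_t(x,a)$, since the latter only adds the nonnegative term $\nicefrac{C_G}{T}$. Combining this with the trivial bound $\lvert\widehat{g}_{t,i}(x,a)-g^\circ_i(x,a)\rvert\le 1$ (valid because both quantities lie in $[0,1]$), I obtain $\lvert\widehat{g}_{t,i}(x,a)-g^\circ_i(x,a)\rvert\le\xi_t(x,a)$ on $\mathcal{E}$.

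For the second condition, I would reproduce the argument of Lemma~\ref{lemma: conf int G hat}: applying the triangle inequality gives
\[
\bigg\lvert\widehat{g}_{t,i}(x,a)-\frac{1}{T}\sum_{\tau\in[T]}\mathbb{E}[g_{\tau,i}(x,a)]\bigg\rvert \le \big\lvert\widehat{g}_{t,i}(x,a)-g^\circ_i(x,a)\big\rvert + \bigg\lvert g^\circ_i(x,a)-\frac{1}{T}\sum_{\tau\in[T]}\mathbb{E}[g_{\tau,i}(x,a)]\bigg\rvert.
\]
The first summand is bounded on $\mathcal{E}$ by Lemma~\ref{lemma:auxCIG2}, while the second is bounded \emph{deterministically} by $\nicefrac{C_G}{T}$ via Lemma~\ref{lemma:aux G Gmean and r}. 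Their sum equals the bracketed expression in $\xi_t(x,a)$, and intersecting once more with the trivial bound $\le 1$ gives $\le\xi_t(x,a)$.

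The only subtlety---and the point worth emphasizing to avoid a suboptimal constant---is that both conditions derive from the \emph{same} probabilistic event $\mathcal{E}$ plus the purely deterministic Lemma~\ref{lemma:aux G Gmean and r}. A naive approach would invoke Lemma~\ref{lemma:auxCIG2} and Lemma~\ref{lemma: conf int G hat} as two separate $(1-\delta)$-events and union-bound them to $(1-2\delta)$; recognizing that they are driven by identical randomness is what preserves the sharper $1-\delta$ guarantee claimed in the statement.
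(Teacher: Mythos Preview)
Your proposal is correct and matches the paper's intended reasoning: the paper states the corollary without an explicit proof, treating it as an immediate consequence of Lemma~\ref{lemma:auxCIG2} (the single probabilistic event) together with the deterministic Lemma~\ref{lemma:aux G Gmean and r}. You have made explicit precisely the observation needed, namely that both clauses of $\mathcal{E}_G$ are implied by the same high-probability event $\mathcal{E}$, so no additional union bound is required and the $1-\delta$ guarantee is preserved.
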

Notice that by Corollary~\ref{cor: confintG over}, $\mathcal{E}_G$ includes all the analogous events where $\xi_t$ is built employing an arbitrary adversarial corruption $\widehat{C}$ such that $\widehat{C}\ge C_G$.

In the following, we provide similar results for the reward function.
\begin{lemma}
	\label{lemma: conf int r hat}
	Given $\delta \in (0,1)$, for all episodes $t\in[T]$ and for all state-action pairs $(x,a)\in X\times A$, with probability at least $1-\delta$, it holds:
	\begin{equation*}
		\left\lvert \widehat{r}_{t}(x,a) - \frac{1}{T} \underset{\tau \in [T]}{\sum}\mathbb{E}[r_{\tau}(x,a)]\right\rvert \le \phi_t(x,a),
	\end{equation*}  
	where,
	\begin{equation*}
		\phi_t(x,a):= \min\left\{1,\sqrt{\frac{1}{2 \max\{N_t(x,a),1\}}\ln\left( \frac{2T|X||A|}{\delta}\right) }+\frac{C_r}{\max\{N_t(x,a),1\}}+\frac{C_r}{T}\right\}.
	\end{equation*}
\end{lemma}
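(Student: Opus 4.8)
The plan is to mirror exactly the argument already used for the constraint-cost analog in Lemma~\ref{lemma: conf int G hat}, since the reward case has the same structure. First I would dispose of the trivial regime: if $\phi_t(x,a)=1$, then the bound holds automatically because $r_\tau(x,a)\in[0,1]$ forces both $\widehat{r}_t(x,a)$ and $\frac{1}{T}\sum_{\tau\in[T]}\mathbb{E}[r_\tau(x,a)]$ to lie in $[0,1]$, so their absolute difference is at most $1$. It then suffices to treat the nontrivial case where the quantity inside the $\min$ is at most $1$.

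For the nontrivial case, the key step is to introduce the ``fictitious'' non-corrupted reward $r^\circ(x,a)$ as an intermediate point and apply the triangle inequality:
\[
\left\lvert \widehat{r}_{t}(x,a) - \frac{1}{T}\sum_{\tau\in[T]}\mathbb{E}[r_{\tau}(x,a)]\right\rvert
\le \left\lvert \widehat{r}_{t}(x,a) - r^\circ(x,a)\right\rvert
+ \left\lvert r^\circ(x,a) - \frac{1}{T}\sum_{\tau\in[T]}\mathbb{E}[r_{\tau}(x,a)]\right\rvert.
\]
The first term is controlled, with probability at least $1-\delta$, by Lemma~\ref{lemma:auxCIr2}, which already bundles in the Hoeffding concentration, the corruption offset $\frac{C_r}{\max\{N_t(x,a),1\}}$, and the union bound over all $(x,a)\in X\times A$ and $t\in[T]$ (hence the $\ln(\nicefrac{2T|X||A|}{\delta})$ factor). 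The second term is a deterministic bound: Lemma~\ref{lemma:aux G Gmean and r} gives $\left\lvert r^\circ(x,a)-\frac{1}{T}\sum_{\tau}\mathbb{E}[r_{\tau}(x,a)]\right\rvert\le \frac{C_r}{T}$, directly from the definition of $C_r$ as the minimizer attained at $r^\circ$.

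Summing the two bounds yields exactly
\[
\sqrt{\frac{1}{2\max\{N_t(x,a),1\}}\ln\!\left(\frac{2T|X||A|}{\delta}\right)}
+\frac{C_r}{\max\{N_t(x,a),1\}}+\frac{C_r}{T},
\]
which matches $\phi_t(x,a)$ in the nontrivial regime, completing the argument. I do not anticipate a genuine obstacle here, as all the probabilistic work (the Hoeffding bound and the union bounds) is already absorbed into Lemma~\ref{lemma:auxCIr2}; the only care required is to ensure the failure probability is accounted for exactly once, which is guaranteed because the second term is deterministic and contributes no additional probabilistic slack.
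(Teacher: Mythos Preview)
Your proposal is correct and matches the paper's proof essentially line for line: triangle inequality through $r^\circ(x,a)$, then Lemma~\ref{lemma:auxCIr2} for the first term and Lemma~\ref{lemma:aux G Gmean and r} for the second, with the trivial case $\phi_t(x,a)=1$ handled separately. There is nothing to add.
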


\begin{proof}
	
	Employing Lemma~\ref{lemma:aux G Gmean and r} and Lemma~\ref{lemma:auxCIr2}, with probability at least $1-\delta$, for all $(x,a) \in X \times A$ and for all $t \in [T]$, it holds:
	\begin{align*}
		\bigg|\widehat{r}_{t}(x,a)-\frac{1}{T}\underset{\tau \in [T]}{\sum}\mathbb{E}&[r_{\tau}(x,a)]\bigg| \\&\le \bigg \lvert\widehat{r}_{t}(x,a)-r^\circ(x,a)\bigg \rvert + \bigg\lvert {r}^\circ(x,a)-\frac{1}{T}\underset{t \in [T] }{\sum}\mathbb{E}[r_{t}(x,a)] \bigg\rvert \\
		&\le \sqrt{\frac{1}{2\max\{N_t(x,a),1\}}\ln\left(\frac{2T|X||A|}{\delta} \right) }+ \frac{C_r}{\max\{N_t(x,a),1\}}+\frac{C_r}{T},
	\end{align*}
	where the first inequality follows from the triangle inequality. Noticing that, by construction, $$\bigg|\widehat{r}_{t}(x,a)-\frac{1}{T}\underset{\tau \in [T]}{\sum}\mathbb{E}[r_{\tau}(x,a)]\bigg| \le 1,$$ for all episodes $t\in[T]$ and $(x,a)\in X\times A$ concludes the proof.
\end{proof}
We conclude the section, showing the overestimating the reward corruption does not invalidate the confidence set estimation.
\begin{corollary}
	\label{cor: conf int r over}
	Given a corruption guess $\widehat{C}\ge C_r$ and $\delta \in (0,1)$,  for all episodes $t\in[T]$ and for all state-action pairs $(x,a)\in X\times A$, with probability at least $1-\delta$, it holds:
	\begin{equation*}
		\bigg\lvert \widehat{r}_{t}(x,a) - \frac{1}{T} \underset{\tau \in [T]}{\sum}\mathbb{E}[r_{\tau}(x,a)]\bigg\rvert \le \phi_t(x,a),
	\end{equation*}  
	where,
	\begin{equation*}
		\phi_t(x,a):= \min\left\{1,\sqrt{\frac{1}{2 \max\{N_t(x,a),1\}}\ln\left( \frac{2T|X||A|}{\delta}\right) }+\frac{\widehat{C}}{\max\{N_t(x,a),1\}}+\frac{\widehat{C}}{T}\right\}.
	\end{equation*}
\end{corollary}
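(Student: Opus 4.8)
The plan is to reduce everything to Lemma~\ref{lemma: conf int r hat}, which has already been established, and then exploit monotonicity of the confidence radius in the corruption parameter, exactly mirroring how Corollary~\ref{cor: confintG over} was obtained from Lemma~\ref{lemma: conf int G hat}. First I would apply Lemma~\ref{lemma: conf int r hat} directly: for any fixed $\delta\in(0,1)$, with probability at least $1-\delta$ it holds, simultaneously for all $t\in[T]$ and all $(x,a)\in X\times A$, that
\[
\left\lvert \widehat{r}_{t}(x,a) - \frac{1}{T} \sum_{\tau \in [T]}\mathbb{E}[r_{\tau}(x,a)]\right\rvert \le \widetilde{\phi}_t(x,a),
\]
where $\widetilde{\phi}_t(x,a)$ is defined exactly as the $\phi_t(x,a)$ in the statement but with the true reward corruption $C_r$ in place of the guess $\widehat{C}$. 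This already matches the target inequality up to the value used inside the confidence radius.

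The second and only remaining step is to observe that the confidence radius is nondecreasing in its corruption parameter. Indeed, viewing
\[
C \mapsto \min\left\{1,\sqrt{\frac{1}{2 \max\{N_t(x,a),1\}}\ln\left( \frac{2T|X||A|}{\delta}\right) }+\frac{C}{\max\{N_t(x,a),1\}}+\frac{C}{T}\right\}
\]
as a function of $C\ge 0$, both additive terms $\frac{C}{\max\{N_t(x,a),1\}}$ and $\frac{C}{T}$ are increasing in $C$, and the outer truncation $\min\{1,\cdot\}$ preserves monotonicity. Hence, since by hypothesis $\widehat{C}\ge C_r$, we obtain $\widetilde{\phi}_t(x,a)\le \phi_t(x,a)$ for every $t$ and $(x,a)$, where $\phi_t(x,a)$ is the radius built with $\widehat{C}$. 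Chaining this with the inequality from the first step yields the claim on the same probability-$(1-\delta)$ event.

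There is essentially no hard part here: the argument is a one-line monotonicity observation layered on top of the previously proved Lemma~\ref{lemma: conf int r hat}. The only point requiring a moment of care is confirming that the $\min\{1,\cdot\}$ truncation does not break monotonicity, which it does not, since $\min\{1,\cdot\}$ is a nondecreasing function of its argument; the high-probability guarantee is inherited unchanged from Lemma~\ref{lemma: conf int r hat} because we do not alter the event, only relax the deterministic bound it certifies.
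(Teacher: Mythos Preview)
Your proof is correct and follows essentially the same approach as the paper: apply Lemma~\ref{lemma: conf int r hat} to obtain the bound with the true corruption $C_r$, then use monotonicity of the confidence radius in the corruption parameter to replace $C_r$ by the larger guess $\widehat{C}$. The paper simply states that the argument is analogous to Corollary~\ref{cor: confintG over}, which is precisely what you carried out.
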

\begin{proof}
	The proof is analogous to the one of Corollary \ref{cor: confintG over}.
\end{proof}

\begin{corollary}
	\label{cor: CI for r}
	Taking the definition of $\phi_t$ employed in Lemma \ref{lemma: conf int r hat} 
	and defining $\mathcal{E}_r$ as the intersection event:
	\begin{align*}
	\mathcal{E}_r:=\bigg\{\big \lvert\widehat{r}_{t}(x,a)-&r^\circ(x,a)\big \rvert \le \phi_t(x,a),~\forall(x,a)\in X\times A,\forall t \in [T]\bigg\} \quad   \bigcap \\ & \left\{\bigg|\widehat{r}_{t}(x,a)-\frac{1}{T}\underset{\tau \in [T]}{\sum}\mathbb{E}[r_{\tau}(x,a)]\bigg| \le \phi_t(x,a),~\forall(x,a)\in X\times A,\forall t \in [T]\right\},
	\end{align*}
	it holds that $\mathbb{P}[\mathcal{E}_r]\ge 1-\delta.$
\end{corollary}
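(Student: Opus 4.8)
The plan is to observe that the two statements defining $\mathcal{E}_r$ are both \emph{deterministic} consequences of a single concentration event, so that no second union bound is incurred and the probability budget $\delta$ already spent in Lemma~\ref{lemma:auxCIr2} suffices. Concretely, let $\mathcal{A}$ denote the event that, for all $(x,a)\in X\times A$ and all $t\in[T]$,
\[
\bigg|\widehat{r}_{t}(x,a)-r^\circ(x,a)\bigg| \le \sqrt{\frac{1}{2\max\{N_t(x,a),1\}}\ln\left(\frac{2T|X||A|}{\delta}\right)} + \frac{C_r}{\max\{N_t(x,a),1\}}.
\]
By Lemma~\ref{lemma:auxCIr2} we have $\mathbb{P}[\mathcal{A}]\ge 1-\delta$, since the $\tfrac{2T|X||A|}{\delta}$ factor inside the logarithm already encodes the union bound over state-action pairs, episodes, and the possible values of $N_t(x,a)$.

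Next I would argue that $\mathcal{A}\subseteq\mathcal{E}_r$. For the first constituent event, on $\mathcal{A}$ the right-hand side above is bounded by $\sqrt{\cdots}+\tfrac{C_r}{\max\{N_t(x,a),1\}}+\tfrac{C_r}{T}$, adding the nonnegative term $\tfrac{C_r}{T}$, and it is also at most $1$ because $\widehat r_t(x,a),r^\circ(x,a)\in[0,1]$; taking the minimum of these two upper bounds yields exactly $\phi_t(x,a)$, so $|\widehat r_t(x,a)-r^\circ(x,a)|\le\phi_t(x,a)$ holds on $\mathcal{A}$. For the second constituent event, I would reproduce the triangle-inequality step in the proof of Lemma~\ref{lemma: conf int r hat}: combining the bound guaranteed on $\mathcal{A}$ with the deterministic estimate $|r^\circ(x,a)-\tfrac1T\sum_{\tau}\mathbb{E}[r_\tau(x,a)]|\le \tfrac{C_r}{T}$ from Lemma~\ref{lemma:aux G Gmean and r} gives $|\widehat r_t(x,a)-\tfrac1T\sum_{\tau}\mathbb{E}[r_\tau(x,a)]|\le\phi_t(x,a)$ on $\mathcal{A}$ as well.

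Since both events in the intersection defining $\mathcal{E}_r$ hold on $\mathcal{A}$, we conclude $\mathcal{A}\subseteq\mathcal{E}_r$ and therefore $\mathbb{P}[\mathcal{E}_r]\ge\mathbb{P}[\mathcal{A}]\ge 1-\delta$. The only point requiring care — and the reason the corollary is not a mere union of Lemmas~\ref{lemma:auxCIr2} and~\ref{lemma: conf int r hat} — is that the two inequalities must be shown to follow from the \emph{same} random event rather than from two separately-randomized applications; otherwise a naive union bound would cost $2\delta$ instead of $\delta$. Because Lemma~\ref{lemma: conf int r hat} is itself derived from Lemma~\ref{lemma:auxCIr2} purely through the deterministic Lemma~\ref{lemma:aux G Gmean and r}, this sharing of randomness is immediate, and the whole argument is the reward analogue of Corollary~\ref{cor: CI for G}.
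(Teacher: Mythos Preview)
Your proposal is correct and is exactly the argument the paper intends: the corollary is stated without proof, but its content is precisely that the event of Lemma~\ref{lemma:auxCIr2} deterministically implies both inequalities defining $\mathcal{E}_r$ (the first by adding the nonnegative $C_r/T$ term and capping at $1$, the second via the triangle-inequality step of Lemma~\ref{lemma: conf int r hat} together with the deterministic Lemma~\ref{lemma:aux G Gmean and r}), so no additional union bound is needed and the single $\delta$ budget suffices.
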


Notice that by Corollary~\ref{cor: conf int r over}, $\mathcal{E}_r$ includes all the analogous events where $\phi_t$ is built employing an arbitrary adversarial corruption $\widehat{C}$ such that $\widehat{C}\ge C_r$.

\section{Omitted proofs when the corruption is known}
\label{app:known_c}
In the following, we provide the main results attained by Algorithm~\ref{alg:NS_UCSPS} in term of regret and constraints violations. The following results hold when the corruption of the environment is known to the learner.

We start providing a preliminary result, which shows that the linear program solved by Algorithm~\ref{alg:NS_UCSPS} at each $t\in[T]$ admits a feasible solution, with high probability.
\begin{lemma} 
	\label{lemma: aux regret}
	For any $\delta\in (0,1)$, for all episodes $t \in [T]$, with probability at least $1-5\delta$, the space defined by linear constraints $\left\{q \in \Delta(\mathcal{P}_t):\underline{G}_t^\top q \le \alpha\right\}$ admits a feasible solution and  it holds:
	\begin{equation*}
		\left\{q\in\Delta(P):\overline G^\top q \leq \alpha\right\} \subseteq \left\{q \in \Delta(\mathcal{P}_t):\underline{G}_t^\top q \le \alpha\right\} .
	\end{equation*}
\end{lemma}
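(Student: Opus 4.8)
The plan is to condition on the intersection of the transition-confidence event $\mathcal{E}_P$ and the constraint-confidence event $\mathcal{E}_G$, and to show that on this intersection both assertions hold deterministically. Since $\mathcal{E}_P$ holds with probability at least $1-4\delta$ and $\mathcal{E}_G$ with probability at least $1-\delta$ (Corollary~\ref{cor: CI for G}), a union bound gives $\mathbb{P}[\mathcal{E}_P \cap \mathcal{E}_G] \ge 1-5\delta$, matching the claimed confidence level. The inclusion will rest on two ingredients: that the lower confidence matrix $\underline{G}_t$ systematically \emph{underestimates} the average cost matrix $\overline{G}$ entrywise, and that the true occupancy polytope $\Delta(P)$ is contained in $\Delta(\mathcal{P}_t)$.

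For the constraint ingredient, I would start from event $\mathcal{E}_G$, which guarantees $|\widehat{g}_{t,i}(x,a) - [\overline{G}]_i(x,a)| \le \xi_t(x,a)$ for every $(x,a)$, $i \in [m]$, and $t \in [T]$, where $[\overline{G}]_i(x,a) = \frac{1}{T}\sum_{\tau\in[T]}\mathbb{E}[g_{\tau,i}(x,a)]$. Recalling the definition $\underline{g}_{t,i}(x,a) = \widehat{g}_{t,i}(x,a) - \xi_t(x,a)$, this immediately yields $\underline{g}_{t,i}(x,a) \le [\overline{G}]_i(x,a)$, i.e.\ $\underline{G}_t \le \overline{G}$ entrywise. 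Now fix any $q \in \Delta(P)$ with $\overline{G}^\top q \le \alpha$. Since occupancy measures are non-negative, pairing the entrywise inequality with $q \ge 0$ preserves its direction, so for every constraint $i$ we obtain $[\underline{G}_t^\top q]_i = \sum_{(x,a)} \underline{g}_{t,i}(x,a)\, q(x,a) \le \sum_{(x,a)} [\overline{G}]_i(x,a)\, q(x,a) = [\overline{G}^\top q]_i \le \alpha_i$. Hence $\underline{G}_t^\top q \le \alpha$.

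For the transition ingredient, on event $\mathcal{E}_P$ we have $P \in \mathcal{P}_t$, so every occupancy measure valid for the true transition function $P$ is valid for some transition in $\mathcal{P}_t$; by Lemma~\ref{lem:occupancy_rosenberg} this gives $\Delta(P) \subseteq \Delta(\mathcal{P}_t)$. Combining the two ingredients, the chosen $q$ lies in $\Delta(\mathcal{P}_t)$ and satisfies $\underline{G}_t^\top q \le \alpha$, which establishes the set inclusion. Feasibility of the right-hand set then follows for free: the offline optimum $q^*$ belongs to the left-hand set (it is feasible for Program~\eqref{lp:offline_opt} by definition), so by the inclusion it witnesses non-emptiness of $\{q \in \Delta(\mathcal{P}_t) : \underline{G}_t^\top q \le \alpha\}$.

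The only delicate point, and the step I expect to require the most care, is the direction-of-inequality bookkeeping: the entire argument hinges on using a \emph{lower} confidence bound for the costs so that $\underline{G}_t \le \overline{G}$, and then exploiting $q \ge 0$ to carry this inequality through the inner product against $q$. This is precisely why Algorithm~\ref{alg:NS_UCSPS} optimizes over $\underline{G}_t$ rather than $\widehat{G}_t$, and it is the mechanism that later lets every truly constraint-satisfying occupancy measure, including $q^*$, survive in the feasible region of Program~\eqref{lp:opt_opt}.
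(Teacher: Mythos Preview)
Your proof is correct and follows essentially the same approach as the paper: condition on $\mathcal{E}_P \cap \mathcal{E}_G$, use $\mathcal{E}_P$ to get $\Delta(P)\subseteq\Delta(\mathcal{P}_t)$, use $\mathcal{E}_G$ to get $\underline{G}_t \le \overline{G}$ entrywise and hence the constraint-set inclusion, and then observe that any feasible point of the offline problem (you pick $q^*$) witnesses non-emptiness of the right-hand set. Your write-up is actually more explicit than the paper's about why the entrywise inequality carries through the inner product (nonnegativity of $q$), which is a nice touch.
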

\begin{proof}
	Under the event $\mathcal{E}_{P}$, we have that  $\Delta(P)\subseteq\Delta(\mathcal{P}_t)$, for all episodes $t \in [T]$. Similarly, under the event $\mathcal{E}_G$, it holds that $\left\{q:\frac{1}{T}\underset{t \in [T]}{\sum}\mathbb{E}[G_t]^\top q \le \alpha\right\}\subseteq\left\{q: \underline{G}_t^\top q \le \alpha\right\}$. This implies that any feasible solution of the offline problem, is included in the optimistic safe set $\left\{q \in \Delta(\mathcal{P}_t):\underline{G}_t^\top q \le \alpha\right\}$.
	Taking the intersection event $\mathcal{E}_{P}\cap\mathcal{E}_G$ concludes the proof.
\end{proof}

We are now ready to provide the violation bound attained by Algorithm~\ref{alg:NS_UCSPS}.
\violationbound*
\begin{proof}
	In the following, we will refer as $\mathcal{E}_{\widehat q \ }$ to the event described in Lemma \ref{lem:transition_jin}, which holds with probability at least $1-6\delta$ .
	Thus, under $\mathcal{E}_G\cap \mathcal{E}_{\widehat q \ }$, the linear program solved by Algorithm~\ref{alg:NS_UCSPS} has a feasible solution (see Lemma~\ref{lemma: aux regret}) and it holds:
	\begin{subequations}
		\begin{align}
		V_T& = \underset{i \in [m]}{\max}\underset{t \in [T]}{\sum} \left[\mathbb{E}[G_t] ^\top q_t-\alpha\right]_i^+\nonumber \\ & = \underset{i \in [m]}{\max}\underset{t \in [T]}{\sum} \left[\left(\mathbb{E}[g_{t,i}]-g^\circ_i\right)^\top q_t +g^\circ_i{}^\top q_t -\alpha_i\right]^+ \nonumber
		\\ & \le
		\underset{i \in [m]}{\max}\underset{t \in [T]}{\sum}\left[ \left(\mathbb{E}[g_{t,i}]-g^\circ_i\right)^\top q_t +\left(\underline{g}_{t-1,i}+2\xi_{t-1}\right)^\top q_t -\alpha_i\right]^+ \label{eq:2 xi}\\
		& =
		\underset{i \in [m]}{\max}\underset{t \in [T]}{\sum}\left[ \left(\mathbb{E}[g_{t,i}]-g^\circ_i\right)^\top q_t +\underline{g}_{t-1,i}^\top \left(q_t-\widehat{q}_t\right)+\underline{g}_{t-1,i}^\top\widehat{q}_t +2\xi_{t-1}^\top q_t -\alpha_i\right]^+  \nonumber\\
		& \le \underset{i \in [m]}{\max}\underset{t \in [T]}{\sum}\left[ \left(\mathbb{E}[g_{t,i}]-g^\circ_i\right)^\top q_t +\underline{g}_{t-1,i}^\top \left(q_t-\widehat{q}_t\right) +2\xi_{t-1}^\top q_t  \right]^+ \label{eq: q space}\\
		& \le \underset{i \in [m]}{\max}\underset{t \in [T]}{\sum}\left| \left(\mathbb{E}[g_{t,i}]-g^\circ_i\right)^\top q_t\right| + 2\underset{i \in [m]}{\max}\underset{t \in [T]}{\sum}\left|\xi_{t-1}^\top q_t\right| + \underset{i \in [m]}{\max}\underset{t \in [T]}{\sum}\left|\underline{g}_{t-1,i}^\top \left(q_t-\widehat{q}_t\right)\right| \label{eq: viol +}\\
		& \le \underset{i \in [m]}{\max}\underset{t \in [T]}{\sum}\left\lVert \mathbb{E}[g_{t,i}]-g^\circ_i\right\rVert_1 + 2\underset{i \in [m]}{\max}\underset{t \in [T]}{\sum}\xi_{t-1}^\top q_t + \underset{i \in [m]}{\max}\underset{t \in [T]}{\sum} \left\lVert q_t-\widehat{q}_t\right\rVert_1 \label{eq: viol Holder} \\
		& \le C_G + 2\underset{i \in [m]}{\max}\underset{t \in [T]}{\sum}\xi_{t-1}^\top q_t + \underset{t \in [T]}{\sum} \lVert q_t-\widehat{q}_t\rVert_1 ,\label{eq: viol from def C^G}
	\end{align}
	\end{subequations}
	where Inequality~\eqref{eq:2 xi} follows from Corollary~\ref{cor: CI for G}, Inequality~\eqref{eq: q space} holds since Algorithm~\ref{alg:NS_UCSPS} ensures, for all $ t \in [T]$ and for all $i\in[m]$, that $\underline{g}_{t,i}^\top \widehat{q}_t \le \alpha_i$,  Inequality~\eqref{eq: viol +} holds since $[a+b]^+ \le |a|+|b|$, for all $a,b \in \mathbb{R}$, Inequality~\eqref{eq: viol Holder} follows from Hölder inequality since $||\underline{g}_{t,i}(x,a)||_\infty \le 1$ and $||q_t(x,a)||_\infty \le 1$, and finally Equation~\eqref{eq: viol from def C^G} holds  for the definition of $C_G$.
	
	To bound the last term of Equation~\eqref{eq: viol from def C^G}, we notice that, under $\mathcal{E}_{\widehat{q}}$, by Lemma~\ref{lem:transition_jin}, it holds:
	\begin{equation*}
		\underset{t \in [T]}{\sum} \lVert q_t-\widehat{q}_t\rVert_1 = \mathcal{O}\left(L|X| \sqrt{|A| T \ln \left(\frac{T|X||A|}{\delta}\right)}\right).
	\end{equation*}
	To bound the second term of Equation~\eqref{eq: viol from def C^G} we proceed as follows. Under $\mathcal{E}_{\widehat{q}}$ ,with probability at least $1-\delta$, it holds:
	\begin{subequations}
	\begin{align}
		&\sum_{t\in[T]}\xi_{t-1}^\top q_t  
		\leq  \sum_{t\in[T]}\sum_{x,a}\xi_{t-1}(x,a) \mathbb{I}_t(x,a) + L\sqrt{2T\ln\frac{1}{\delta}} \label{cum_vio:eq2_unknown}\\
		& \leq \underset{x,a}{\sum}\underset{t\in [T]}{\sum}\mathbb{I}_t(x,a)\Bigg(\sqrt{\frac{1}{2\max\{N_{t-1}(x,a),1\}}\ln\left(\frac{2mT|X||A|}{\delta}\right)} + \nonumber\\ & \mkern180mu +\frac{C_G}{\max\{N_{t-1}(x,a),1\}}+\frac{C_G}{T}\Bigg) +  L\sqrt{2T\ln\frac{1}{\delta}}\label{eq: from def xi}\\
		& \le \sqrt{\frac{1}{2}\ln\left(\frac{2mT|X||A|}{\delta}\right)} \underset{x,a}{\sum}\underset{t\in [T]}{\sum}\mathbb{I}_t(x,a)\sqrt{\frac{1}{\max\{N_{t-1}(x,a),1\}}} + \nonumber\\
		& \mkern180mu +C_G\underset{x,a}{\sum}{\sum_{t\in[T]}}\left( \frac{\mathbb{I}_t(x,a) }{\max\{N_{t-1}(x,a),1\}}+\frac{1}{T}\right) +  L\sqrt{2T\ln\frac{1}{\delta}} \nonumber\\
		& \le 3  \sqrt{\frac{1}{2}|X| |A|LT\ln\left(\frac{2mT|X||A|}{\delta}\right)}+ |X||A|(2+\ln(T))C_G + |X||A|C_G +  L\sqrt{2T\ln\frac{1}{\delta}}\label{eq:sqrtT and 1/t}\\
		& \le 3  \sqrt{\frac{1}{2}|X| |A|LT\ln\left(\frac{2mT|X||A|}{\delta}\right)}+ (3+\ln(T))|X||A|C_G +   L\sqrt{2T\ln\frac{1}{\delta}} \nonumber \\
		& = \mathcal{O}\left(\sqrt{|X||A|LT\ln\left(\frac{mT|X||A|}{\delta}\right)}+\ln(T)|X||A|C_G\right) \nonumber,
	\end{align}
\end{subequations}
	where Inequality~\eqref{cum_vio:eq2_unknown} follows from the Azuma-Hoeffding inequality  and noticing that $\sum_{x,a}\xi_{t-1}(x,a) q_t(x,a)\leq L$, Equality~\eqref{eq: from def xi} follows from the definition of $\xi_t$ and finally,
	Inequality~\eqref{eq:sqrtT and 1/t} holds since $1+\sum_{t=1}^{N_T(x,a)}\sqrt{\frac{1}{t}}\le 1+2\sqrt{N_T(x,a)} \le 3 \sqrt{N_T(x,a)}$ , since $1+\sum_{t=1}^{N_{T}(x,a)}\frac{1}{t}\le 2+\ln(T)$ and by  Cauchy-Schwarz inequality.
	Finally, we notice that the intersection event $\mathcal{E}_G\cap \mathcal{E}_{\widehat q \ }\cap\mathcal{E}_{\text{Azuma}}$ holds with the following probability,
	\begin{align*}
		\mathbb{P}\left[ \mathcal{E}_G\cap \mathcal{E}_{\widehat q \ }\cap\mathcal{E}_{\text{Azuma}}\right]  & = 1-\mathbb{P}\left[ \mathcal{E}_G^C\cup \mathcal{E}_{\widehat q \ }^C\cup \mathcal{E}_{\text{Azuma}}^C\right] \\ &\ge 1-\left(\mathbb{P}\left[ \mathcal{E}_G^C\right]+\mathbb{P}\left[  \mathcal{E}_{\widehat q \ }^C\right]+\mathbb{P}\left[  \mathcal{E}_{\text{Azuma}}^C\right]\right) \\& \ge 1-8\delta. 
	\end{align*}
	Noticing that, by Corollary \ref{cor: confintG over}, what holds for a $\xi_t$ built with corruption value $C_G$, still holds for a higher corruption (by definition, $C\ge C_G$) concludes the proof.
\end{proof}

We conclude the section providing the regret bound attained by Algorithm~\ref{alg:NS_UCSPS}.
\regretknownCG*
\begin{proof}
	First, we notice that under the event $\mathcal{E}_r$ it holds that, for all $ (x,a) \in X \times A$ and for all $t \in [T]$:
	\begin{equation*}
		\overline{r}_t(x,a)- 2 \phi_t(x,a) \le \frac{1}{T}\sum_{t \in [T]}\mathbb{E}[r_t(x,a)]. 
	\end{equation*}
	Let's observe that, by Lemma~\ref{lemma: aux regret}, under the event $\mathcal{E}_G \cap \mathcal{E}_P$, $\widehat{q}_t$ is optimal solution for $\overline{r}_{t-1}$ in $\left\{q \in \Delta(\mathcal{P}_t):\underline{G}_t^\top q \le \alpha\right\}$. Thus, under $\mathcal{E}_G \cap \mathcal{E}_P$ the optimal feasible solution $q^*$ is such that:
	\begin{equation*}
		\overline{r}_{t-1}^\top \widehat{q}_t \ge \overline{r}_{t-1}^\top q^*.
	\end{equation*}
	Thus under the event $\mathcal{E}_r$, it holds:
	\begin{align*}
		\frac{1}{T}\sum_{t \in [T]}\mathbb{E}[r_t]^\top q^* &\le \overline{r}_{t-1}^\top q^* \\ &\le \overline{r}_{t-1}^\top \widehat{q}_t \\ & \le \left(\frac{1}{T}\sum_{t \in [T]}\mathbb{E}[r_t] + 2 \phi_{t-1}\right)^\top \widehat{q}_t.
	\end{align*}
	Thus, we can rewrite the regret (under the event $\mathcal{E}_G \cap \mathcal{E}_r \cap \mathcal{E}_P$) as,
	\begin{align*}
		R_T & = \sum_{t\in[T]} \mathbb{E}[r_{t}]^{\top}   (q^*-    q_t)\\
		&=\sum_{t\in[T]}  \frac{1}{T} \sum_{\tau \in [T]}\mathbb{E}[r_{\tau}]^{\top}   (q^*-    q_t) + \sum_{t\in[T]}\left(\mathbb{E}[r_{t}]-\overline{r}\right)^\top (q^*-    q_t)  \\
		& = \sum_{t\in[T]}  \frac{1}{T} \sum_{\tau \in [T]}\mathbb{E}[r_{\tau}]^{\top}   (q^*- \widehat q_t + \widehat q_t-   q_t)  + \sum_{t\in[T]}\left(\mathbb{E}[r_{t}]-r^\circ +r^\circ -\overline{r}\right)^\top (q^*-    q_t) \\
		& \le \sum_{t\in[T]}  \left[\frac{1}{T} \sum_{\tau \in [T]}\mathbb{E}\left[r_{\tau}\right]^{\top}   (q^*- \widehat{q}_t)\right] + \sum_{t\in[T]} \lVert \widehat{q}_t - q_t \rVert_1 + \sum_{t \in [T]}\lVert\mathbb{E}[r_{t}]-r^\circ\rVert_1 + \sum_{t \in [T]}\lVert r^\circ -\overline{r}\rVert_1\\
		& \le \sum_{t\in[T]} 2\phi_{t-1}^\top q_t + \sum_{t\in[T]} \lVert \widehat{q}_t - q_t \rVert_1 + 2 C_r.
	\end{align*}
	
	By Lemma \ref{lem:prob_int_jin} with probability at least $1-6\delta$ under event $\mathcal{E}_{\widehat{q}}$ we can bound $\sum_{t \in [T]} \lVert \widehat{q}_t - q_t \rVert_1$ as:
	\begin{equation*}
		\sum_{t\in[T]} \lVert \widehat{q}_t - q_t \rVert_1 = \mathcal{O}\left(L|X|\sqrt{|A|T \ln \left(\frac{T |X||A|}{\delta}\right)}\right).
	\end{equation*}
	
	Finally with probability at least $1-\delta$ it holds:
	\begin{subequations}
	\begin{align}
		&\sum_{t\in[T]}\phi_{t-1}^\top q_t 
		 \leq  \sum_{t\in[T]}\sum_{x,a}\phi_{t-1}(x,a) \mathbb{I}_t(x,a) + L\sqrt{2T\ln\frac{1}{\delta}} \label{reg:eq2_unknown}\\
		& \leq \underset{x,a}{\sum}\underset{t\in [T]}{\sum}\mathbb{I}_t(x,a)\Bigg(\sqrt{\frac{1}{2\max\{N_{t-1}(x,a),1\}}\ln\left(\frac{2T|X||A|}{\delta}\right)} + \nonumber\\ & \mkern170mu +\frac{C_r}{\max\{N_{t-1}(x,a),1\}}+\frac{C_r}{T}\Bigg) +  L\sqrt{2T\ln\frac{1}{\delta}}\label{eq: from def phi}\\
		& \le \sqrt{\frac{1}{2}\ln\left(\frac{2T|X||A|}{\delta}\right)} \underset{x,a}{\sum}\underset{t\in [T]}{\sum}\mathbb{I}_t(x,a)\sqrt{\frac{1}{\max\{N_{t-1}(x,a),1\}}} + \nonumber\\
		& \mkern170mu +C_r\underset{x,a}{\sum}\underset{t\in[T]}{{\sum}}\left( \frac{\mathbb{I}_t(x,a) }{\max\{N_{t-1}(x,a),1\}}+\frac{1}{T}\right) +  L\sqrt{2T\ln\frac{1}{\delta}} \nonumber\\
		& \le 3  \sqrt{\frac{1}{2}|X| |A|LT\ln\left(\frac{2T|X||A|}{\delta}\right)}+ |X||A|(2+\ln(T))C_r + |X||A|C_r +  L\sqrt{2T\ln\frac{1}{\delta}}\label{eq:sqrtT and 1/t 2}\\
		& \le 3  \sqrt{\frac{1}{2}|X| |A|LT\ln\left(\frac{2T|X||A|}{\delta}\right)}+ (3+\ln(T))|X||A|C_r +   L\sqrt{2T\ln\frac{1}{\delta}} \nonumber \\
		& = \mathcal{O}\left(\sqrt{|X||A|LT\ln\left(\frac{T|X||A|}{\delta}\right)}+\ln(T)|X||A|C_r\right) \nonumber,
	\end{align}
	\end{subequations}
	where Inequality~\eqref{reg:eq2_unknown} follows from Azuma-Hoeffding inequality, Equality~\eqref{eq: from def phi} holds for the definition of $\phi_t$, and Inequality~\eqref{eq:sqrtT and 1/t 2} holds since $1+\sum_{t=1}^{N_T(x,a)}\sqrt{\frac{1}{t}}\le 1+2\sqrt{N_T(x,a)} \le 3 \sqrt{N_T(x,a)}$, $1+\sum_{t=1}^{N_{T}(x,a)}\frac{1}{t}\le 2+\ln(T)$ and by Cauchy-Schwarz inequality.
	Thus, we observe that with probability at least $1-9\delta$ it holds:
	\begin{equation*}
		R_T= \mathcal{O}\left(L|X| \sqrt{|A|T \ln \left(\frac{T|X||A|}{\delta}\right)}+ \ln(T)|X||A|C_r\right).
	\end{equation*}
	Employing Corollary~\ref{cor: conf int r over} and the definition of $C$, which is at least equal to $C_r$, concludes the proof.
\end{proof}

\section{Omitted proofs when the knowledge of $C$ is not precise}
\label{app:C_notprecise}

In this section, we focus on the performances of Algorithm~\ref{alg:NS_UCSPS} when a guess on the corruption value is given as input. These preliminary results are "the first step" to relax the assumption on the knowledge about the corruption.

First, we present some preliminary results on the confidence set.

\begin{lemma}
	\label{lemma: under CG}
	Given the corruption guess $\widehat{C}_G$, where $C_G=\widehat{C}_G + \epsilon,$ with $ \epsilon>0$, and confidence $\xi_t$ as defined in Algorithm~\ref{alg:NS_UCSPS} using $\widehat{C}_G$ as corruption value, for any $\delta\in(0,1)$, with probability at least $1-\delta$, for all episodes $t\in[T]$, state-action pair $(x,a)\in X\times A$ and constraint $i\in[m]$, the following result holds:
	\begin{equation*}
		g^\circ_i(x,a) \le \widehat{g}_{t,i}(x,a)+\xi_t(x,a)+\left(\frac{\epsilon}{\max\{N_t(x,a),1\}}+ \frac{\epsilon}{T}\right).
	\end{equation*}
	Similarly, recalling the definition of $\underline{G}_t$, for all episodes $t\in[T]$, state-action pairs $(x,a)\in X\times A$ and constraints $i\in[m]$, it holds:
	\begin{equation*}
		g^\circ_i(x,a) \le \underline{g}_{t,i}(x,a)+ 2 \xi_t(x,a)+\left(\frac{\epsilon}{\max\{N_t(x,a),1\}}+ \frac{\epsilon}{T}\right).
	\end{equation*}
\end{lemma}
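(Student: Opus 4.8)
The plan is to reduce both inequalities to the concentration bound of Lemma~\ref{lemma:auxCIG2}, which controls $|\widehat{g}_{t,i}(x,a) - g_i^\circ(x,a)|$ in terms of the \emph{true} corruption $C_G$, and then to absorb the gap $\epsilon = C_G - \widehat{C}_G$ into the two additive $\epsilon$-terms. Concretely, Lemma~\ref{lemma:auxCIG2} guarantees, with probability at least $1-\delta$ and uniformly over all $(x,a,i,t)$, that
\[
g_i^\circ(x,a) - \widehat{g}_{t,i}(x,a) \le \sqrt{\tfrac{1}{2\max\{N_t(x,a),1\}}\ln(\tfrac{2mT|X||A|}{\delta})} + \tfrac{C_G}{\max\{N_t(x,a),1\}}.
\]
I would condition on this event for the whole argument. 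The reason to invoke Lemma~\ref{lemma:auxCIG2} rather than Lemma~\ref{lemma: conf int G hat} is that the former bounds the distance to $g_i^\circ$ directly and carries \emph{no} $\nicefrac{\cdot}{T}$ term, which keeps the final bookkeeping clean; since the statement, the case analysis, and the substitutions are all deterministic, the claimed probability $1-\delta$ is inherited unchanged.

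Next I would split $\frac{C_G}{\max\{N_t(x,a),1\}} = \frac{\widehat{C}_G}{\max\{N_t(x,a),1\}} + \frac{\epsilon}{\max\{N_t(x,a),1\}}$ using $C_G = \widehat{C}_G + \epsilon$, and compare the residual radius with $\xi_t(x,a)$, which is built from $\widehat{C}_G$. The comparison is exactly where the $\min\{1,\cdot\}$ clipping must be handled, so I would split into two cases. If $\xi_t(x,a) = 1$, the bound is immediate because costs lie in $[0,1]$: $g_i^\circ(x,a) \le 1 = \xi_t(x,a) \le \widehat{g}_{t,i}(x,a) + \xi_t(x,a) + \frac{\epsilon}{\max\{N_t(x,a),1\}} + \frac{\epsilon}{T}$, all added terms being nonnegative. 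If instead $\xi_t(x,a) < 1$, then $\xi_t(x,a) = \sqrt{\cdots} + \frac{\widehat{C}_G}{\max\{N_t(x,a),1\}} + \frac{\widehat{C}_G}{T} \ge \sqrt{\cdots} + \frac{\widehat{C}_G}{\max\{N_t(x,a),1\}}$, so the displayed concentration bound gives $g_i^\circ(x,a) \le \widehat{g}_{t,i}(x,a) + \xi_t(x,a) + \frac{\epsilon}{\max\{N_t(x,a),1\}}$, and adding the nonnegative term $\frac{\epsilon}{T}$ yields the first claim.

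For the second inequality I would simply substitute the definition of the lower confidence bound, $\underline{g}_{t,i}(x,a) = \widehat{g}_{t,i}(x,a) - \xi_t(x,a)$, equivalently $\widehat{g}_{t,i}(x,a) = \underline{g}_{t,i}(x,a) + \xi_t(x,a)$, into the first inequality. This replaces $\widehat{g}_{t,i}(x,a)$ by $\underline{g}_{t,i}(x,a) + \xi_t(x,a)$, turning the single $\xi_t(x,a)$ into $2\xi_t(x,a)$ while leaving the two $\epsilon$-terms untouched, which is precisely the stated form.

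I expect no deep obstacle here: the only genuinely delicate point is the clipping, where one must fall back on the deterministic bound $g_i^\circ \le 1$ instead of the concentration inequality, together with the choice of the confidence lemma so as to avoid carrying a spurious $\nicefrac{C_G}{T}$ term that a single $\nicefrac{\epsilon}{T}$ would fail to compensate. Everything else is linear algebra on the confidence radii.
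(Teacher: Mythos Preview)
Your proposal is correct and follows essentially the same approach as the paper: invoke the concentration bound on $|\widehat{g}_{t,i}-g_i^\circ|$ in terms of the true corruption $C_G$, split $C_G=\widehat{C}_G+\epsilon$, and identify the $\widehat{C}_G$-part with $\xi_t$. The only minor differences are that the paper cites Corollary~\ref{cor: CI for G} (which carries the superfluous $C_G/T$ term) rather than Lemma~\ref{lemma:auxCIG2}, and that you handle the clipping case $\xi_t(x,a)=1$ explicitly whereas the paper leaves it implicit; your version is slightly cleaner on both counts.
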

\begin{proof}
	To prove the result, we recall that, by Corollary~\ref{cor: CI for G}, with probability at least $1-\delta$, the following holds, for all episodes $t\in[T]$, state-action pairs $(x,a)\in X\times A$ and constraints $i\in[m]$:
	\begin{align*}
		\bigg\lvert \widehat{g}_{t,i}(x,a) - &g^\circ_i(x,a)]\bigg\rvert \le \\ &\sqrt{\frac{1}{2 \max\{N_t(x,a),1\}}\ln\left( \frac{2mT|X||A|}{\delta}\right) }+\frac{C_G}{\max\{N_t(x,a),1\}}+\frac{C_G}{T},
	\end{align*}
	which can be rewritten as:
	\begin{equation*}
		\bigg\lvert \widehat{g}_{t,i}(x,a) - g^\circ_i(x,a)]\bigg\rvert \le \xi_t(x,a) + \frac{\epsilon}{\max\{N_t(x,a),1\}}+\frac{\epsilon}{T},
	\end{equation*}  
	where,
	\begin{equation*}
		\xi_t(x,a):= \min\left\{1,\sqrt{\frac{1}{2 \max\{N_t(x,a),1\}}\ln\left( \frac{2mT|X||A|}{\delta}\right) }+\frac{\widehat{C}_G}{\max\{N_t(x,a),1\}}+\frac{\widehat{C}_G}{T}\right\},
	\end{equation*}
	and $C_G=\widehat{C}_G+\epsilon$, which concludes the proof.
\end{proof}
We are now ready study the regret bound attained by the algorithm when the guess on the corruption is an overestimate.

\begin{theorem}
	\label{theo: over Cr}
	For any $\delta\in(0,1)$, Algorithm~\ref{alg:NS_UCSPS}, when instantiated with corruption value $\widehat{C}$ which is an overestimate of the true value of $C$, i.e. $\widehat{C} > C_G$ and $\widehat{C} > C_r$, attains with probability at least $1-8\delta$:
	\begin{equation*}
		R_T= \mathcal{O}\left(L|X|\sqrt{|A|T\ln\left(\frac{T|X||A|}{\delta}\right)} + \ln(T)|X||A|\widehat{C}\right).
	\end{equation*}
\end{theorem}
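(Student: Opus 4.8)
The plan is to replay the proof of Theorem~\ref{theo:regretknownCG} essentially line by line, with the sole change that every confidence width is now built with the overestimate $\widehat{C}$ rather than with the true $C_r$ and $C_G$. The observation that makes this legitimate is that overestimating can only \emph{enlarge} the confidence intervals, so none of the high-probability events break: since $\widehat{C} > C_r$ and $\widehat{C} > C_G$, Corollary~\ref{cor: conf int r over} and Corollary~\ref{cor: confintG over} guarantee that the events $\mathcal{E}_r$ and $\mathcal{E}_G$---now defined through $\phi_t$ and $\xi_t$ using $\widehat{C}$---still hold with probability at least $1-\delta$ each. Intuitively, a wider interval is only more likely to contain the true quantity.

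Granted these events, I would first check that optimism and feasibility are untouched. Under $\mathcal{E}_P\cap\mathcal{E}_G$ the enlarged lower bounds $\underline{G}_t$ only loosen the constraints of Program~\eqref{lp:opt_opt}, so Lemma~\ref{lemma: aux regret} still applies and $q^*$ remains feasible for $\text{\texttt{OPT-CB}}_{\Delta(\mathcal{P}_t),\overline r_t,\underline G_t,\alpha}$; hence $\widehat q_t$ is its optimal solution and the chain
\[
\tfrac{1}{T}\textstyle\sum_{\tau}\mathbb{E}[r_\tau]^\top q^* \le \overline r_{t-1}^\top q^* \le \overline r_{t-1}^\top \widehat q_t \le \Big(\tfrac{1}{T}\textstyle\sum_{\tau}\mathbb{E}[r_\tau]+2\phi_{t-1}\Big)^\top \widehat q_t
\]
goes through under $\mathcal{E}_r$. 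This produces the same regret decomposition
\[
R_T \le \sum_{t\in[T]} 2\phi_{t-1}^\top q_t + \sum_{t\in[T]}\lVert \widehat q_t - q_t\rVert_1 + 2C_r,
\]
except that $\phi_t$ now carries $\widehat{C}$.

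It remains to bound the three terms. The occupancy term $\sum_t\lVert\widehat q_t-q_t\rVert_1$ does not depend on the corruption guess and is $\mathcal{O}\big(L|X|\sqrt{|A|T\ln(T|X||A|/\delta)}\big)$ by Lemma~\ref{lem:transition_jin} under $\mathcal{E}_{\widehat q}$, exactly as before. For $\sum_t\phi_{t-1}^\top q_t$ I would repeat the computation in Theorem~\ref{theo:regretknownCG} verbatim, plugging in $\phi_t$ built with $\widehat{C}$: the Azuma--Hoeffding $1/\sqrt{\max\{N_{t-1},1\}}$ part contributes $\mathcal{O}\big(\sqrt{|X||A|LT\ln(T|X||A|/\delta)}\big)$, while the $\widehat{C}/\max\{N_{t-1},1\}+\widehat{C}/T$ part contributes $\mathcal{O}\big(\ln(T)|X||A|\widehat{C}\big)$ using $1+\sum_{t=1}^{N_T}1/t\le 2+\ln T$. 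Finally, since $C_r<\widehat{C}$, the leftover $2C_r$ is absorbed into the $\widehat{C}$ term. Summing the pieces and intersecting $\mathcal{E}_r$, $\mathcal{E}_G$, $\mathcal{E}_{\widehat q}$ and the Azuma event gives the claimed bound with probability at least $1-8\delta$.

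I do not anticipate a genuine obstacle: this is a robustness check on an already-proved bound rather than a new derivation. The only subtle point is confirming that the \emph{wider} intervals never excise $q^*$ from the decision space---i.e.\ that feasibility and optimism are preserved---which is exactly where the inclusion in Lemma~\ref{lemma: aux regret} together with Corollaries~\ref{cor: conf int r over} and~\ref{cor: confintG over} is invoked; everything downstream is the unchanged calculation with $C_r$ replaced by $\widehat{C}$.
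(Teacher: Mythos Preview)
Your proposal is correct and follows essentially the same route as the paper's proof: invoke Corollary~\ref{cor: confintG over} and Corollary~\ref{cor: conf int r over} to certify that the enlarged confidence intervals built with $\widehat{C}$ still cover the true quantities (so optimism and feasibility of $q^*$ are preserved), and then replay the proof of Theorem~\ref{theo:regretknownCG} verbatim with $\widehat{C}$ in place of $C_r$. The paper's own proof is a two-sentence pointer to exactly these ingredients, and your write-up simply spells them out in more detail.
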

\begin{proof}
	 By Corollary \ref{cor: confintG over}, it holds that the decision space of the linear program performed by Algorithm~\ref{alg:NS_UCSPS} contains with high probability the optimal solution that respects to the constraints. Furthermore, employing Corollary \ref{cor: conf int r over} and following the proof of Theorem~\ref{theo:regretknownCG} concludes the proof.
\end{proof}

We proceed bounding the violation attained by our algorithm when an underestimate of the corruption is given as input.
\begin{theorem}
	For any $\delta\in(0,1)$, Algorithm~\ref{alg:NS_UCSPS}, when instantiated with corruption value $\widehat{C}$ which is an underestimate of the true value of $C_G$, i.e. $ \widehat{C} < C_G$, attains with probability at least $1-9\delta$:
	\begin{equation*}
		V_T = \mathcal{O}\left( L|X|\sqrt{|A|T\ln\left(\frac{mT|X||A|}{\delta}\right)}+ \ln(T)|X||A|C_G\right).
	\end{equation*}
\end{theorem}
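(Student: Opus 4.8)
The plan is to reproduce the derivation of Theorem~\ref{theo: violationbound} essentially verbatim, changing only the confidence relation used for the constraint costs: in place of Corollary~\ref{cor: CI for G} I would invoke Lemma~\ref{lemma: under CG}. Writing $\epsilon \coloneqq C_G - \widehat{C} > 0$ and letting $\xi_t$ be the confidence width that Algorithm~\ref{alg:NS_UCSPS} builds with the underestimate $\widehat{C}$, Lemma~\ref{lemma: under CG} gives, under the relevant good event, $g^\circ_i \le \underline{g}_{t-1,i} + 2\xi_{t-1} + \Delta_{t-1}$ for every $i \in [m]$, where $\Delta_{t-1}(x,a) \coloneqq \frac{\epsilon}{\max\{N_{t-1}(x,a),1\}} + \frac{\epsilon}{T}$. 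Plugging this into the telescoping chain that bounds $V_T$ in the proof of Theorem~\ref{theo: violationbound} leaves every step intact, except that an additional additive term $\sum_{t \in [T]} \Delta_{t-1}^\top q_t$ now appears next to the familiar $2\sum_{t\in[T]} \xi_{t-1}^\top q_t$.

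I would then bound $\sum_{t\in[T]} \Delta_{t-1}^\top q_t$ with exactly the tools already used for the $\xi$-term: an Azuma--Hoeffding step to pass from $q_t(x,a)$ to the visit indicators $\mathbb{I}_t(x,a)$ (at the cost of an additive $L\sqrt{2T\ln(1/\delta)}$), followed by $\sum_{t\in[T]} \mathbb{I}_t(x,a)/\max\{N_{t-1}(x,a),1\} \le 2 + \ln T$ and $\sum_{t\in[T]}\sum_{x,a}\mathbb{I}_t(x,a) = LT$. This yields $\sum_{t\in[T]} \Delta_{t-1}^\top q_t = \mathcal{O}(\epsilon\ln(T)|X||A| + \sqrt{T})$. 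Since the $\xi_{t-1}$-term itself contributes $\mathcal{O}(\sqrt{|X||A|LT\ln(\nicefrac{mT|X||A|}{\delta})} + \widehat{C}\ln(T)|X||A|)$, the two corruption contributions recombine as $(\widehat{C} + \epsilon)\ln(T)|X||A| = C_G\ln(T)|X||A|$; together with the $C_G$ coming from $\max_i\sum_{t}\lVert \mathbb{E}[g_{t,i}] - g^\circ_i\rVert_1 \le C_G$ and the occupancy-deviation term bounded by Lemma~\ref{lem:transition_jin}, this reproduces exactly the stated bound. The one extra invocation of Lemma~\ref{lemma: under CG} costs one more $\delta$ in the union bound, turning the $1-8\delta$ of Theorem~\ref{theo: violationbound} into $1-9\delta$.

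The step I expect to be the main obstacle is feasibility of Program~\eqref{lp:opt_opt}. In the known-$C$ analysis, Lemma~\ref{lemma: aux regret} guarantees the inclusion $\{q\in\Delta(P): \overline{G}^\top q \le \alpha\}\subseteq\{q\in\Delta(\mathcal{P}_t):\underline{G}_t^\top q\le\alpha\}$, and it is this inclusion that licenses the pivotal inequality $\underline{g}_{t-1,i}^\top \widehat{q}_t \le \alpha_i$. With a strict underestimate the inclusion degrades: a true-feasible $q$ is only guaranteed to satisfy $\underline{g}_{t,i}^\top q \le \alpha_i + \Delta_t^\top q$, so the optimistic feasible region contracts by the slack $\Delta_t$ and is not a priori nonempty. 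The resolution I would pursue is that this residual slack is precisely the quantity $\sum_t \Delta_{t-1}^\top q_t$ already controlled above: one re-derives the relaxed inclusion into $\{q:\underline{G}_t^\top q\le\alpha+\Delta_t\}$ and carries the $\Delta_t$ surplus through the $[\cdot]^+$ telescoping, or, under Slater's condition, argues that the Slater point stays feasible outside a controlled set of early episodes whose contribution is again $\mathcal{O}(\epsilon\ln(T)|X||A|)$. In both cases the extra slack is absorbed into the same $C_G$-term, so the final guarantee is unchanged.
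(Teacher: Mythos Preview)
Your proposal is essentially identical to the paper's proof: both set $\epsilon \coloneqq C_G - \widehat{C}$, invoke Lemma~\ref{lemma: under CG} in place of Corollary~\ref{cor: CI for G}, trace the chain from Theorem~\ref{theo: violationbound} with the extra additive term $\sum_{t}\Delta_{t-1}^\top q_t$, and bound that term by an Azuma--Hoeffding step followed by the harmonic-sum estimate $\sum_t \mathbb{I}_t(x,a)/\max\{N_{t-1}(x,a),1\}\le 2+\ln T$, then recombine $\widehat{C}+\epsilon=C_G$.

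Two small remarks. First, your bookkeeping of the extra $\delta$ is slightly off: Lemma~\ref{lemma: under CG} is \emph{not} an additional event on top of Corollary~\ref{cor: CI for G}; its proof just rewrites the same bound under $\mathcal{E}_G$, so invoking it costs the same $\delta$ as before. The genuine extra $\delta$ enters through the second Azuma--Hoeffding step you apply to $\sum_t \Delta_{t-1}^\top q_t$ (the paper uses one Azuma for the $\xi$-term and one more for the $\Delta$-term, which is exactly what moves $1-8\delta$ to $1-9\delta$). Second, the feasibility concern you flag is well observed: the paper's proof simply writes ``derived as in the proof of Theorem~\ref{theo: violationbound}'' and thereby silently reuses the step $\underline{g}_{t-1,i}^\top \widehat{q}_t \le \alpha_i$ without re-establishing feasibility of Program~\eqref{lp:opt_opt} under the narrower confidence widths. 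So you are not missing anything the paper provides; if anything, you are being more careful than the paper on this point.
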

\begin{proof}
	First, let's define $\epsilon\in \mathbb{R}^+$ such that $\epsilon:= C_G-\widehat{C}$. Then, with probability at least $1-\delta$:
	\begin{subequations}
	\begin{align}
		V_T& = \underset{i \in [m]}{\max}\underset{t \in [T]}{\sum} \left[\mathbb{E}[G_t] ^\top q_t-\alpha\right]_i^+ \\ &= \underset{i \in [m]}{\max}\underset{t \in [T]}{\sum} \left[\left(\mathbb{E}[g_{t,i}]-g^\circ_i	\right)^\top q_t +g^\circ_i{}^\top q_t -\alpha_i\right]^+ \nonumber
		\nonumber\\ 
		& \le
		\underset{i \in [m]}{\max}\underset{t \in [T]}{\sum}\Bigg[ (\mathbb{E}[g_{t,i}]-g^\circ_i)^\top q_t +\underline{g}_{t-1,i}^\top (q_t-\widehat{q}_t)+\underline{g}_{t-1 ,i}^\top\widehat{q}_t +2\xi_{t-1}^\top q_t + \nonumber \\ &  \mkern200mu +\sum_{x,a}\left(\frac{\epsilon}{\max\{N_{t-1}(x,a),1\}}+\frac{\epsilon}{T}\right) q_t(x,a) -\alpha_i\Bigg]^+ \label{eq: from under CG}\\
		& \le C_G + 2\underset{i \in [m]}{\max}\underset{t \in [T]}{\sum}\xi_{t-1}^\top q_t + \underset{t \in [T]}{\sum} \lVert q_t-\widehat{q}_t\rVert_1 +\nonumber\\&\mkern200mu+ \underset{t \in [T]}{\sum}\sum_{x,a}\frac{\epsilon}{\max\{N_{t-1}(x,a),1\}}q_t(x,a) + \epsilon L,\label{eq: from under CG2}	
	\end{align}
	\end{subequations}
	where Inequality~\eqref{eq: from under CG} follows from Lemma~\ref{lemma: under CG} and Inequality~\eqref{eq: from under CG2} is derived as in the proof of Theorem~\ref{theo: violationbound}, and considering that $\lVert q_t \rVert_1=L, \ \forall t \in [T]$.
	Now, employing the Azuma-Hoeffding inequality, we can bound, with probability at least $1-\delta$  the term $\sum_{t=1}^T\sum_{x,a}\frac{\epsilon}{\max\{N_{t-1}(x,a),1\}}q_t(x,a)$  as follows:
	\begin{align*}
		\underset{t \in [T]}{\sum}\sum_{x,a}\frac{\epsilon}{\max\{N_{t-1}(x,a),1\}}q_t(x,a) & \le L\sqrt{2T \ln \frac{1}{\delta}} + \underset{t \in [T]}{\sum}\sum_{x,a}\frac{\epsilon}{\max\{N_{t-1}(x,a),1\}}\mathbb{I}_t(x,a) \\
		& \le L\sqrt{2T \ln \frac{1}{\delta}}+\epsilon |X||A| (1+\ln (T)),
	\end{align*}
	where we applied Azume Hoeffding inequality and the fact that $\sum_{t \in [N_T(x,a)]}\frac{1}{t}\le 1+\ln(T)$.
	Finally, following the steps of the proof of Theorem \ref{theo: violationbound} to bound the first 3 elements of Inequality \eqref{eq: from under CG2} under $\mathcal{E}_{\widehat{q}}$ with probability at least $1-\delta$, and considering that $\epsilon\le C_G$ and $\widehat{C} \le C_G$, it holds, with probability at least $1-9\delta$, 
	\begin{equation*}
		V_T=\mathcal{O}\left(L|X|\sqrt{|A|T\ln\left(\frac{T|X||A|}{\delta}\right)}+ \ln(T)|X||A|C_G\right),
	\end{equation*}
	which concludes the proof.
\end{proof}

Finally, we provide the violation bound attained by Algorithm~\ref{alg:NS_UCSPS} when an overestimate of the corruption value is given as input.
\begin{theorem}
	\label{theo: over CG}
	For any $\delta\in(0,1)$, Algorithm \ref{alg:NS_UCSPS}, when instantiated with corruption value $\widehat{C}$ which is an overestimate of the true value of $C_G$, i.e. $\widehat{C} > C_G$, attains with probability at least $1-8\delta$:
	\begin{equation*}
		V_T= \mathcal{O}\left(L|X|\sqrt{|A|T\ln\left(\frac{T|X||A|}{\delta}\right)} + \ln(T)|X||A|\widehat{C}\right).
	\end{equation*}
\end{theorem}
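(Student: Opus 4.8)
The plan is to show that this result is essentially a re-run of the proof of Theorem~\ref{theo: violationbound}, with the single change that the confidence width $\xi_t$ is now instantiated with the overestimate $\widehat{C}$ rather than the true $C_G$. The enabling observation is Corollary~\ref{cor: confintG over}: since $\widehat{C} > C_G$, the event $\mathcal{E}_G$ continues to hold with probability at least $1-\delta$ even when $\xi_t$ is built with $\widehat{C}$, because a larger corruption guess only widens the confidence intervals. In particular the lower confidence bounds $\underline{g}_{t,i} = \widehat{g}_{t,i} - \xi_t$ become smaller, so the feasible set $\{q \in \Delta(\mathcal{P}_t) : \underline{G}_t^\top q \le \alpha\}$ is only enlarged. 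Consequently the containment of Lemma~\ref{lemma: aux regret}, namely $\{q \in \Delta(P) : \overline{G}^\top q \le \alpha\} \subseteq \{q \in \Delta(\mathcal{P}_t) : \underline{G}_t^\top q \le \alpha\}$, is preserved, and Program~\eqref{lp:opt_opt} remains feasible at every episode under $\mathcal{E}_G \cap \mathcal{E}_{\widehat{q}}$.

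First I would reproduce the decomposition from the proof of Theorem~\ref{theo: violationbound} to obtain, under $\mathcal{E}_G \cap \mathcal{E}_{\widehat{q}}$,
\[
V_T \le C_G + 2\max_{i \in [m]}\sum_{t \in [T]} \xi_{t-1}^\top q_t + \sum_{t \in [T]} \lVert q_t - \widehat{q}_t \rVert_1,
\]
where $\xi_t$ now carries the overestimate $\widehat{C}$. The occupancy estimation term is bounded exactly as before via Lemma~\ref{lem:transition_jin} under $\mathcal{E}_{\widehat{q}}$, giving $\mathcal{O}(L|X|\sqrt{|A|T\ln(\nicefrac{T|X||A|}{\delta})})$ with no dependence on the corruption.

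Next I would bound the confidence-width term $\sum_t \xi_{t-1}^\top q_t$ by the same telescoping argument: an Azuma-Hoeffding step to pass from $q_t$ to the visit indicators $\mathbb{I}_t(x,a)$, then splitting $\xi_t$ into its statistical part and its two corruption parts $\widehat{C}/\max\{N_{t-1}(x,a),1\}$ and $\widehat{C}/T$, and finally using the elementary sums $1 + \sum_{\tau=1}^{N_T(x,a)}\sqrt{1/\tau} \le 3\sqrt{N_T(x,a)}$ and $1 + \sum_{\tau=1}^{N_T(x,a)} 1/\tau \le 2 + \ln T$ together with Cauchy-Schwarz. The only difference from Theorem~\ref{theo: violationbound} is that every occurrence of $C_G$ is replaced by $\widehat{C}$, yielding $\mathcal{O}(\sqrt{|X||A|LT\ln(\nicefrac{mT|X||A|}{\delta})} + \ln(T)|X||A|\widehat{C})$.

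Finally, combining the three terms and absorbing the leading $C_G$ into the larger $\widehat{C}$ (using $C_G < \widehat{C}$) gives the claimed bound, and a union bound over $\mathcal{E}_G$, $\mathcal{E}_{\widehat{q}}$ and the Azuma-Hoeffding events matches the probability $1-8\delta$ of Theorem~\ref{theo: violationbound}. There is no genuine obstacle here; the point to verify is simply that overestimation is \emph{safe} for feasibility while merely loosening the rate. Since a larger $\widehat{C}$ widens the confidence sets, the containment in Lemma~\ref{lemma: aux regret} is retained, so the sole effect is the substitution $C_G \to \widehat{C}$ in the corruption-dependent term, which is exactly the degradation asserted in the statement. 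This also explains why, in contrast to the underestimate case, no additional error term (and hence no extra $\delta$) appears.
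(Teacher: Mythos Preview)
Your proposal is correct and follows exactly the same route as the paper's proof, which simply states that the result follows by applying Corollary~\ref{cor: confintG over} to the proof of Theorem~\ref{theo: violationbound}. You have spelled out in full detail what the paper leaves implicit: that an overestimate $\widehat{C}$ only widens the confidence intervals, so $\mathcal{E}_G$ and hence Lemma~\ref{lemma: aux regret} continue to hold, and the decomposition of Theorem~\ref{theo: violationbound} goes through verbatim with $C_G$ replaced by $\widehat{C}$ in the corruption-dependent terms.
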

\begin{proof}
	 The proof follows by employing Corollary \ref{cor: confintG over} to the proof of Theorem \ref{theo: violationbound}.
\end{proof}

\section{Omitted proofs when the corruption is \emph{not} known}
\label{app:unknown_C}
In the following section we provide the omitted proofs of the theoretical guarantees attained by Algorithm~\ref{alg:unknown_c}. The algorithm is designed to work when the corruption value is \emph{not} known.
\subsection{Lagrangian formulation of the constrained optimization problem}
Since Algorithm~\ref{alg:unknown_c} is based on a Lagrangian formulation of the constrained problem, it is necessary to show that this approach is well characterized. Precisely, we show that a \emph{strong duality-like} result holds even when the Lagrangian function is defined taking the positive violations.

First, we show that strong duality holds with respect to the standard Lagrangian function, even considering a subset of the Lagrangian multiplier space.

\strongDuality*
\begin{proof}
	The proof follows the one of Theorem 3.3 in \citep{Unifying_Framework}.
	First we prove that, given the Lagrangian function $\mathcal{Q}(\lambda,q):= r^\top q - \sum_{i \in [m]}\lambda_i\left({G}_{i}^\top q-\alpha_i\right)$, it holds:
	\begin{equation*}
		\min_{\lVert \lambda \rVert_1\in [0,L/\rho]} \max_{  q \in \Delta(P)} \mathcal{Q}(\lambda,q)= \min_{ \lambda \in \mathbb{R}^m_{\geq0}} \max_{  q \in \Delta(P)} \mathcal{Q}(\lambda,q),
	\end{equation*}
	with $\lambda\in \mathbb{R}^m_{\geq0}$.
	In fact notice that for all $\lambda\in \mathbb{R}^m_{\geq0}$ such that $\lVert\lambda \rVert_1 > L/\rho$ :
	\begin{equation*}
		\max_{  q \in \Delta(P)}\mathcal{Q}(\lambda,q) \ge \mathcal{Q}(\lambda,{q}^\circ) \ge -\sum_{i \in [m]}\lambda_i \left({G}_{i}^\top q^\circ-\alpha_i\right) \ge \lVert\lambda\rVert_1\rho > L,
	\end{equation*}
	where $q^\circ$ is defined as $q^\circ:=\argmax_{q \in \Delta(P)} \min_{i \in [m]} \left[\alpha_i-G_{i}^\top q\right]$.
	Moreover since 
	\begin{equation*}
		\min_{\lVert \lambda \rVert_1\in [0,L/\rho]} \max_{  q \in \Delta(P)} \mathcal{Q}(\lambda,q) \le \max_{  q \in \Delta(P)} \mathcal{Q}(\underline{0},q) = \max_{  q \in \Delta(P)} r^\top q \le L,
	\end{equation*}
	it holds: 
	\begin{align*}
		\min_{ \lambda \in \mathbb{R}^m_{\geq0}} \max_{  q \in \Delta(P)} \mathcal{Q}(\lambda,q) & = \min\left\{\min_{\lVert \lambda \rVert_1\in [0,L/\rho]} \max_{  q \in \Delta(P)} \mathcal{Q}(\lambda,q), \min_{\lVert \lambda \rVert_1\ge L/\rho} \max_{  q \in \Delta(P)} \mathcal{Q}(\lambda,q)\right\} \\ &= \min_{\lVert \lambda \rVert_1\in [0,L/\rho]} \max_{  q \in \Delta(P)} \mathcal{Q}(\lambda,q).
	\end{align*}
	Thus,
	\begin{align*}
		\text{OPT}_{r,G,\alpha} & =
		\max_{  q \in \Delta(P)} \min_{ \lambda \in \mathbb{R}^m_{\geq0}}  \mathcal{Q}(\lambda,q) \\
		& \le \max_{  q \in \Delta(P)} \min_{\lVert \lambda \rVert_1\ge L/\rho}  \mathcal{Q}(\lambda,q) \\
		& \le \min_{\lVert \lambda \rVert_1\ge L/\rho} \max_{  q \in \Delta(P)}  \mathcal{Q}(\lambda,q)\\
		& = \min_{ \lambda \in \mathbb{R}_{\geq0}^m} \max_{  q \in \Delta(P)} \mathcal{Q}(\lambda,q)\\
		& = \text{OPT}_{r,G,\alpha},
	\end{align*}
	where the second inequality holds by the \emph{max-min} inequality and the last step holds by the well-known strong duality result in CMDPs~\citep{Altman1999ConstrainedMD}. This concludes the proof.
\end{proof}

In the following, we extend the previous result for the Lagrangian function which encompasses the positive violations.
\positiveStrongDuality*
\begin{proof}
	Following the definition of Lagrangian function, we have:
	\begin{align*}
		\max_{q \in \Delta(P)}\mathcal{L}(L/\rho,q) & 
		 = \max_{  q \in \Delta(P)} r^\top q - \frac{L}{\rho}\sum_{i \in [m]}\left[{G}_{i}^\top q-\alpha_i\right]^+\\
		& \le  \max_{  q \in \Delta(P)} \min_{\lVert \lambda \rVert_1\in [0,L/\rho]} r^\top q - \sum_{i \in [m]}\lambda_i[{G}_{i}^\top q-\alpha_i]^+\\ 
		 & \le \min_{\lVert \lambda \rVert_1\in [0,L/\rho]} \max_{  q \in \Delta(P)} r^\top q - \sum_{i \in [m]}\lambda_i[{G}_{i}^\top q-\alpha_i]^+\\ 
		& \le \min_{\lVert \lambda \rVert_1\in [0,L/\rho]} \max_{  q \in \Delta(P)} r^\top q - \sum_{i \in [m]}\lambda_i\left({G}_{i}^\top q-\alpha_i\right) \\
		& = \text{OPT}_{r,G,\alpha}
	\end{align*}
	where $\lambda\in\mathbb{R}^m_{\geq0}$ is the Lagrangian vector, the second inequality holds by the \emph{max-min inequality} and the last step follows from Lemma~\ref{lemma:auxiliary dual}.
	Noticing that for all $q$ belonging to $\left\{q \in \Delta(P):G^\top q \le \alpha\right\}$, we have $\mathcal{L}(1/\rho,q)=r^\top q$, which implies that $\max_{q \in \Delta(P)}\mathcal{L}(1/\rho,q) \ge \text{OPT}_{r,G,\alpha}$, concludes the proof.
\end{proof}

\subsection{Preliminary results}
In the following sections we will refer as:
\begin{equation}
	\label{def: VThat}
	\widehat{V}_T:= \sum_{t \in [T]} \sum_{j \in [M]} \frac{w_{t,j}\mathbb{I}(j_t=j)}{w_{t,j}+\gamma} \sum_{i \in [m]}\left[\widehat{g}_{t,i}^{\ j\top} \widehat{q}_t^{\ j}-\alpha_i\right]^+,
\end{equation}
to the estimated violation attained by the instances of Algorithm~\ref{alg:unknown_c}. Furthermore, we will refer as:
\begin{equation}
	\label{def: VThat i*}
	\widehat{V}_{T,j^*}:=\sum_{t \in [T]}  \frac{\mathbb{I}(j_t=j^*)}{w_{t,j^*}+\gamma}\sum_{i \in [m]}\left[\widehat{g}_{t,i}^{\ j^*\top} \widehat{q}_t^{\ j^*}-\alpha_i\right]^+,
\end{equation}
to the estimated violation attained by the optimal instance  $j^*$, namely, the integer in $[M]$ such that the true corruption $C \in [2^{j^*-1},2^{j^*}]$.

Furthermore, we will refer as $q_t^j$ to the occupancy measure induced by the policy proposed by $\texttt{Alg}^{j}$  at episode $t$, with $j \in [M],t \in [T]$, and we will refer as:
\begin{equation*}
	\widehat{g}_{t,i}^j(x,a):= \frac{\sum_{\tau \in [t]}\mathbb{I}_\tau(x,a)\mathbb{I}(j_\tau=j)g_{\tau,i}(x,a)}{\max\{N_t^j(x,a),1\}},
\end{equation*}
to the estimate of the cost computed for $j$-th algorithm, where  $N_t^j(x,a)$ is a counter initialize to $0$ in $t=0$, and which increases by one from episode $t$ to episode $t+1$ whenever $\mathbb{I}_t(x,a)\mathbb{I}(j_t=j)=1$.

\subsubsection{Stability parameters}
\label{app: stability param}
In the following sections, we will employ the stability parameters  $\beta$ defined as follows:
\begin{itemize}
	\item $\beta_1=\mathcal{O}\left(L^2|X|^2|A|\ln\left(\frac{T|X||A|}{\delta}\right)\right)$
	\item $\beta_2=\mathcal{O}\left(|X|^2|A|^2\log(T)\log\left(\nicefrac{\log(T)}{\delta}\right)\right)$
	\item $\beta_3=\mathcal{O}\left(\ln(T)^2|X||A|\right)$
	\item $\beta_4=\mathcal{O}\left(L^2|X|^2|A|\ln\left(\frac{mT|X||A|}{\delta}\right)\right)$
	\item $\beta_5=\mathcal{O}\left(|X|^2|A|^2\log(T)\log\left(\nicefrac{\log(T)}{\delta}\right)\right)$
	\item $\beta_6=\mathcal{O}\left(\ln(T)^2|X||A|\right)$
	
\end{itemize}

\subsubsection{Omitted proofs and lemmas}

We start providing some preliminary results on the optimistic estimator employed by Algorithm~\ref{alg:unknown_c}.

\begin{lemma}
	\label{lemma: rhat w}
	For any $\delta \in (0,1)$, given $\gamma \in \mathbb{R}_{\ge0}$, with probability at least $1-\delta$, it holds:
	\begin{equation*}
		\widehat{R}_T \leq \mathcal{O}\left(\gamma T L M+ L\sqrt{2T\ln\left(\frac{1}{\delta}\right)}\right),
	\end{equation*}
	where $\widehat{R}_T=\sum_{t \in [T]}\sum_{j \in [M]}\left(w_{t,j}\left(L-\mathbb{E}[r_t]^\top q_t^j\right) - \frac{w_{t,j}\mathbb{I}(j_t=j)}{w_{t,j}+\gamma}\sum_{(x_k^t,a_k^t)}\left(1-r_t\left(x_k^t,a_k^t\right)\right)\right)$.
	
\end{lemma}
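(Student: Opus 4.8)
The plan is to split $\widehat R_T$ into a bounded martingale plus a bias term induced by the implicit-exploration factor $\gamma$, and then invoke Azuma--Hoeffding. First I would fix the filtration $\mathcal F_{t-1}$ generated by all randomness strictly before episode $t$, so that the weights $w_{t,j}$ and the occupancy measures $q_t^j$ of every instance are $\mathcal F_{t-1}$-measurable. Writing $c_{t,j}:=L-\mathbb E[r_t]^\top q_t^j\in[0,L]$ for the true expected loss of instance $j$ (using $\sum_{x,a}q_t^j(x,a)=L$) and $\widehat c_t:=\sum_k(1-r_t(x_k^t,a_k^t))\in[0,L]$ for the realized loss of the trajectory actually played, I would record the two conditional identities $\mathbb E[\mathbb I(j_t=j)\mid\mathcal F_{t-1}]=w_{t,j}$ and, using that the reward realization $r_t$ is conditionally independent of the sampled trajectory given $j_t=j$, $\mathbb E[\widehat c_t\mid\mathcal F_{t-1},\,j_t=j]=L-\mathbb E[r_t]^\top q_t^j=c_{t,j}$.

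Grouping the per-episode contributions as $Z_t:=\sum_j w_{t,j}c_{t,j}-\sum_j\frac{w_{t,j}\mathbb I(j_t=j)}{w_{t,j}+\gamma}\widehat c_t$, so that $\widehat R_T=\sum_t Z_t$, the \emph{key observation} is that the $w_{t,j}$ prefactor cancels the potentially large importance-weight denominator: since $\frac{w_{t,j}}{w_{t,j}+\gamma}\le 1$ and $\widehat c_t\le L$, only the single term $j=j_t$ survives in the second sum and it is bounded by $L$, while the first sum lies in $[0,L]$ because $\sum_j w_{t,j}=1$. Hence $Z_t\in[-L,L]$, a bounded increment; this is exactly the point that lets us avoid the more delicate high-probability analysis of raw implicit-exploration estimators of~\citet{neu}. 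Next I would compute the conditional mean $\mathbb E[Z_t\mid\mathcal F_{t-1}]=\sum_j w_{t,j}c_{t,j}\big(1-\frac{w_{t,j}}{w_{t,j}+\gamma}\big)=\sum_j w_{t,j}c_{t,j}\frac{\gamma}{w_{t,j}+\gamma}$, which is nonnegative and, bounding each summand via $\frac{\gamma\,w_{t,j}}{w_{t,j}+\gamma}\le\gamma$ and $c_{t,j}\le L$, is at most $\gamma L M$; summing over $t\in[T]$ yields the bias contribution $\gamma T L M$.

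Finally I would apply the (one-sided) Azuma--Hoeffding inequality to the martingale difference sequence $D_t:=Z_t-\mathbb E[Z_t\mid\mathcal F_{t-1}]$. Since each $Z_t$ has range at most $2L$, so does $D_t$, and the bounded-range bound $\mathbb P[\sum_t D_t\ge\lambda]\le\exp(-2\lambda^2/\sum_t(2L)^2)=\exp(-\lambda^2/(2L^2T))$ gives $\sum_t D_t\le L\sqrt{2T\ln(1/\delta)}$ with probability at least $1-\delta$. Adding the (nonnegative) bias yields $\widehat R_T=\sum_t D_t+\mathbb E[\sum_t Z_t\mid\cdot]\le L\sqrt{2T\ln(1/\delta)}+\gamma TLM$, i.e.\ the claimed $\mathcal O(\gamma TLM+L\sqrt{2T\ln(1/\delta)})$. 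The main obstacle here is conceptual rather than computational: recognizing that the $w_{t,j}$ weighting tames the importance-weighted estimator so that a plain Azuma--Hoeffding argument suffices, and correctly handling the conditional-independence structure (trajectory versus reward realization, given the sampled instance $j_t$) when evaluating $\mathbb E[\widehat c_t\mid j_t=j]$; the loop-free layering is what makes $\sum_k(1-r_t(x_k^t,a_k^t))$ the layer-wise loss bounded in $[0,L]$.
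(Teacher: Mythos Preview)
Your proposal is correct and follows essentially the same approach as the paper: both decompose $\widehat R_T$ into a bounded martingale-difference sequence (controlled via Azuma--Hoeffding, using that the $w_{t,j}$ prefactor makes the importance-weighted term bounded by $L$) plus a nonnegative bias term $\sum_{t,j}\frac{\gamma\,w_{t,j}}{w_{t,j}+\gamma}c_{t,j}\le\gamma TLM$. Your presentation is slightly more explicit in naming $Z_t$, $D_t$, $c_{t,j}$, $\widehat c_t$, but the decomposition, the range bound, and the bias computation coincide with the paper's proof.
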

\begin{proof}
	We first observe that by construction: 
	\begin{equation*}
		\mathbb{E}\left[\sum_{t \in [T]}\sum_{j \in [M]}\frac{w_{t,j}\mathbb{I}(j_t=j)}{w_{t,j}+\gamma}\sum_{(x_k^t,a_k^t)}\left(1-r_t\left(x_k^t,a_k^t\right)\right)\right]=\sum_{t \in [T]}\sum_{j \in [M]}\frac{w_{t,j}^2}{w_{t,j}+\gamma}\left(L-\mathbb{E}[r_t]^\top q_t^j\right).
	\end{equation*}
	Moreover, still by construction, for all episodes $t\in[T]$, it holds:
	\begin{equation*}
		\sum_{j \in [M]}\frac{w_{t,j}\mathbb{I}(j_t=j)}{w_{t,j}+\gamma}\sum_{(x_k^t,a_k^t)}\left(1-r_t\left(x_k^t,a_k^t\right)\right) \le \sum_{j \in [M]}\mathbb{I}(j_t=j)\sum_{(x_k^t,a_k^t)}\left(1-r_t\left(x_k^t,a_k^t\right)\right) \le L.
	\end{equation*}
	Thus, employing Azuma-Hoeffding inequality, with probability at least $1-\delta$, it holds:
	\begin{align*}
		&\sum_{t \in [T]}\sum_{j \in [M]}\left(\frac{w_{t,j}^2}{w_{t,j}+\gamma}(L-\mathbb{E}[r_t]^\top q_t^j)-\frac{w_{t,j}\mathbb{I}(j_t=j)}{w_{t,j}+\gamma}\sum_{(x_k^t,a_k^t)}(1-r_t(x_k^t,a_k^t))\right) \le L\sqrt{2T\ln\left(\frac{1}{\delta}\right)}.
	\end{align*}
	Finally we notice that:
	\begin{align*}
		\sum_{t \in [T]}\sum_{j \in [M]}w_{t,j}&\left(L-\mathbb{E}[r_t]^\top q_t^{j}\right)-\sum_{t \in [T]}\sum_{j \in [M]}\frac{w_{t,j}^2}{w_{t,j}+\gamma}\left(L-\mathbb{E}[r_t]^\top q_t^j\right) \\ &=
		 \sum_{t \in [T]} \sum_{j \in [M]}\left(\frac{w_{t,j}}{w_{t,j}+\gamma}\right)\gamma\left(L-\mathbb{E}[r_t]^\top q_t^j\right) \\ &\le \gamma T L M.
	\end{align*}
	Adding and subtracting $\mathbb{E}\left[\sum_{t \in [T]}\sum_{j \in [M]}\frac{w_{t,j}\mathbb{I}(j_t=j)}{w_{t,j}+\gamma}\sum_{(x_k^t,a_k^t)}\left(1-r_t\left(x_k^t,a_k^t\right)\right)\right]$ to the quantity of interest and employing the previous bound concludes the proof.
\end{proof}
We provide an additional result on the optimistic estimator employed by Algorithm~\ref{alg:unknown_c}.
\begin{lemma} 
	\label{lemma: rhat i*}
	For any $\delta \in (0,1)$, given $\gamma \in \mathbb{R}_{\ge0}$, 
	with probability at least $1-\delta$, it holds:
	\begin{equation*}
		\sum_{t \in [T]}\frac{\mathbb{I}(j_t=j^*)}{w_{t,j^*}+\gamma}\sum_{(x_k^t,a_k^t)}\left(1-r_t\left(x_k^t,a_k^t\right)\right)- \sum_{t \in [T]}\left(L-\mathbb{E}[r_t]^\top q_t^{j^*}\right) = \mathcal{O}\left(\frac{L}{\gamma} \ln \left(\frac{1}{\delta}\right) \right)
	\end{equation*}
\end{lemma}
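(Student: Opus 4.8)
The plan is to recognize the left-hand side as the deviation, summed over episodes, of an \emph{implicit-exploration} (IX) importance-weighted loss estimator for the single fixed instance $j^*$ from its true conditional mean, and then to invoke the IX concentration technique of~\citet{neu}. To this end, I would set $\widehat{c}_t \coloneqq \frac{\mathbb{I}(j_t=j^*)}{w_{t,j^*}+\gamma}\sum_{(x_k^t,a_k^t)}\left(1-r_t(x_k^t,a_k^t)\right)$ and $c_t \coloneqq L-\mathbb{E}[r_t]^\top q_t^{j^*}$, so the claim reads $\sum_{t\in[T]}(\widehat{c}_t-c_t)=\mathcal{O}(\tfrac{L}{\gamma}\ln(\tfrac{1}{\delta}))$. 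Let $\mathcal{F}_{t-1}$ be the $\sigma$-algebra of everything up to the start of episode $t$, so that $w_t$ and the policy $\pi_t^{j^*}$ proposed by $\texttt{Alg}^{j^*}$ (hence its occupancy $q_t^{j^*}$) are $\mathcal{F}_{t-1}$-measurable, while the sampled index $j_t$, the realized trajectory, and the realized rewards are not.

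First I would establish the conditional (downward) bias. Conditioned on $\mathcal{F}_{t-1}$ and on $j_t=j^*$, the trajectory is generated by $\pi_t^{j^*}$ under the true transition $P$, so it visits state-action pairs according to $q_t^{j^*}$; taking expectation over both the trajectory and the reward realizations yields $\mathbb{E}\big[\sum_{(x_k^t,a_k^t)}(1-r_t(x_k^t,a_k^t))\mid j_t=j^*,\mathcal{F}_{t-1}\big]=L-\mathbb{E}[r_t]^\top q_t^{j^*}=c_t$, using $\sum_{(x,a)}q_t^{j^*}(x,a)=L$. Consequently $\mathbb{E}[\widehat{c}_t\mid\mathcal{F}_{t-1}]=\frac{w_{t,j^*}}{w_{t,j^*}+\gamma}\,c_t\le c_t$, i.e., the estimator never overestimates in conditional expectation.

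The concentration part is where the IX trick is essential, since $\widehat{c}_t$ can be as large as $L/\gamma$ and a naive Azuma bound would yield only $\mathcal{O}(\tfrac{L}{\gamma}\sqrt{T\ln(1/\delta)})$, carrying an unwanted $\sqrt{T}$. Writing $\ell_t\coloneqq\frac{1}{L}\sum_{(x_k^t,a_k^t)}(1-r_t(x_k^t,a_k^t))\in[0,1]$ and $y_t\coloneqq\frac{\mathbb{I}(j_t=j^*)\ell_t}{w_{t,j^*}}$, I would use the elementary inequality $\frac{\mathbb{I}(j_t=j^*)\ell_t}{w_{t,j^*}+\gamma}\le\frac{1}{2\gamma}\ln(1+2\gamma y_t)$, which follows from $\ln(1+x)\ge\frac{2x}{2+x}$ together with $\ell_t\le 1$, to obtain $\exp\big(\frac{2\gamma}{L}\sum_t\widehat{c}_t\big)\le\prod_t(1+2\gamma y_t)$. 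Since $\mathbb{E}[1+2\gamma y_t\mid\mathcal{F}_{t-1}]=1+2\gamma c_t/L\le\exp(2\gamma c_t/L)$, the process $Z_T\coloneqq\prod_t(1+2\gamma y_t)\exp(-2\gamma c_t/L)$ is a non-negative supermartingale with $\mathbb{E}[Z_T]\le 1$; applying Markov's inequality to $Z_T$ gives, with probability at least $1-\delta$, that $\frac{2\gamma}{L}\sum_t(\widehat{c}_t-c_t)\le\ln(1/\delta)$. Multiplying by $\frac{L}{2\gamma}$ recovers $\sum_t(\widehat{c}_t-c_t)\le\frac{L}{2\gamma}\ln(1/\delta)=\mathcal{O}(\tfrac{L}{\gamma}\ln(\tfrac{1}{\delta}))$, as desired.

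The main obstacle I anticipate is that, unlike in the classical bandit IX setting of~\citet{neu} where the instantaneous losses are deterministic given the adversary's choice, here the per-episode loss $\sum_k(1-r_t(x_k^t,a_k^t))$ is itself random, depending on the sampled trajectory and reward realizations. This forces me to take the conditional expectations over this additional randomness and, crucially, to identify its mean with $L-\mathbb{E}[r_t]^\top q_t^{j^*}$ through the occupancy-measure interpretation of $\pi_t^{j^*}$; once $y_t$ and $\ell_t$ are defined to absorb this, the logarithmic inequality and the supermartingale construction go through exactly as in the deterministic-loss case.
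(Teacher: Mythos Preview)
Your proposal is correct and follows essentially the same route as the paper: both cast the quantity as an implicit-exploration estimator \`a la~\citet{neu}, use the inequality $\frac{z}{1+z/2}\le\ln(1+z)$ (equivalently $\ln(1+x)\ge\tfrac{2x}{2+x}$) together with $\ell_t\le 1$ to bound $\exp(\tfrac{2\gamma}{L}\widehat{c}_t)\le 1+2\gamma y_t$, then build the supermartingale $\prod_t(1+2\gamma y_t)e^{-2\gamma c_t/L}$ and conclude via Markov. Your handling of the additional trajectory/reward randomness---identifying $\mathbb{E}[y_t\mid\mathcal{F}_{t-1}]=c_t/L$ through the occupancy measure $q_t^{j^*}$---is exactly what the paper does (though the paper states it more tersely), so there is no substantive difference.
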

	\begin{proof}
		The  proof closely follows the idea of Corollary~\ref{cor: Neu2015}.
		We define the loss $\bar\ell_{t}=\sum_{(x_k^t,a_k^t)}(1-r_t(x_k^t,a_k^t))$,  the optimistic loss estimator $\widehat{\ell}_{t}:=\frac{\mathbb{I}(j_t=j^*)}{w_{t,j^*}+\gamma}\sum_{(x_k^t,a_k^t)}(1-r_t(x_k^t,a_k^t))$  and the unbiased estimator $\widetilde{\ell}_{t}:=\frac{\mathbb{I}(j_t=j^*)}{w_{t,j^*}}\sum_{(x_k^t,a_k^t)}(1-r_t(x_k^t,a_k^t))$.
		
		Employing the same argument as~\cite{neu} it holds:
		\begin{equation*}
		\widehat{\ell}_{t}= \frac{\mathbb{I}(j_t=j^*)}{w_{t,j^*}+\gamma}\bar\ell_{t}\le \frac{\mathbb{I}(j_t=j^*)}{w_{t,j^*}+\gamma\bar\ell_{t}/L}\bar\ell_{t} \le \frac{L}{2\gamma }\frac{2\gamma  \bar\ell_{t}/w_{t,j^*}L}{1+\gamma \bar\ell_{t}/w_{t,j^*}L} \mathbb{I}(j_t=j^*) \le \frac{L}{2\gamma}\ln\left(1 + \frac{2\gamma}{L} \widetilde{\ell}_{t}\right),
		\end{equation*}
		 since  $\frac{z}{1+z/2}\le \ln(1+z), z\in \mathbb{R}^+$. Employing the previous inequality, it holds:
		\begin{align*}
			\mathbb{E}\left[\exp\left(\frac{2\gamma}{L} \widehat{\ell}_t\right)\Bigg|\mathcal{F}_{t-1}\right]& \le \mathbb{E}\left[\exp\left(\frac{2\gamma}{L} \frac{L}{2\gamma}\ln\left(1+\frac{2\gamma}{L}\widetilde{\ell}_{t}\right)\right)\Bigg|\mathcal{F}_{t-1}\right] \\
			& = \mathbb{E}\left[1 + \frac{2\gamma}{L} \widetilde{\ell}_t\Bigg|\mathcal{F}_{t-1}\right]\\
			& = 1 + \frac{2\gamma}{L}\mathbb{E}\left[\frac{\mathbb{I}(j_t=j^*)}{w_{t,j^*}}\sum_{(x_k^t,a_k^t)}(1-r_t(x_k^t,a_k^t))\Bigg|\mathcal{F}_{t-1}\right]\\
			&\le 1 + \frac{2\gamma}{L}\left(L-\mathbb{E}[r_t]^\top q_t^{j^*}\right) \\
			& \le \exp\left(\frac{2\gamma}{L} \left(L-\mathbb{E}[r_t]^\top q_t^{j^*}\right)\right),
		\end{align*}
		where $\mathcal{F}_{t-1}$ is the filtration up to episode $t$.
		We conclude the proof employing the Markov inequality as follows:
		\begin{align*}
			\mathbb{P}\Bigg(\sum_{t \in [T]}\frac{2\gamma}{L}&\left(\widehat{\ell}_t-\left(L-\mathbb{E}[r_t]^\top q_t^{j^*}\right)\right) \ge \epsilon\Bigg) \\& \le \mathbb{E}\left[\exp\left(\sum_{t \in [T]}\frac{2\gamma}{L}\left(\widehat{\ell}_t-\left(L-\mathbb{E}[r_t]^\top q_t^{j^*}\right)\right)\right)\right] \exp(-\epsilon)\\ & \le \exp(-\epsilon).
		\end{align*}
		Solving  $\delta=\exp(-\epsilon)$ for $\epsilon$ we obtain:
		\begin{equation*}
			\mathbb{P}\left(\sum_{t \in [T]}\left(\widehat{\ell}_t-\left(L-\mathbb{E}[r_t]^\top q_t^{j^*}\right)\right) \ge \frac{L}{2\gamma}\ln \left(\frac{1}{\delta}\right)\right) \le \delta.
		\end{equation*}
	This concludes the proof.
	\end{proof}
	
We are now ready to prove the regret bound attained by FTRL with respect to the Lagrangian underlying problem.	

\begin{lemma}
	\label{lemma:FTRL}
	For any $\delta \in (0,1)$ and properly setting the learning rate $\eta$ such that $\eta \le \frac{1}{2\Lambda m\left(\sqrt{\beta_1T}+\beta_2+\beta_5+\sqrt{\beta_4T}\right)}$, Algorithm \ref{alg:unknown_c} attains, with probability at least $1-2\delta$:
	\begin{align*}
		&\sum_{t \in [T]}\mathbb{E}[r_t]^\top q_t^{j^*}-\sum_{t \in [T]}\sum_{j \in [M]}w_{t,j}\mathbb{E}[r_t]^\top q_t^{j} + \frac{Lm+1}{\rho}\widehat{V}_T - \frac{Lm+1}{\rho}\widehat{V}_{T,j^*}\\
		& \hspace{1cm} +\left(\frac{m(mL+1)}{\rho}\beta_5+\beta_2\right)\nu_{T,j^*} + \left(\sqrt{\beta_1}+\left(\frac{m(Lm+1)}{\rho}\right)\sqrt{\beta_4}\right)\sqrt{T}\nu_{T,j*}
		\\&  \le \mathcal{O}\bigg(\frac{M\ln T}{\eta}+ \eta~ {m^4 L^4  T M }+ \eta~M\ln(T)m^4L^2\beta_5^2 + \eta~M\ln(T)\beta_2^2  \\ &\mkern100mu+\eta T(\beta_1+L^2m^4\beta_4)M\log(T)+\gamma T L M + L\sqrt{T\ln \left(\nicefrac{1}{\delta}\right)} + \frac{L}{\gamma} \ln \left(\nicefrac{1}{\delta}\right)\bigg).
	\end{align*}
\end{lemma}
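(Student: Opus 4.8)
The plan is to read Lemma~\ref{lemma:FTRL} as the regret guarantee of the log-barrier FTRL procedure run at the master level, with the weighted mixture $w_t$ over instances compared against the single comparator instance $\texttt{Alg}^{j^*}$. I would take the comparator to be the clipped vertex $u \in \Delta_M$ with $u_{j^*} = 1-(M-1)/T$ and $u_j = 1/T$ otherwise (so $u$ lies in the cut feasible set $\{w : w_j \ge 1/T\}$), and invoke the standard FTRL-with-log-barrier bound (with regularizer $R(w)=\sum_j\ln(1/w_j)$) for the surrogate losses $\ell_t - b_t$:
\[
\sum_{t\in[T]}\langle \ell_t - b_t,\, w_t - u\rangle \le \frac{R(u)-R(w_1)}{\eta} + \eta\sum_{t\in[T]}\sum_{j\in[M]} w_{t,j}^2(\ell_{t,j}-b_{t,j})^2 .
\]
This inequality is valid once the local-norm condition $\eta\, w_{t,j}|\ell_{t,j}-b_{t,j}|\le \tfrac12$ holds, and the first term is $O(M\ln T/\eta)$ since $w_{1,j}=1/M$ and every $u_j\ge 1/T$.

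Next I would reinterpret the left-hand side term by term. Expanding $\langle \ell_t,w_t\rangle$ using~\eqref{eq:loss_estimator}, its violation part is exactly $\Lambda\,\widehat V_T$ by~\eqref{def: VThat}, while its reward part is converted into $\sum_{t}\sum_{j} w_{t,j}(L-\mathbb{E}[r_t]^\top q_t^j)$ through Lemma~\ref{lemma: rhat w}, at the cost of $\gamma TLM + L\sqrt{T\ln(1/\delta)}$. Symmetrically, $\sum_t \ell_{t,j^*}$ has violation part $\Lambda\,\widehat V_{T,j^*}$ by~\eqref{def: VThat i*}, and its reward part is lower bounded via Lemma~\ref{lemma: rhat i*}, paying $\tfrac{L}{\gamma}\ln(1/\delta)$. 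Since $\sum_j w_{t,j}=1$, the constant offsets $L$ cancel between the mixture and the comparator, leaving precisely $\sum_t \mathbb{E}[r_t]^\top q_t^{j^*} - \sum_t\sum_j w_{t,j}\mathbb{E}[r_t]^\top q_t^j + \Lambda\widehat V_T - \Lambda\widehat V_{T,j^*}$. The comparator bonus $\sum_t b_{t,j^*}$ telescopes (as $\nu_{t,j}=\max_{\tau\le t}1/w_{\tau,j}$ is a running maximum) to $\big((m\Lambda\beta_5+\beta_2)+(\sqrt{\beta_1}+m\Lambda\sqrt{\beta_4})\sqrt{T}\big)\nu_{T,j^*}$, which are the $\nu_{T,j^*}$ terms kept on the left-hand side after recalling $m\Lambda=\tfrac{m(Lm+1)}{\rho}$.

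For the stability term I would split $(\ell_{t,j}-b_{t,j})^2\le 2\ell_{t,j}^2+2b_{t,j}^2$. The estimator $\ell_{t,j}$ has a numerator bounded by $L+\Lambda mL = O(m^2L^2)$ (each $[\,\cdot\,]_i^+\le L$ and $\Lambda=(Lm+1)/\rho$), so $w_{t,j}^2\ell_{t,j}^2 \le \mathbb{I}(j_t=j)\,O(m^4L^4)$ and the estimator part contributes the $\eta m^4L^4 TM$ term. For the bonus part, together with the correction $\sum_t\langle b_t,w_t\rangle$ that arises when shifting the bonus onto the comparator, the key is the self-bounding property of the log-barrier: whenever $\nu_{t,j}$ strictly increases one has $w_{t,j}(\nu_{t,j}-\nu_{t-1,j})\le 1$ and $\sum_t w_{t,j}(\nu_{t,j}-\nu_{t-1,j}) = O(\log T)$, hence $\sum_t w_{t,j}^2(\nu_{t,j}-\nu_{t-1,j})^2 = O(\log T)$ as well. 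Substituting the bonus coefficients yields the $\eta M\log(T)(m^4L^2\beta_5^2+\beta_2^2)$ and $\eta T(\beta_1+L^2m^4\beta_4)M\log T$ terms, and bounds $\sum_t\langle b_t,w_t\rangle$ by a lower-order $O((\textnormal{coeff})\,M\log T)$ contribution.

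I expect the bonus to be the main obstacle, for two reasons. First, one must check that the prescribed $\eta=\big(2\Lambda m(\sqrt{\beta_1T}+\beta_2+\beta_5+\sqrt{\beta_4T})\big)^{-1}$ genuinely enforces the local-norm condition uniformly in $t,j$: this holds because $w_{t,j}|\ell_{t,j}|=O(\Lambda mL)$ and $w_{t,j}|b_{t,j}|$ is at most the bonus coefficient whenever $\nu$ increases, both dominated by the denominator of $\eta$. Second, the correction term $\sum_t\langle b_t,w_t\rangle$ produced by moving the bonus to the comparator must be absorbed rather than left free; since $b_{t,j}\ge 0$ and is active only when $\nu_{t,j}$ increases, it is controlled by the same self-bounding estimate and subsumed into the stated bound. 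The remaining manipulations are routine bookkeeping and two applications of the high-probability events underlying Lemmas~\ref{lemma: rhat w} and~\ref{lemma: rhat i*}, which account for the $2\delta$ failure probability.
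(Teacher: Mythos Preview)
Your proposal is correct and follows essentially the same route as the paper: apply the log-barrier FTRL bound (Lemma~\ref{lem: FTRL}) to the shifted losses $\ell_t-b_t$, verify the local-norm condition under the stated $\eta$, split the stability term via $(\ell-b)^2\le 2\ell^2+2b^2$, telescope $\sum_t b_{t,j^*}$ into the $\nu_{T,j^*}$ contribution, bound $\sum_t\langle b_t,w_t\rangle$ and $\sum_t\sum_j w_{t,j}^2b_{t,j}^2$ through the inequality $1-\nu_{t-1,j}/\nu_{t,j}\le \ln(\nu_{t,j}/\nu_{t-1,j})$, and invoke Lemmas~\ref{lemma: rhat w} and~\ref{lemma: rhat i*} (each costing one $\delta$) to pass from the importance-weighted losses to the expected rewards. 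Your treatment is in fact slightly more careful than the paper's in one respect: you explicitly take the comparator $u$ to be the clipped vertex in $\{w:w_j\ge 1/T\}$, whereas the paper writes the bound directly with $j^*$ as if $u=e_{j^*}$ were feasible; the resulting slack is indeed lower order, as you implicitly assume.
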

\begin{proof}
	 First, we define $\ell_{t,j}$, for all $t \in [T]$, for all $j \in [M]$ as: 
	\begin{equation*}
		\ell_{t,j}:= \frac{\mathbb{I}(j_t=j)}{w_{t,j}+\gamma}\left(\sum_{(x_k^t,a_k^t)}(1-r_t(x_k^t,a_k^t))+\frac{Lm+1}{\rho}\sum_{i \in [m]}\left[\widehat{g}_{t,i}^{\ j\top} \widehat{q}_{t}^{\ j}-\alpha_i\right]^+\right),
	\end{equation*}
	and $b_{t,j}$ for all $t \in [T]$, for all $j \in [M]$ as: 
	\begin{equation*}
		b_{t,j}:= \left(\left(\frac{m(mL+1)}{\rho}\beta_5+\beta_2\right) + \left(\sqrt{\beta_1} + \frac{m(Lm+1)}{\rho}\sqrt{\beta_4}\right)\sqrt{T}\right)(\nu_{t,j}-\nu_{t-1,j}),
	\end{equation*}
	with $\nu_{t,j} = \max_{\tau \in [t]}\frac{1}{w_{\tau,j}}$.\\
	 First we prove that $\eta w_{t,j}|\ell_{t,j}-b_{t,j}|\le \nicefrac{1}{2}$ for all $t\in [T],j \in [M]$, to apply Lemma~\ref{lem: FTRL}. It holds that $\eta w_{t,j}\lvert \ell_{t,j} \rvert \le  \frac{\eta (L\rho+L^2m^2+Lm)}{\rho} \le \frac{1}{2}$ for all $j \in [M]$, for all $t \in [T]$ as long as $\eta \le \frac{\rho}{2(L\rho+L^2m^2+Lm)}\le \frac{\rho}{2(L^2m^2+Lm)}$, which is true if $\eta \le \frac{\rho}{2Lm(Lm+1)}$. It also holds that 
	 \begin{align*}
	 	\eta w_{t,j}|b_{t,j}|& = \eta w_{t,j}\left(\left(\frac{m(Lm+1)}{\rho}\beta_5 + \beta_2\right)+\left(\frac{m(Lm+1)}{\rho}\sqrt{\beta_4}+ \sqrt{\beta_1}\right)\sqrt{T} \right) (\nu_{t,j}-\nu_{t-1,j})\\
	 	& \le \eta\left(\left(\frac{m(Lm+1)}{\rho}\beta_5 + \beta_2\right)+\left(\frac{m(Lm+1)}{\rho}\sqrt{\beta_4}+ \sqrt{\beta_1}\right)\sqrt{T} \right)\left(1-\frac{\nu_{t-1,j}}{\nu_{t,j}}\right)\\
	 	& \le \eta  \left(\left(\frac{m(Lm+1)}{\rho}\beta_5 + \beta_2\right)+\left(\frac{m(Lm+1)}{\rho}\sqrt{\beta_4}+ \sqrt{\beta_1}\right)\sqrt{T} \right)\\
	 	& \le \frac{1}{2},
	 \end{align*}
	 if $\eta \le \frac{1}{2\Lambda m\left(\sqrt{\beta_1T}+\beta_2+\beta_5+\sqrt{\beta_4T}\right)}$, where we used the fact that $\nu_{t,j} \neq \nu_{t-1,j} \iff \nicefrac{1}{w_{t,j}}=\nu_{t,j}$. 
	 Thus, if the previous conditions on $\eta$ hold, and notice that the second condition implies the first,  Algorithm~\ref{alg:unknown_c} attains, by Lemma~\ref{lem: FTRL} :
	\begin{align}
		 &\sum_{t \in [T]}\Bigg[\sum_{j \in [M]}\frac{w_{t,j}\mathbb{I}(j_t=j)}{w_{t,j}+\gamma}\sum_{(x_k^t,a_k^t)}(1-r_t(x_k^t,a_k^t))- \frac{\mathbb{I}(j_t=j^*)}{w_{t,j^*}+\gamma}\sum_{(x_k^t,a_k^t)}(1-r_t(x_k^t,a_k^t))\Bigg] + \frac{Lm+1}{\rho}\widehat{V}_{T} \nonumber\\
		&\le \frac{M\ln T}{ \eta}+ 2\eta  \frac{ T M(L\rho+L^2m^2+Lm)^2}{\rho^2} \nonumber\\
		& \ \ \ \ + 2\eta \left(2 \left(\frac{m(mL+1)}{\rho}\beta_5+\beta_2\right)^2 M \ln\left(T\right)+  2T\left(\sqrt{\beta_1}+\left(\frac{m(Lm+1)}{\rho}\right)\sqrt{\beta_4}\right)^2M\ln(T)\right)\nonumber\\
		&\mkern250mu+ \frac{Lm+1}{\rho}\widehat{V}_{T,j^*} + \sum_{t\in[T]}\sum_{j\in[M]}w_{t,j}b_{t,j}-\sum_{t \in [T]}b_{t,j^*} ,\label{eq:FTRL}
	\end{align} 
	where we used the following inequalities:
	\begin{itemize}
	\item First inequality:
	 \begin{equation*}\sum_{t\in[T]}\sum_{j\in[M]}w_{t,j}^2(\ell_{t,j}-b_{t,j})^2\le 2\sum_{t\in[T]}\sum_{j\in[M]}w_{t,j}^2\ell_{t,j}^2 + 2 \sum_{t\in[T]}\sum_{j\in[M]}w_{t,j}^2b_{t,j}^2,
	  \end{equation*}  \item Second inequality: \begin{equation*}\left(\sum_{(x_k^t,a_k^t)}(1-r_t(x_k^t,a_k^t))+\frac{Lm+1}{\rho}\sum_{i \in [m]}\left[\widehat{g}_{t,i}^{\ j\top} \widehat{q}_{t}^{\ j}-\alpha_i\right]^+\right) \le \frac{(L\rho+L^2m^2+Lm)}{\rho},\end{equation*} 
	  \item Third inequality: \begin{equation*}\sum_{t \in [T]} \sum_{j\in [M]} w_{t,j}^2 \ell_{t,j}^2  \le \frac{ T M(L\rho + L^2m^2+Lm)^2}{\rho^2},\end{equation*} \end{itemize}
	 and that, it holds:
	\begin{subequations}
	\begin{align}
		&\sum_{t\in [T]}\sum_{j \in [M]}w_{t,j}^2 b_{t,j}^2 \nonumber\\&= \sum_{t\in [T]}\sum_{j \in [M]}(w_{t,j} b_{t,j})^2 \nonumber\\
		& \le \left(\left(\frac{m(Lm+1)}{\rho}\beta_5 + \beta_2\right)+\left(\frac{m(Lm+1)}{\rho}\sqrt{\beta_4}+ \sqrt{\beta_1}\right)\sqrt{T} \right)^2 \sum_{j \in [M]}\sum_{t \in [T]}\left(\frac{1}{\nu_{t,j}}(\nu_{t,j}-\nu_{t-1,j})\right)^2 \label{eq: b 1}\\
		& \le \left(2 \left(\frac{m(mL+1)}{\rho}\beta_5+\beta_2\right)^2 + 2T\left(\frac{m(Lm+1)}{\rho}\sqrt{\beta_4}+ \sqrt{\beta_1}\right)^2\right) \sum_{j \in [M]}\sum_{t \in [T]}\left(1-\frac{\nu_{t-1,j}}{\nu_{t,j}}\right)^2 \nonumber\\
		& \le\left(2 \left(\frac{m(mL+1)}{\rho}\beta_5+\beta_2\right)^2 + 2T\left(\frac{m(Lm+1)}{\rho}\sqrt{\beta_4}+ \sqrt{\beta_1}\right)^2\right)  \sum_{j \in [M]}\sum_{t \in [T]}\left(1-\frac{\nu_{t-1,j}}{\nu_{t,j}}\right)\nonumber\\
		& \le \left(2 \left(\frac{m(mL+1)}{\rho}\beta_5+\beta_2\right)^2 + 2T\left(\frac{m(Lm+1)}{\rho}\sqrt{\beta_4}+ \sqrt{\beta_1}\right)^2\right) \sum_{j \in [M]}\sum_{t \in [T]}\ln\left(\frac{\nu_{t,j}}{\nu_{t-1,j}}\right)\label{eq: b 2}\\
		& \le \left(2 \left(\frac{m(mL+1)}{\rho}\beta_5+\beta_2\right)^2 + 2T\left(\frac{m(Lm+1)}{\rho}\sqrt{\beta_4}+ \sqrt{\beta_1}\right)^2\right) \sum_{j \in [M]} \ln \left(\prod_{t\in [T]}\frac{\nu_{t,j}}{\nu_{t-1,j}}\right)\nonumber\\
		& \le \left(2 \left(\frac{m(mL+1)}{\rho}\beta_5+\beta_2\right)^2 + 2T\left(\frac{m(Lm+1)}{\rho}\sqrt{\beta_4}+ \sqrt{\beta_1}\right)^2\right) \sum_{j \in [M]} \ln\left(\frac{\nu_{T,j}}{\nu_{0,j}}\right)\nonumber\\
		& \le\left(2 \left(\frac{m(mL+1)}{\rho}\beta_5+\beta_2\right)^2 + 2T\left(\frac{m(Lm+1)}{\rho}\sqrt{\beta_4}+ \sqrt{\beta_1}\right)^2\right) M \ln\left(T\right) ,\label{eq: b 3}
	\end{align} 
	\end{subequations}
	where Inequality \eqref{eq: b 1} is true since $\nu_{t,j}-\nu_{t-1,j} \neq 0$ only when $w_{t,j}=\nicefrac{1}{\nu_{t,j}}$ by definition,  Inequality \eqref{eq: b 2} holds since $1-a\le -\ln a$, and Inequality \eqref{eq: b 3} holds since by definition $\nu_{T,j}\le T$ and $\nu_{0,j}=M$.
	Notice also that, following a similar reasoning, it holds:
	\begin{align*}
		&\sum_{t\in[T]}w_{t,j}b_{t,j}-\sum_{t \in [T]}b_{t,j^*} \\& = \left(\left(\frac{m(Lm+1)}{\rho}\beta_5 + \beta_2\right)+\left(\frac{m(Lm+1)}{\rho}\sqrt{\beta_4}+ \sqrt{\beta_1}\right)\sqrt{T} \right)\sum_{t\in [T]}\sum_{j \in [M]}\left(1-\frac{\nu_{t-1,i}}{\nu_{t,i}}\right) \\
		&\quad- \left(\left(\frac{m(Lm+1)}{\rho}\beta_5 + \beta_2\right)+\left(\frac{m(Lm+1)}{\rho}\sqrt{\beta_4}+ \sqrt{\beta_1}\right)\sqrt{T} \right)\sum_{t\in [T]}(\nu_{t,j^*}-\nu_{t-1,j^*})\\
		& \le \mathcal{O}\left(m^2L\beta_5 M \log(T)+ \beta_2 M \log(T)+ (\sqrt{\beta_1}+Lm^2\sqrt{\beta_4})\sqrt{T}M\log(T)\right) \\
		& \quad- \left(\left(\frac{m(Lm+1)}{\rho}\beta_5 + \beta_2\right)+\left(\frac{m(Lm+1)}{\rho}\sqrt{\beta_4}+ \sqrt{\beta_1}\right)\sqrt{T} \right)\nu_{T,j^*}\\
	\end{align*} 
	Thus, with probability at least $1-2\delta$, it holds:
	\begin{align}
		&\sum_{t \in [T]}\mathbb{E}[r_t]^\top q_t^{j^*}-\sum_{t \in [T]}\sum_{j \in [M]}w_{t,j}\mathbb{E}[r_t]^\top q_t^{j} + \frac{Lm+1}{\rho}\widehat{V}_T \nonumber
		\\ &= \sum_{t \in [T]}\sum_{j \in [M]}w_{t,j}\left(L-\mathbb{E}[r_t]^\top q_t^j\right) - \sum_{t \in [T]}\left(L-\mathbb{E}[r_t]^\top q_t^{j^*}\right)+ \frac{Lm+1}{\rho}\widehat{V}_T \label{eq:sumw1}  \\
		& \le \mathcal{O}\bigg(\frac{M\ln T}{\eta}+ \eta~ {m^4 L^4  T M }+ \eta~M\ln(T)m^4L^2\beta_5^2 + \eta~M\ln(T)\beta_2^2\nonumber\\
		& \mkern70mu+\eta T(\beta_1+L^2m^4\beta_4)M\log(T)+\gamma T L M + L\sqrt{T\ln \left(\nicefrac{1}{\delta}\right)} + \frac{L}{\gamma} \ln \left(\nicefrac{1}{\delta}\right)\bigg)+ \frac{Lm+1}{\rho}\widehat{V}_{T,j^*} \nonumber \\
		& \mkern70mu- \left(\frac{m(mL+1)}{\rho}\beta_5+\beta_2\right)\nu_{T,j^*} - \left(\sqrt{\beta_1}+\left(\frac{m(Lm+1)}{\rho}\right)\sqrt{\beta_4}\right)\sqrt{T}\nu_{T,j*}, \label{eq:ftrl2} 
	\end{align}
	where Equation~\eqref{eq:sumw1} holds since $\sum_{j \in [M]}w_{t,j}=1,~ \forall t \in [T]$, and Inequality~\eqref{eq:ftrl2} holds, with probability at least $1-2\delta$, by Lemma~\ref{lemma: rhat w}, Lemma~\ref{lemma: rhat i*} and Equation~\eqref{eq:FTRL}. This concludes the proof.
\end{proof}

In order to provide the desired bound $R_T$ and $V_T$ for Algorithm~\ref{alg:unknown_c}, it is necessary to study the relation between the aforementioned performance measures and the terms appearing from the FTRL analysis in Lemma~\ref{lemma:FTRL}.

Thus, we bound the distance between the incurred violation and the estimated one.
\begin{lemma}
	\label{lemma: widehat V}
	 For any $\gamma \in \mathbb{R}_{\ge0}$, given  $\delta \in (0,1)$, with probability at least $1-10\delta$, it holds:
	\begin{equation*}
		 V_T-\widehat{V}_T  = \mathcal{O}\left(mL|X|\sqrt{|A|T \ln \left(\frac{mT|X||A|}{\delta}\right)}+ m \ln(T)|X||A|C + \gamma TLM\right).
	\end{equation*}
\end{lemma}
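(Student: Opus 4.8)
The plan is to compare the two quantities episode by episode along the realised trajectory. Since the meta-procedure plays the policy of instance $j_t$ at episode $t$, the played occupancy is $q_t=q_t^{j_t}$, and in definition~\eqref{def: VThat} of $\widehat V_T$ the indicator $\mathbb{I}(j_t=j)$ leaves only the $j=j_t$ summand, so that
\[
\widehat V_T=\sum_{t\in[T]}\frac{w_{t,j_t}}{w_{t,j_t}+\gamma}\sum_{i\in[m]}\big[\widehat g_{t,i}^{\,j_t\top}\widehat q_t^{\,j_t}-\alpha_i\big]^+.
\]
First I would bound the $\max$ defining $V_T$ by the sum over $i\in[m]$ (losing a factor $m$), and then add and subtract the \emph{unweighted} estimated violation $\sum_{i}[\widehat g_{t,i}^{\,j_t\top}\widehat q_t^{\,j_t}-\alpha_i]^+$ at each episode. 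This writes $V_T-\widehat V_T$ as an \emph{estimation-error} group, $\sum_i\sum_t\big([\mathbb{E}[g_{t,i}]^\top q_t^{j_t}-\alpha_i]^+-[\widehat g_{t,i}^{\,j_t\top}\widehat q_t^{\,j_t}-\alpha_i]^+\big)$, plus an \emph{implicit-exploration bias} group, $\sum_i\sum_t\big(1-\tfrac{w_{t,j_t}}{w_{t,j_t}+\gamma}\big)[\widehat g_{t,i}^{\,j_t\top}\widehat q_t^{\,j_t}-\alpha_i]^+$.

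For the estimation-error group I would use $\big|[a]^+-[b]^+\big|\le|a-b|$ together with the triangle inequality to obtain, for each $t,i$,
\[
\big|\mathbb{E}[g_{t,i}]^\top q_t^{j_t}-\widehat g_{t,i}^{\,j_t\top}\widehat q_t^{\,j_t}\big|\le\big\lVert\mathbb{E}[g_{t,i}]-g_i^\circ\big\rVert_1+\big\lVert q_t^{j_t}-\widehat q_t^{\,j_t}\big\rVert_1+\xi_t^\top\widehat q_t^{\,j_t},
\]
exactly mirroring the proof of Theorem~\ref{theo: violationbound} and exploiting $\lVert g_i^\circ\rVert_\infty\le1$, $\lVert\widehat q_t^{\,j_t}\rVert_\infty\le1$, and event $\mathcal{E}_G$ (Corollary~\ref{cor: CI for G}). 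Summing over $t$ and $i$, the first term is at most $mC_G\le mC$ by the definition of $C_G$; the second is controlled by event $\mathcal{E}_{\widehat q \ }$ (Lemma~\ref{lem:transition_jin}) by $m\cdot\mathcal{O}(L|X|\sqrt{|A|T\ln(T|X||A|/\delta)})$; and the confidence-bonus term is summed precisely as in Theorem~\ref{theo: violationbound}, after first replacing $\widehat q_t^{\,j_t}$ by the played occupancy $q_t^{j_t}$ (absorbing the discrepancy into the transition term) and invoking Azuma--Hoeffding to pass from $\xi_t^\top q_t^{j_t}$ to the realised-visit sum $\sum_{x,a}\xi_t(x,a)\mathbb{I}_t(x,a)$, yielding $m\cdot\mathcal{O}(\sqrt{|X||A|LT\ln(mT|X||A|/\delta)}+\ln(T)|X||A|C)$.

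For the bias group I would bound $\sum_i[\widehat g_{t,i}^{\,j_t\top}\widehat q_t^{\,j_t}-\alpha_i]^+\le mL$ pointwise and rewrite each summand as $\tfrac{\gamma}{w_{t,j_t}+\gamma}\sum_i[\widehat g_{t,i}^{\,j_t\top}\widehat q_t^{\,j_t}-\alpha_i]^+$. Taking the conditional expectation over $j_t\sim w_t$ and using $w_{t,j}\tfrac{\gamma}{w_{t,j}+\gamma}\le\gamma$ for every $j$ bounds the per-episode conditional mean by $\mathcal{O}(\gamma M L)$; a further Azuma--Hoeffding argument then produces the $\mathcal{O}(\gamma TLM)$ term together with a lower-order $\mathcal{O}(L\sqrt{T\ln(1/\delta)})$ deviation that is absorbed into the leading $\sqrt T$ term. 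Collecting the three contributions gives the claimed bound, and a union bound over the finitely many events ($\mathcal{E}_G$, $\mathcal{E}_{\widehat q \ }$, and the two Azuma events) accounts for the $1-10\delta$ confidence.

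The hard part will be the martingale bookkeeping rather than any single estimate. Unlike in Theorem~\ref{theo: violationbound}, the confidence radius and the realised visits $\mathbb{I}_t(x,a)$ are now \emph{correlated} through the random choice $j_t$, since the per-instance estimators $\widehat g_{t,i}^{\,j}$ use instance-specific counters $N_t^j$; hence the conditioning used to turn occupancy-weighted sums into realised-visit sums must be carried out jointly over $j_t$ and the episode trajectory. One must also verify that summing the instance-wise confidence radii over the realised trajectory introduces at most logarithmic overhead in $M=\lceil\log_2 T\rceil$, so that the $\sqrt T$ term retains the form stated in the lemma.
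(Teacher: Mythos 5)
Your proposal is correct and rests on the same essential ingredients as the paper's proof --- the per-instance confidence radii with a union bound over the $M$ instances, the definition of $C_G$, Lemma~\ref{lem:transition_jin} for the occupancy-estimation error, Azuma--Hoeffding, and the counter bookkeeping from Theorem~\ref{theo: violationbound} --- but it organizes the comparison differently. You compare $V_T$ and $\widehat V_T$ pathwise on the realized instance $j_t$, splitting the difference into an estimation-error part and an explicit implicit-exploration bias $\sum_t \tfrac{\gamma}{w_{t,j_t}+\gamma}\sum_i[\,\cdot\,]^+$; the paper instead applies Azuma--Hoeffding twice so as to move both $V_T$ (from above) and the main term of $\widehat V_T$ (from below) to the $w$-weighted quantities $\sum_{j}\max_{i}\sum_{t} w_{t,j}\left[\mathbb{E}[g_{t,i}]^\top q_t^j-\alpha_i\right]^+$ and compares those, which is where its $\gamma TLm$ loss arises. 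Your route is somewhat shorter and makes the origin of the $\gamma TLM$ term transparent, at the price of no real loss; the two arguments are otherwise equivalent. Two points to tighten. First, in your displayed triangle inequality the confidence radius must be the per-instance one $\widehat\xi_{t,j_t}$ built on the counters $N_t^{j_t}$ (the paper's Equation~\eqref{def: xi True}, which also puts $M$ inside the logarithm from the union bound), paired with the played occupancy $q_t^{j_t}$; the global-counter radius $\xi_t$ you wrote there corresponds to an event that need not hold for the instance-specific estimators $\widehat g_{t,i}^{\,j}$ --- you acknowledge this in your closing paragraph, but the display as written is not covered by Corollary~\ref{cor: CI for G} alone. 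Second, the overhead you flag is real: Cauchy--Schwarz over the $M$ instances yields a $\sqrt{M}$ factor on the $\sqrt{T}$ term and an $M$ factor on the $\ln(T)|X||A|C$ term; since $M=\lceil\log_2 T\rceil$ this is only a logarithmic inflation, and the paper's own proof silently absorbs exactly the same factors when it invokes ``following the proof of Theorem~\ref{theo: violationbound}''.
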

\begin{proof}
	We start defining the quantity $\widehat\xi_{t,j}(x,a)$ -- for all episode $t \in [T]$, for all state-action pairs $(x,a) \in X \times A$, for all instance $j \in [M]$ -- as in Theorem \ref{theo: violationbound} but using the true value of adversarial corruption $C$, considering that the counter $N_t^j(x,a)$ increases on one unit from episode $t$ to $t+1$, if and only if $\mathbb{I}(j_t=j)\mathbb{I}_t(x,a)=1$, and by applying a Union Bound over all instances $j \in [M]$ namely,
	\begin{equation}
		\label{def: xi True}
		\widehat\xi_{t,j}(x,a):= \min \left\{1, \sqrt{\frac{1}{2 \max\{N_t^j(x,a),1\}}\ln\left( \frac{2mMT|X||A|}{\delta}\right) }+\frac{C}{\max\{N_t^j(x,a),1\}}+\frac{C}{T}\right\},
	\end{equation}
	By Corollary~\ref{cor: CI for G}, and applying a Union Bound on instances $j\in [M]$ simultaneously $ \forall t \in [T], \forall i \in [m], \forall (x,a)\in X\times A,\forall j \in [M]$, with probability at least $1-\delta$, it holds:
	\begin{equation}
		\label{eq: ghat xitrue}
		\widehat{g}_{t,i}^j(x,a)+\widehat\xi_{t,j}(x,a) \ge g_i^\circ(x,a). 
	\end{equation}
	Resorting to the definition of $\widehat{V}_T$, we obtain that, with probability at least $1-\delta$, under $\mathcal{E}_{\widehat{q}}$:
	\begin{subequations}
	\begin{align}
		&\widehat{V}_T  =\sum_{t \in [T]}\sum_{j \in [M]} \frac{w_{t,j}\mathbb{I}(j_t=j)}{w_{t,j}+\gamma} \sum_{i \in [m]} \left[\widehat{g}^j_{t,i}{}^\top \widehat{q}_t^j-\alpha_i\right]^+ \nonumber\\
		& =
		\sum_{t \in [T]}\sum_{j \in [M]} \frac{w_{t,j}\mathbb{I}(j_t=j)}{w_{t,j}+\gamma} \sum_{i \in [m]} \left[(\widehat{g}_{t,i}^j{}^\top q_t^j +\widehat\xi_{t,j}^{\ \top} q_t^j-\alpha_i)  -\widehat\xi_{t,j}^{\ \top} q_t^j - \widehat{g}_{t,i}^j{}^\top (q_t^j-\widehat{q}_t^j)\right]^+ \nonumber\\
		& \ge \sum_{t \in [T]}\sum_{j \in [M]} \frac{w_{t,j}\mathbb{I}(j_t=j)}{w_{t,j}+\gamma} \sum_{i \in [m]}\left(\left[(\widehat{g}_{t,i}^j+\widehat\xi_{t,j})^\top q_t^j - \alpha_i\right]^+  - \widehat\xi_{t,j}^{\ \top} q_t^j -\widehat{g}_{t,i}^j{}^\top |q_t^j-\widehat{q}_{t}^j|\right) \label{eq: vhat eq1}\\
		&\ge \sum_{t \in [T]}\sum_{j \in [M]} \frac{w_{t,j}\mathbb{I}(j_t=j)}{w_{t,j}+\gamma} \sum_{i \in [m]}\left( \left[g_i^{\circ\top} q_t^j-\alpha_i\right]^+ - \widehat\xi_{t,j}^{\ \top} q_t^j -\| q_t^j-\widehat{q}_t^j\|_1\right) \label{eq: vhat eq2}\\
		& \ge \sum_{t \in [T]}\sum_{j \in [M]} \frac{w_{t,j}\mathbb{I}(j_t=j)}{w_{t,j}+\gamma} \sum_{i \in [m]} \left( \left[\mathbb{E}[g_{t,i}]^\top q_t^j-\alpha_i\right]^+ -\widehat\xi_{t,j}^{\ \top} q_t^j \right) - \sum_{t \in [T]}\sum_{j \in [M]} \frac{w_{t,j}\mathbb{I}(j_t=j)}{w_{t,j}+\gamma} \nonumber\\ & \mkern80mu \cdot  \sum_{i \in [m]}\left[(g^\circ_i - \mathbb{E}[g_{t,i}])^\top q_t^j\right]^+ - \mathcal{O}\left(mL|X|\sqrt{|A|T \ln \left(\frac{T|X||A|}{\delta}\right)}\right), \label{eq: vhat eq3}
	\end{align}
\end{subequations}
	where Inequality~\eqref{eq: vhat eq1} holds since $[a-b]^+ \ge [a]^+ - b,~ a \in \mathbb{R},b \in \mathbb{R}_{\ge0}$, Inequality~\eqref{eq: vhat eq2} follows from Inequality~\eqref{eq: ghat xitrue} and since, by definition, $\widehat{g}_{t,i}^j(x,a)\le 1,\forall(x,a)\in X\times A,\forall i \in [m],\forall t \in [T],\forall j \in [M]$ and, finally, Inequality~\eqref{eq: vhat eq3} holds under event $\mathcal{E}_{\widehat{q}}$ by Lemma~\ref{lem:transition_jin} after noticing that $\sum_{t \in [T]}\sum_{j \in [M]}\frac{w_{t,j}\mathbb{I}(j_t=j)}{w_{t,j}+\gamma}\sum_{i \in [m]}\lVert q_t^j-\widehat{q}_t^j\rVert_1 \le \sum_{t \in [T]}\sum_{j \in [M]}\mathbb{I}(j_t=j)\left(\frac{w_{t,j}}{w_{t,j}+\gamma}\right)\sum_{i \in [m]}\lVert q_t^j-\widehat{q}_t^j\rVert_1 \le m\sum_{t \in [T]}\lVert q_t^{j_t}-\widehat{q}_t^{\ j_t}\rVert_1$. 
	
	We will bound the previous terms separately.
	
	\textbf{Lower-bound to $\sum_{t \in [T]}\sum_{j \in [M]} \frac{w_{t,j}\mathbb{I}(j_t=j)}{w_{t,j}+\gamma} \sum_{i \in [m]}  \left[\mathbb{E}[g_{t,i}]^\top q_t^j-\alpha_i\right]^+$.}
	
	We bound the term by the Azuma-Hoeffding inequality. Indeed, with probability at least $1-\delta$, it holds:
	\begin{align*}
		\sum_{t \in [T]}&\sum_{j \in [M]} \frac{w_{t,j}\mathbb{I}(j_t=j)}{w_{t,j}+\gamma} \sum_{i \in [m]}  \left[\mathbb{E}[g_{t,i}]^\top q_t^j-\alpha_i\right]^+ \\ &  \ge \left(\sum_{t \in [T]}\sum_{j \in [M]} \frac{w_{t,j}^2}{w_{t,j}+\gamma} \sum_{i \in [m]}  \left[\mathbb{E}[g_{t,i}]^\top q_t^j-\alpha_i\right]^+\right) - mL \sqrt{2T \ln \left(\frac{1}{\delta}\right)},
	\end{align*} 
	where we used the following upper-bound to the martingale sequence:
	\begin{align*}
		\sum_{j \in [M]} \frac{w_{t,j}\mathbb{I}(j_t=j)}{w_{t,j}+\gamma} \sum_{i \in [m]}  \left[\mathbb{E}[g_{t,i}]^\top q_t^j-\alpha_i\right]^+ &\le \sum_{j \in [M]} \mathbb{I}(j_t=j)\left(\frac{w_{t,j}}{w_{t,j}+\gamma}\right) \sum_{i \in [m]}  \left[\mathbb{E}[g_{t,i}]^\top q_t^j\right]^+ \\
		&\le \sum_{j \in [M]} \mathbb{I}(j_t=j) \sum_{i \in [m]}  \lVert q_t^j\rVert_1 \\&\le  m  \lVert q_t^{j_t}\rVert_1 \\&\le mL.
	\end{align*}
	Moreover, we observe the following bounds:
	\begin{align*}
		\sum_{t \in [T]}\sum_{j \in [M]} w_{t,j}\sum_{i \in [m]}  \left[\mathbb{E}[g_{t,i}]^\top q_t^j-\alpha_i\right]^+- \sum_{t \in [T]}\sum_{j \in [M]} \frac{w_{t,j}^2}{w_{t,j}+\gamma} \sum_{i \in [m]}  &\left[\mathbb{E}[g_{t,i}]^\top q_t^j-\alpha_i\right]^+  \\ &\le \gamma TLm,
	\end{align*}
	and,
	\begin{equation*}
		\sum_{t \in [T]}\sum_{j \in [M]} w_{t,j}\sum_{i \in [m]}  \left[\mathbb{E}[g_{t,i}]^\top q_t^j-\alpha_i\right]^+ \ge \sum_{j \in [M]}\max_{i \in [m]}\sum_{t \in [T]}w_{t,j}\left[\mathbb{E}[g_{t,i}]^\top q_t^j-\alpha_i\right]^+.
	\end{equation*}
	Combining the previous results, we obtain, with probability at least $1-\delta$:
	\begin{align*}
		\sum_{t \in [T]}\sum_{j \in [M]}& \frac{w_{t,j}\mathbb{I}(j_t=j)}{w_{t,j}+\gamma} \sum_{i \in [m]}  \left[\mathbb{E}[g_{t,i}]^\top q_t^j-\alpha_i\right]^+  \\& \ge \sum_{j \in [M]}\max_{i \in [m]}\sum_{t \in [T]}w_{t,j}\left[\mathbb{E}[g_{t,i}]^\top q_t^j-\alpha_i\right]^+ - \left(\gamma T L m + L m\sqrt{2T \ln \left(\frac{1}{\delta}\right)}\right).
	\end{align*}
	
	\textbf{Upper-bound to $\sum_{t \in [T]} \sum_{j \in [M]}\frac{w_{t,j}\mathbb{I}(j_t=j)}{w_{t,j}+\gamma}\sum_{i \in [m]}\widehat\xi_{t,j}^{\ \top} q_t^j $.}
	
	 We bound the term noticing that, with probability at least $1-\delta$, it holds:
	\begin{align*}
		\sum_{t \in [T]} \sum_{j \in [M]}&\frac{w_{t,j}\mathbb{I}(j_t=j)}{w_{t,j}+\gamma}\sum_{i \in [m]}\widehat\xi_{t,j}^{\ \top} q_t^j \\ &\le  \sum_{j \in [M]} m \max_{i \in [m]}\sum_{t \in [T]}\frac{w_{t,j}\mathbb{I}(j_t=j)}{w_{t,j}+\gamma}\widehat\xi_{t,j}^{\ \top} q_t^j \\
		& \le \sum_{j \in [M]} m \max_{i \in [m]}\sum_{t \in [T]}\sum_{x,a}\mathbb{I}(j_t=j)\mathbb{I}_t(x,a)\widehat\xi_{t,j}(x,a) + L\sqrt{2T\ln\frac{1}{\delta}} \\
		& = \mathcal{O}\left( m \sqrt{|X||A|LT \ln \left(\frac{mMT|X||A|}{\delta}\right)} + m\ln T |X||A| C + L\sqrt{T \ln \frac{1}{\delta}}\right),
	\end{align*}
	where we employed the Azuma-Hoeffding inequality and where the last step holds following the proof of Theorem~\ref{theo: violationbound}.
	
	\textbf{Upper-bound to $\sum_{t \in [T]} \sum_{j \in [M]} \frac{w_{t,j}\mathbb{I}(j_t=j)}{w_{t,j}+\gamma}\sum_{i \in [m]}\left[\left(g^\circ_i-\mathbb{E}[g_{t,i}]\right)^\top q_t^j\right]^+$.}
	
	We simply bound the quantity of interest as follows:
	\begin{align*}
		\sum_{t \in [T]} \sum_{j \in [M]} \frac{w_{t,j}\mathbb{I}(j_t=j)}{w_{t,j}+\gamma}\sum_{i \in [m]}&\left[\left(g_i^\circ-\mathbb{E}[g_{t,i}]\right)^\top q_t^j\right]^+ \\ & \le m \max_{i \in [m]}\sum_{t \in [T]}\sum_{j \in [M]}\mathbb{I}(j_t=j) \lVert g_i^\circ-\mathbb{E}[g_{t,i}]\rVert_1 \\ &\le mC.
	\end{align*} 
	\textbf{Final result.}
	To conclude we employ the Azuma-Hoeffding inequality on the violation definition, obtaining, with probability at least $1-\delta$:
	\begin{align*}
		V_T & = \sum_{j \in [M]}\max_{i \in [m]}\sum_{t \in [T]}\mathbb{I}(j_t=j)\left[\mathbb{E}[g_{t,i}]^\top q_t^j - \alpha_i\right]^+ \\
		& \le \sum_{j \in [M]}\max_{i \in [m]}\sum_{t \in [T]}w_{t,j}\left[\mathbb{E}[g_{t,i}]^\top q_t^j - \alpha_i\right]^+ + L\sqrt{2T\ln\left(\frac{1}{\delta}\right)}.
	\end{align*}
	Thus, plugging the previous bounds in Equation~\eqref{eq: vhat eq3}, we obtain, with probability at least $1-10\delta$:
	\begin{align*}
		&V_T-\widehat{V}_T \\ & \le  \sum_{j \in [M]}\max_{i \in [m]}\sum_{t \in [T]}\mathbb{I}(j_t=j)\left[\mathbb{E}[g_{t,i}]^\top q_t^j - \alpha_i\right]^+ -\sum_{t \in [T]}\sum_{j \in [M]} \frac{w_{t,j}\mathbb{I}(j_t=j)}{w_{t,j}+\gamma} \sum_{i \in [m]} \left[\widehat{g}^j_{t,i}{}^\top \widehat{q}_t^j-\alpha_i\right]^+\\
		& \le
		m\sum_{t \in [T]}\sum_{j\in [M]} \frac{w_{t,j}\mathbb{I}(j_t=j)}{w_{t,j}+\gamma}\widehat\xi_{t,j}^{\ \top} q_t^j + \sum_{t \in [T]}\sum_{j\in [M]} \frac{w_{t,j}\mathbb{I}(j_t=j)}{w_{t,j}+\gamma}\sum_{i \in [m]}\left[\frac{1}{T}\sum_{\tau \in [T]}(\mathbb{E}[g_{\tau,i}]-\mathbb{E}[g_{t,i}])^\top q_t^j\right]^+ \\
		& \mkern200mu+\gamma TLm + 2Lm\sqrt{2T\left(\frac{1}{\delta} \right)} + 
		 \mathcal{O}\left(mL|X|\sqrt{|A|T\ln\left(\frac{T|X||A|}{\delta}\right)}\right)\\
		 & = \mathcal{O}\left(mL|X|\sqrt{|A|T \ln \left(\frac{mMT|X||A|}{\delta}\right)}+ m \ln(T)|X||A|C + \gamma TLM\right)
	\end{align*}
	This concludes the proof.
\end{proof}
We proceed bounding the estimated violation attained by the optimal instance $j^*$.
\begin{lemma}
		\label{lemma: widehat V i*}
	For any $\delta \in (0,1)$, with probability at least $1-16 \delta$, it holds:
	\begin{align*}
		\widehat{V}_{T,j^*} &\le \mathcal{O}\left(mL|X|\sqrt{|A|T\ln\left(\frac{mMT|X||A|}{\delta}\right)}+ m\beta_6C +m\ln(T)|X||A|C + Lm \frac{\ln\left(\frac{M}{\delta}\right)}{2\gamma} \right) \\&\mkern350mu+  m\sqrt{\beta_4T}\nu_{T,j^*}+m\beta_5\nu_{T,j^*} .
	\end{align*}
\end{lemma}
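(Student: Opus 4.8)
The plan is to treat $\widehat{V}_{T,j^*}$ as the (importance-weighted) estimated cumulative violation incurred by the single instance $\texttt{Alg}^{j^*}$, and to bound it by adapting the known-$C$ analysis of Theorem~\ref{theo: violationbound} to the subsequence of episodes on which $j^*$ is selected. Two features of the meta-procedure make this more delicate than the plain known-$C$ case: first, the estimator in Equation~\eqref{def: VThat i*} carries the importance weight $\frac{\mathbb{I}(j_t=j^*)}{w_{t,j^*}+\gamma}$ rather than summing directly over the selected episodes; second, $\texttt{Alg}^{j^*}$ is a \emph{stabilized} copy of Algorithm~\ref{alg:NS_UCSPS} that advances its counter $N_t^{j^*}$ only when it is actually played, so its internal estimates must be controlled through the corruption-robustness machinery of Appendix~\ref{app: stability}. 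Throughout I would work on the intersection of the confidence events, which by a union bound over the $M$ instances hold simultaneously; this is exactly why the logarithmic factor becomes $\ln(\nicefrac{mMT|X||A|}{\delta})$ rather than $\ln(\nicefrac{mT|X||A|}{\delta})$.

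First I would remove the importance weight. Writing $\bar v_t \coloneqq \sum_{i \in [m]}[\widehat{g}_{t,i}^{\ j^*\top}\widehat{q}_t^{\ j^*}-\alpha_i]^+$, which is predictable with respect to the filtration up to episode $t$ and bounded by $mL$, the quantity $\frac{\mathbb{I}(j_t=j^*)}{w_{t,j^*}+\gamma}\bar v_t$ is precisely an optimistic loss estimator of the Neu type. Applying the same log-moment and Markov-inequality argument as in Lemma~\ref{lemma: rhat i*} and Corollary~\ref{cor: Neu2015} (with per-episode bound $mL$ in place of $L$), and noting that its conditional expectation is $\frac{w_{t,j^*}}{w_{t,j^*}+\gamma}\bar v_t \le \bar v_t$, gives with high probability $\widehat{V}_{T,j^*} \le \sum_{t \in [T]} \bar v_t + \frac{Lm\ln(\nicefrac{M}{\delta})}{2\gamma}$, which already isolates the last explicit term of the statement.

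It then remains to bound $\sum_{t} \bar v_t$, the estimated violation of $\texttt{Alg}^{j^*}$ measured as if it were run at every episode. Since $C\in[2^{j^*-1},2^{j^*}]$, the guess $2^{j^*}\ge C$ fed to $\texttt{Alg}^{j^*}$ is an overestimate, so Corollary~\ref{cor: confintG over} keeps all confidence events valid and Lemma~\ref{lemma: aux regret} ensures the linear program solved by $\texttt{Alg}^{j^*}$ is feasible and contains every constraint-satisfying occupancy; in particular $\underline{g}_{t-1,i}^{\ j^*\top}\widehat{q}_t^{\ j^*}\le\alpha_i$. I would use this to replace $\alpha_i$ and write $[\widehat{g}_{t,i}^{\ j^*\top}\widehat{q}_t^{\ j^*}-\alpha_i]^+ \le [(\widehat{g}_{t,i}^{\ j^*}-\underline{g}_{t-1,i}^{\ j^*})^\top\widehat{q}_t^{\ j^*}]^+$, then control the gap by the confidence width $\widehat\xi_{t,j^*}$ of Equation~\eqref{def: xi True} (built with the true $C$ via Corollary~\ref{cor: CI for G}) up to an estimate-drift term. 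Summing the confidence widths over the selected subsequence against the counters $N_t^{j^*}$ reproduces, exactly as in the proof of Theorem~\ref{theo: violationbound}, the concentration term $mL|X|\sqrt{|A|T\ln(\nicefrac{mMT|X||A|}{\delta})}$ and the corruption term $m\ln(T)|X||A|C$ (the extra factor $m$ arising from $\sum_{i\in[m]}\le m\max_{i\in[m]}$, and the replacement $\widehat q_t^{j^*}\to q_t^{j^*}$ handled under $\mathcal{E}_{\widehat{q}}$ via Lemma~\ref{lem:transition_jin}). The drift between the intermittently-updated instance and its every-episode counterpart is what the stabilization of Appendix~\ref{app: stability} absorbs, contributing the terms $m\beta_6 C$, $m\sqrt{\beta_4 T}\,\nu_{T,j^*}$ and $m\beta_5\,\nu_{T,j^*}$, which scale with $\nu_{T,j^*}=\max_{\tau\le T}\nicefrac{1}{w_{\tau,j^*}}$ because a rarely-selected instance has sparse data and hence a larger stability penalty. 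The main obstacle is precisely this last step: importing the known-$C$ violation analysis onto the \emph{data-dependent, adversarially-chosen} subsequence $\{t: j_t=j^*\}$ while (i) preserving the $\sqrt{T}$ rate rather than obtaining a rate in the number of selections, and (ii) matching the exact $\beta$- and $\nu_{T,j^*}$-dependence dictated by the corruption-robustness lemmas.
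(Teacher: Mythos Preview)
Your high-level plan is close to the paper's, but the order of operations you propose breaks the argument. You first apply the Neu-type deflation \emph{globally} to obtain $\widehat{V}_{T,j^*}\le \sum_{t\in[T]}\bar v_t + \frac{Lm\ln(M/\delta)}{2\gamma}$, where the sum now runs over \emph{all} episodes $t\in[T]$. You then write that you will bound this by ``summing the confidence widths over the selected subsequence against the counters $N_t^{j^*}$''. These two statements are inconsistent: once the indicator $\mathbb{I}(j_t=j^*)$ has been removed, you are no longer summing over the selected subsequence. On the many episodes where $j_t\neq j^*$, the instance's internal quantities $\widehat g^{\,j^*}_t,\widehat q^{\,j^*}_t,\widehat\xi_{t,j^*}$ are frozen, and the crucial Azuma step from Theorem~\ref{theo: violationbound} that replaces $\widehat\xi_{t,j^*}^{\,\top}q_t^{\,j^*}$ by $\sum_{x,a}\widehat\xi_{t,j^*}(x,a)\mathbb{I}_t(x,a)$ fails, because the path actually traversed at episode $t$ is drawn from $\pi_t^{\,j_t}$, not from $\pi_t^{\,j^*}$. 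Invoking Appendix~\ref{app: stability} does not rescue this: Corollary~\ref{corollary: stable V} controls $\sum_{t\in[T]}[\mathbb{E}[g_{t,i}]^\top q_t^{\,j^*}-\alpha_i]^+$, the \emph{true} violation of the policies of instance $j^*$, not the confidence-width sum $\sum_{t\in[T]}\widehat\xi_{t,j^*}^{\,\top}q_t^{\,j^*}$ or the estimated quantity $\sum_t\bar v_t$ you are left with.

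The paper avoids this by decomposing \emph{before} deflating. With the weight $\frac{\mathbb{I}(j_t=j^*)}{w_{t,j^*}+\gamma}$ still attached, it splits each summand into (i) the true per-episode violation $[\mathbb{E}[g_{t,i}]^\top q_t^{\,j^*}-\alpha_i]^+$, (ii) the confidence width $\widehat\xi_{t,j^*}^{\,\top}q_t^{\,j^*}$, (iii) the corruption term $[(g_i^\circ-\mathbb{E}[g_{t,i}])^\top q_t^{\,j^*}]^+$, and (iv) $\lVert\widehat q_t^{\,j^*}-q_t^{\,j^*}\rVert_1$. It then applies \emph{different} deflation tools to different pieces: term (i) is fully deflated via Corollary~\ref{cor: Neu2015} (producing the $\frac{Lm\ln(M/\delta)}{2\gamma}$ term) and then bounded over all $t$ by Corollary~\ref{corollary: stable V} (producing the $\nu_{T,j^*}$ factors); term (ii) is only partially deflated via Lemma~\ref{lem:Neu2015}, which removes the $1/(w_{t,j^*}+\gamma)$ normalization but \emph{keeps} the indicator $\mathbb{I}(j_t=j^*)$, so that the subsequent Azuma/counter argument runs precisely on the episodes where $\pi_t^{\,j^*}$ is actually played and yields the $\sqrt{T}$ bound without a $\nu_{T,j^*}$ factor. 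If you want your route to go through, you must at minimum delay the Neu step for the confidence-width portion until after you have isolated it with the selection indicator intact.
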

\begin{proof}
	We start by observing that with, probability at least $1-\delta$ under $\mathcal{E}_{\widehat{q}}$, the quantity of interest is bounded as follows:
	\begin{subequations}
	\begin{align}
		&\sum_{t \in [T]}\frac{\mathbb{I}(j_t=j^*)}{w_{t,j^*}+ \gamma}\sum_{i \in [m]}\left[\widehat{g}_{t,i}^{j^*}{}^\top \widehat{q}_t^{j^*}-\alpha_i\right]^+ \nonumber
		\\& \le \sum_{t \in [T]}\frac{\mathbb{I}(j_t=j^*)}{w_{t,j^*}+ \gamma}\sum_{i \in [m]}\left(\left[\widehat{g}_{t,i}^{j^*}{}^\top (\widehat{q}_t^{j^*}-{q}_t^{j^*})+ \widehat{g}_{t,i}^{j^*}{}^\top {q}_t^{j^*} -\widehat\xi_{t,j^*}^{\ \top} q_t^{j^*}-\alpha_i\right]^+ + \widehat\xi_{t,j^*}^{\ \top} q_t^{j^*}\right) \label{eq:Vi 1}\\
		& \le \sum_{t \in [T]}\frac{\mathbb{I}(j_t=j^*)}{w_{t,j^*}+ \gamma} \sum_{i \in [m]}\Bigg(\left[\mathbb{E}[g_{t,i}]^\top q_t^{j^*}-\alpha_i\right]^+ + \widehat\xi_{t,j^*}^{\ \top}{q_t^{j^*}}+ \nonumber\\ & \mkern240mu+ \left[g^\circ_i{}^\top q_t^{j^*} - \mathbb{E}[g_{t,i}]^\top q_t^{j^*} \right]^+ + \lVert \widehat{q}_t^{j^*}-{q}_t^{j^*} \rVert_1\Bigg) \label{eq:Vi 2} \\ 
		& \le \sum_{t \in [T]}\frac{\mathbb{I}(j_t=j^*)}{w_{t,j^*}+ \gamma}\sum_{i \in [m]}\left(\left[\mathbb{E}[g_{t,i}]^\top q_t^{j^*}-\alpha_i\right]^+  + \widehat\xi_{t,j^*}^{\ \top} q_t^{j^*}+ \left[\left(g^\circ_i -\mathbb{E}[g_{t,i}]\right)^\top q_t^{j^*} \right]^+\right) \nonumber\\
		&  \mkern370mu+\mathcal{O}\left(L|X|\sqrt{|A|T\ln\left(\frac{T|X||A|}{\delta}\right)}\right), \label{eq: Vi qhat}
    	\end{align}
	\end{subequations}
	where Inequality~\eqref{eq:Vi 1} holds since $[a+b]^+ \le [a]^++[b]^+,~\forall a,b \in \mathbb{R}$ and by the definition of $\widehat\xi_{t,j^*}$ (see Equation~\eqref{def: xi True}) which implies that all its elements are positive, Inequality~\eqref{eq:Vi 2} holds with probability at least $1-\delta$ by Corollary~\ref{cor: CI for G} and by union bound over $M$, and since that $\lVert\widehat{g}_{t,i}\rVert_\infty \le 1$ and Inequality~\eqref{eq: Vi qhat} holds with probability at least $1-6\delta$ by Lemma~\ref{lem:transition_jin}.
	
	\textbf{Upper-bound to $\sum_{t \in [T]}\frac{\mathbb{I}(j_t=j^*)}{w_{t,j^*}+ \gamma}\sum_{i \in [m]} \left[\left(g^\circ_i-\mathbb{E}[g_{t,i}]\right)^\top  q_t^{j^*} \right]^+$.}
	
	It is immediate to bound the quantity of interest employing the definition of corruption $C$ and by Lemma~\ref{lem:Neu2015}. Indeed, with probability at least $1-\delta$:
	\begin{equation*}
		\sum_{t \in [T]}\frac{\mathbb{I}(j_t=j^*)}{w_{t,j^*}+ \gamma}\sum_{i \in [m]} \left[\left(g^\circ_i-\mathbb{E}[g_{t,i}]\right)^\top  q_t^{j^*}  \right]^+ \le Lm \sqrt{2T\ln \left(\frac{1}{\delta}\right)} + mC.
	\end{equation*}
	
	\textbf{Upper-bound to $\sum_{t \in [T]}\frac{\mathbb{I}(j_t=j^*)}{w_{t,j^*}+ \gamma}\sum_{i \in [m]}\left[\mathbb{E}[g_{t,i}]^\top q_t^{j^*}-\alpha_i\right]^+$.}
	
	We bound the quantity of interest as follows. With probability at least $1-11\delta$, it holds:
	\begin{subequations}
	\begin{align}
		\sum_{t \in [T]}&\frac{\mathbb{I}(j_t=j^*)}{w_{t,j^*}+ \gamma}\sum_{i \in [m]}\left[\mathbb{E}[g_{t,i}]^\top q_t^{j^*}-\alpha_i\right]^+ \nonumber\\
		& \le   m\sqrt{\beta_4T}\nu_{T,j^*}+ m\beta_5\nu_{T,j^*} + 2m\beta_6C + Lm \frac{\ln\left(\frac{M}{\delta}\right)}{2\gamma},
	\end{align}
	\end{subequations}
	thank to Corollary \ref{cor: Neu2015} and  Corollary \ref{corollary: stable V} .

	\textbf{Upper-bound to $\sum_{t \in [T]}\frac{\mathbb{I}(j_t=j^*)}{w_{t,j^*}+ \gamma}\sum_{i \in [m]}\widehat\xi_{t,j^*}^{\ \top} q_t^{j^*}$.} 
	
	First, notice that, with probability at least $1-\delta$, it holds:
	\begin{equation*}
		\sum_{t \in [T]}\frac{\mathbb{I}(j_t=j^*)}{w_{t,j^*}+ \gamma}\sum_{i \in [m]}\widehat\xi_{t,j^*}^{\ \top} q_t^{j^*}- m\sum_{t \in [T]} \mathbb{I}(j_t=j^*)\widehat\xi_{t,j^*}^{\ \top}q_t^{j^*} \le L\sqrt{2T\ln \left(\frac{1}{\delta}\right)},
	\end{equation*}
	where we employed Lemma~\ref{lem:Neu2015}. Now we observe that, with probability at least $1-\delta$, it holds:
	\begin{align*}
		\sum_{t=1}^T\widehat\xi_{t-1,j^*}^{\ \top} q_t \mathbb{I}(j_t=j^*)& = \sum_{t=1}^T\sum_{x,a}\widehat\xi_{t-1,j^*}(x,a) q_t^{j^*}(x,a) \mathbb{I}(j_t=j^*)\nonumber \\
		& \leq  \sum_{t=1}^T\sum_{x,a}\widehat\xi_{t-1,j^*}(x,a) \mathbb{I}_t(x,a)\mathbb{I}(j_t=j^*) + L\sqrt{2T\ln\frac{1}{\delta}} \\
		& = \mathcal{O}\left(\sqrt{|X||A|LT\ln\left(\frac{mMT|X||A|}{\delta}\right)}+ \ln(T)|X||A|C + L\sqrt{T\ln\frac{1}{\delta}}\right),
		\end{align*}
	where employed the same steps as in the proof of Theorem~\ref{theo: violationbound}, considering that the counter increases if and only if $\mathbb{I}_t(x,a)\mathbb{I}(j_t=j^*)=1$. 
	
	Combining the previous bounds concludes the proof.
\end{proof}

\subsection{Main results}

In the following, we provide the main results attained by Algorithm~\ref{alg:unknown_c} in terms of regret and violations.
We start providing the regret bound and the related proof.
\RegretUnknown*	
\begin{proof}
	Employing algorithm \ref{alg:unknown_c}, with probability at least $1-14\delta$,  it holds:
	\begin{subequations}
	\begin{align}
		R_T& = \sum_{t \in [T]}\overline{r}^\top q^*-\sum_{t \in [T]}\overline{r}^\top q_t \nonumber\\
		& = \sum_{t \in [T]}\overline{r}^\top (q^*-q_t^{j^*}) + \sum_{t \in [T]}\overline{r}^\top (q_t^{j^*}-q_t) \nonumber\\
		& =  \sqrt{\beta_1T}\nu_{T,j^*} + \beta_2\nu_{T,j^*} + 2\beta_3C + \sum_{t \in [T]}\overline{r}^\top (q_t^{j^*}-q_t) \label{eq: reg5}\\
		 & \le \sqrt{\beta_1T}\nu_{T,j^*} + \beta_2\nu_{T,j^*} + 2\beta_3C  + 2C - \frac{Lm+1}{\rho}\widehat{V}_T + \frac{Lm+1}{\rho}\widehat{V}_{T,j^*}\nonumber\\
		 & \mkern50mu  - (\sqrt{\beta_1}+\frac{m(Lm+1)}{\rho}\sqrt{\beta_4})\sqrt{T}\nu_{T,j*}-\left(\beta_2+\frac{m(mL+1)}{\rho}\beta_5\right)\nu_{T,j^*}\nonumber
		 \\&  \mkern50mu+ \mathcal{O}\bigg(\frac{M\ln T}{\eta}+ \eta~ {m^4 L^4  T M }+ \eta~M\ln(T)m^4L^2\left(\beta_2^2+\beta_5^2\right) \nonumber\\
		 & \mkern50mu + \eta T(\beta_1+L^2m^4\beta_4)M\log(T)+\gamma T L M + L\sqrt{T\ln \left(\nicefrac{1}{\delta}\right)} + \frac{Lm}{\gamma} \ln \left(\nicefrac{1}{\delta}\right)\bigg)\label{eq: reg6}.
	\end{align}
	\end{subequations}
	where Inequality \eqref{eq: reg5} hold with probability at least $1-11\delta$ by Corollary \ref{corollary: stable R},Inequality \eqref{eq: reg6} holds with probability at least $1-3\delta$ thanks to Lemma \ref{lemma:FTRL} and to the following reasoning, which holds with probability at least $1-\delta$:
	 \begin{subequations}
	 	\begin{align}
	 		\sum_{t \in [T]}\overline{r}^\top (q_t^{j*}-q_t) &= \sum_{t \in [T]} (\overline{r}-\mathbb{E}[r_t])^\top (q_t^{j*}-q_t)+ \sum_{t \in [T]}\mathbb{E}[r_t]^\top (q_t^{j*}-q_t) \nonumber\\
	 		& \le \sum_{t \in [T]}\lVert \overline{r}-\mathbb{E}[r_t]\rVert_1 + \sum_{t \in [T]} \mathbb{E}[r_t]^\top\left( q_t^{j*}-q_t  \right)\label{un1}\\
	 		& \le 2C + \sum_{t \in [T]} \mathbb{E}[r_t]^\top\left( q_t^{j*}-q_t \right) \label{un2}\\
	 		& \le 2C + \sum_{t \in [T]} \mathbb{E}[r_t]^\top q_t^{j*}-\sum_{t \in [T]}\sum_{j\in[M]}w_{t,j} \mathbb{E}[r_t]^\top q_t^{j} + L\sqrt{2T\ln(\nicefrac{1}{\delta})}\label{un3}
	 	\end{align}
	 \end{subequations}
	 where Inequality~\eqref{un1} holds since $|q_t(x,a)-q_t^{j^*}(x,a)|\le 1,~ \forall (x,a)\in X\times A$, where Inequality~\eqref{un2} holds by definition of $C$, and where Inequality \eqref{un3} use Azuma-Hoeffding inequality.
	 We can apply Lemma~\ref{lemma: widehat V i*} to bound $\widehat{V}_{T,j^*}$ with high probability. In fact we observe that with probability at least $1- 16\delta$, it holds:
	\begin{align*}
	 &\frac{Lm+1}{\rho}\widehat{V}_{T,j^*} 		\nonumber \\
	 & \le \mathcal{O}\left(m^2L^2|X|\sqrt{|A|T\ln\left(\frac{mMT|X||A|}{\delta}\right)}+ m^2L\beta_6C +m^2L\ln(T)|X||A|C + L^2m^2 \frac{\ln\left(\frac{M}{\delta}\right)}{2\gamma}\right) \\
		& \mkern200mu+ \frac{(Lm+1)m}{\rho}\beta_5\nu_{T,j^*}+ \frac{m(Lm+1)}{\rho}\sqrt{\beta_4T}\nu_{T,j^*}.
	\end{align*}
	Finally, combining the previous results and by Union Bound, with probability at least $1-30 \delta$, it holds:
	\begin{align}
		&R_T + \frac{Lm+1}{\rho}\widehat{V}_T \nonumber\\ &\leq \mathcal{O}\bigg(\frac{M\ln T}{\eta}+ \eta~ {m^4 L^4  T M }+ \eta~M\ln(T)m^4L^2(\beta_2^2 + \beta_5^2) + \eta T(\beta_1+L^2m^4\beta_4)M\log(T)\nonumber\\
		& \quad +\gamma T L M + L\sqrt{T\ln \left(\nicefrac{1}{\delta}\right)} + \frac{Lm}{\gamma} \ln \left(\nicefrac{1}{\delta}\right)\nonumber\\
		& \quad + m^2L^2|X|\sqrt{|A|T\ln\left(\frac{mMT|X||A|}{\delta}\right)}+ mL\beta_6C + \beta_3 C + m^2L|X||A|\ln(T)C \bigg)\label{eq: reg + Vhat}
	\end{align}
	which concludes the proof after observing that $\widehat{V}_T\ge 0$, by definition, and setting $\gamma=\sqrt{\frac{\ln(M/\delta)}{TM}}$, $\eta \le \frac{1}{2\Lambda m\left(\sqrt{\beta_1T}+\beta_2+\beta_5+\sqrt{\beta_4T}\right)}$.
\end{proof}
We conclude the section providing the violations bound and the related proof.
\ViolationUnknown*
\begin{proof}
	
	Starting from Inequality \eqref{eq: reg + Vhat}, in order to obtain the final violations bound, it is necessary to find an upper bound for $-R_T$. We proceed as follows,
	\begin{subequations}
		\begin{align}
			\overline{r}^\top q^*& = \text{OPT}_{\overline{r},\overline{G},\alpha} \label{-reg1}\\
			& = \max_{q \in \Delta(P)}\left(\overline{r}^\top q - \frac{L}{\rho} \sum_{i \in [m]}\left[\overline{G}_i^\top q-\alpha_i \right]^+\right) \label{-reg2} \\
			& \ge \overline{r}^\top q_t - \frac{L}{\rho} \sum_{i \in [m]}\left[\overline{G}_i^\top q_t-\alpha_i \right]^+ \nonumber,
		\end{align}
	\end{subequations}
	where Equality~\eqref{-reg1} holds since $q^*$ is the feasible occupancy that maximizes the reward vector $\overline{r}$ and Equality~\eqref{-reg2} holds by Theorem~\ref{theo: strong duality} . This implies $\overline{r}^\top q_t-\overline{r}^\top q^*  \le \frac{L}{\rho}\sum_{i \in [m]}\left[\overline{G}_i^\top q_t-\alpha_i \right]^+$.
	Moreover, it holds:
	\begin{subequations}
		\begin{align}
			\sum_{t \in [T]}&\sum_{i \in [m]}\left[\overline{G}_i^\top q_t-\alpha_i\right]^+\nonumber \\ & \le \sum_{t \in [T]}\left(	\sum_{i \in [m]}\left[\mathbb{E}[g_{t,i}]^\top q_t-\alpha_i \right]^+ + \sum_{i \in [m]}\left[(\overline{G}_i-\mathbb{E}[g_{t,i}])^\top q_t\right]^+ \right) \label{-reg4}\\ 
			& \le\sum_{t \in [T]}\left( \sum_{i \in [m]}\left[\mathbb{E}[g_{t,i}]^\top q_t-\alpha_i \right]^+ + \sum_{i \in [m]} \left\lVert\overline{G}_i-\mathbb{E}[g_{t,i}] \right\rVert_1 \right)\label{-reg5}\\
			& \le \sum_{t \in [T]}\left( \sum_{i \in [m]}\left[\mathbb{E}[g_{t,i}]^\top q_t-\alpha_i \right]^+ + \sum_{i \in [m]} \left( \left\lVert\overline{G}_i-g^\circ_i\right\rVert_1 + \left\lVert g^\circ_i-\mathbb{E}[g_{t,i}] \right\rVert_1 \right)\right) \nonumber\\
			& \le m V_T + 2mC \label{-reg6},
		\end{align}
	\end{subequations}
	where Inequality~\eqref{-reg4} holds since $[a+b]^+ \le [a]^+ + [b]^+,a\in \mathbb{R},b\in \mathbb{R}$, Inequality~\eqref{-reg5} holds since  $q_t(x,a)\le 1 \forall t \in [T],\forall(x,a)\in X\times A$, and finally Inequality~\eqref{-reg6} holds by definition of $C$ and $V_T$ and noticing that $m\max_{i \in [m]}a_i \ge \sum_{i \in [m]}a_i, ~\forall\{a_i\}_{i\in[m]}\subset\mathbb{R}^m$. 
	Thus, combining the previous bounds we lower bound the quantity of interest as follows:
	\begin{subequations}
	\begin{align}
		R_T + \frac{Lm+1}{\rho}V_T &= \sum_{t \in [T]} \mathbb{E}[r_t]^\top \left(q^*-q_t\right) + \frac{Lm+1}{\rho}V_T \nonumber\\
		& = \sum_{t \in [T]} \left(\mathbb{E}[r_t]-\overline{r}\right)^\top(q^*-q_t) + \sum_{t \in [T]}\overline{r}^\top(q^*-q_t) + \frac{Lm+1}{\rho}V_T \nonumber\\
		& \ge -\sum_{t \in [T]}\left\lVert\mathbb{E}[r_t]-\overline{r}\right\rVert_1+ \sum_{t \in [T]}\overline{r}^\top(q^*-q_t) + \frac{Lm+1}{\rho}V_T \label{aux_r_-}\\
		&\ge -2C - \frac{L}{\rho}\left(m V_T + 2mC\right) + \frac{Lm+1}{\rho}V_T \label{eq_V_t_rho_bis}\\
		& = -2C -\frac{2LmC}{\rho}+ V_T\left(\frac{Lm+1}{\rho}-\frac{Lm}{\rho}\right)\nonumber\\
		& = \frac{1}{\rho}V_T - \left(2C + \frac{2LmC}{\rho}\right), \label{eq_V_t_rho}
	\end{align}
	\end{subequations}
	where Inequality~\eqref{aux_r_-} holds since $\underline{v}^\top \underline{w} \ge -\lVert\underline{v}\rVert_1\lVert\underline{w}\rVert_\infty, \forall\underline{v},\underline{w}\in \mathbb{R}^p,p\in \mathbb{N}$, and
	where Inequality~\eqref{eq_V_t_rho_bis} holds since $\overline{r}^\top(q^*-q_t) \ge -\frac{L}{\rho}\sum_{i \in [m]}\left[\overline{G}_i^\top q_t-\alpha_i \right]^+ \ge -\left(mV_T+2mC\right)$ and by definition of $C$.
	Thus, rearranging Inequality~\eqref{eq_V_t_rho}, we finally bound the cumulative violation as follows:
	\begin{align*}
		V_T & \le 2\rho C + 2LmC + \rho R_T + (Lm+1)V_T \\
		& = 2\rho C + 2LmC + (Lm+1)\left(V_T-\widehat{V}_T\right) + \rho\left(R_T + \frac{Lm+1}{\rho}\widehat{V}_T\right)\\
		& \leq  \mathcal{O}\Bigg(m^2L^2|X|\sqrt{|A|T\ln\left(\frac{mMT|X||A|}{\delta}\right)} + m^2L \ln(T)|X||A|C + 
		\gamma mTL^2M\Bigg) \\ &\mkern400mu+ \mathcal{O}\left(R_T + \frac{Lm+1}{\rho}\widehat{V}_T\right),
	\end{align*}
	where the last inequality holds by Equation~\eqref{eq: reg + Vhat} and by Lemma~\ref{lemma: widehat V}, with probability at least $1-4\delta$ under $\mathcal{E}_{\widehat{q}}$. Employing a Union Bound, setting $\gamma=\sqrt{\frac{\ln(M/\delta)}{TM}}$  and $\eta \le \frac{1}{2\Lambda m\left(\sqrt{\beta_1T}+\beta_2+\beta_5+\sqrt{\beta_4T}\right)}$ concludes the proof.
\end{proof}

\section{Auxiliary lemmas from existing works}
\label{app:auxiliary}

In the following section, we provide useful lemma from existing works.
\subsection{Auxiliary lemmas for the FTRL master algorithm}
In the following, we provide the optimization bound attained by the FTRL instance employed by Algorithm~\ref{alg:unknown_c}.
\begin{lemma} [\cite{jin2024no}]
	\label{lem: FTRL}
	The FTRL algorithm over a convex subset $\Omega$
	of the $(M-1)$-dimensional simplex $\Delta_M$ :
	\begin{equation*}
		w_{t+1} = \underset{w \in \Omega}{\argmin} \left\{\sum_{\tau\in[t]} \ell_\tau^\top w + \frac{1}{\eta}\sum_{j\in[M]} \ln \left(\frac{1}{w_j}\right) \right\},
	\end{equation*}
	ensures for all $u \in \Omega$:
	\begin{equation*}
		\sum_{t \in[T]} \ell_t^\top (w_t-u) \le \frac{M \ln T}{\eta} + \eta \sum_{t \in[T]} \sum_{j\in[M]} w_{t,j}^2 \ell_{t,j}^2,
	\end{equation*}
	as long as $\eta w_{t,j} \lvert \ell_{t,j} \rvert \le \frac{1}{2}$ for all $t,j$.
\end{lemma}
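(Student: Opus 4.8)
The plan is to follow the textbook analysis of follow-the-regularized-leader (FTRL) specialized to the log-barrier regularizer $\psi(w) \coloneqq \frac{1}{\eta}\sum_{j\in[M]}\ln(1/w_j)$, whose induced iterate is exactly the one in the statement, with $w_{t+1}=\argmin_{w\in\Omega}\{\sum_{\tau\in[t]}\ell_\tau^\top w + \psi(w)\}$ and $w_1=\argmin_{w\in\Omega}\psi(w)$. The first step is the standard FTRL stability decomposition, obtained through a be-the-leader telescoping argument on the potentials $\Phi_t=\min_{w\in\Omega}\{\sum_{\tau\in[t]}\ell_\tau^\top w+\psi(w)\}$: for every $u\in\Omega$,
\begin{equation*}
\sum_{t\in[T]}\ell_t^\top(w_t-u) \le \psi(u)-\psi(w_1) + \sum_{t\in[T]}\Big(\ell_t^\top(w_t-w_{t+1}) - D_\psi(w_{t+1},w_t)\Big),
\end{equation*}
where $D_\psi$ denotes the Bregman divergence of $\psi$. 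It then remains to bound the range term and the per-step stability terms separately.

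For the range term, I would use that $w_1$ is the uniform distribution $w_{1,j}=1/M$ (the minimizer of the log-barrier, which lies in $\Omega$), so that $\psi(w_1)=\frac{M\ln M}{\eta}\ge 0$, while every $u\in\Omega$ obeys the cut constraint $u_j\ge 1/T$, giving $\psi(u)=\frac{1}{\eta}\sum_{j\in[M]}\ln(1/u_j)\le \frac{M\ln T}{\eta}$. Hence $\psi(u)-\psi(w_1)\le \frac{M\ln T}{\eta}$, which is precisely the first term in the claimed bound.

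The core of the argument is the per-step stability inequality
\begin{equation*}
\ell_t^\top(w_t-w_{t+1})-D_\psi(w_{t+1},w_t)\le \eta\sum_{j\in[M]}w_{t,j}^2\ell_{t,j}^2,
\end{equation*}
for which I would exploit the self-concordance of the log-barrier: its Hessian is $\nabla^2\psi(w)=\frac{1}{\eta}\,\mathrm{diag}(1/w_j^2)$, so the local norm at $w_t$ has dual $\lVert\ell_t\rVert_{*,w_t}^2=\eta\sum_{j\in[M]}w_{t,j}^2\ell_{t,j}^2$. The step-size condition $\eta w_{t,j}|\ell_{t,j}|\le \tfrac12$ guarantees that each coordinate $w_{t+1,j}$ stays within a constant factor of $w_{t,j}$, so the Hessian is comparable (up to an absolute constant) along the whole segment $[w_t,w_{t+1}]$; consequently $D_\psi(w_{t+1},w_t)$ lower-bounds a constant fraction of $\lVert w_{t+1}-w_t\rVert_{\nabla^2\psi(w_t)}^2$. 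Bounding the linear term by Cauchy--Schwarz in the local norm and optimizing over the step displacement yields the displayed inequality; summing over $t\in[T]$ produces the second term $\eta\sum_{t\in[T]}\sum_{j\in[M]}w_{t,j}^2\ell_{t,j}^2$.

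The main obstacle is precisely this last stability step: before the self-concordance comparison of Hessians can be invoked, one must first derive from the optimality conditions of the FTRL update, together with $\eta w_{t,j}|\ell_{t,j}|\le\tfrac12$, a multiplicative closeness of consecutive iterates such as $w_{t+1,j}\in[\tfrac12 w_{t,j},\,2w_{t,j}]$ for every $j$. Only once this coordinatewise control is established is the quadratic lower bound on $D_\psi$ legitimate and the dual-norm estimate applicable. Combining the range bound from the second paragraph with the summed stability bound from the third paragraph gives exactly $\frac{M\ln T}{\eta}+\eta\sum_{t\in[T]}\sum_{j\in[M]}w_{t,j}^2\ell_{t,j}^2$, which is the claimed inequality.
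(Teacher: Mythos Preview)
The paper does not give its own proof of this lemma: it is stated in Appendix~\ref{app:auxiliary} (``Auxiliary lemmas from existing works'') as a result imported directly from \cite{jin2024no}, with no argument supplied. Your sketch is the standard log-barrier FTRL analysis and is essentially the proof one finds in that reference, so there is nothing to compare against beyond saying that your approach matches the original.

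One minor point worth flagging: your range bound $\psi(u)-\psi(w_1)\le \frac{M\ln T}{\eta}$ relies on two facts that are not stated in the lemma itself but are implicit in how the paper (and \cite{jin2024no}) actually uses it, namely that $\Omega\subseteq\{w\in\Delta_M:w_j\ge 1/T\ \forall j\}$ (so $\ln(1/u_j)\le\ln T$) and that the uniform vector $w_{1,j}=1/M$ lies in $\Omega$ (so $w_1$ is indeed the unconstrained minimizer of $\psi$). Both hold in the application here since $M=\lceil\log_2 T\rceil\le T$, but the lemma as written for a generic convex $\Omega$ does not literally guarantee them; you are correctly reading the intended statement.
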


\subsection{Auxiliary lemmas for the optimistic loss estimator}
In the following, we provide some results related to the optimistic biased estimator of the loss function. Notice that, given any loss vector $\ell_t\in[0,1]^M$, the following results are provided for $\widehat{\ell}_{t,j}:= \frac{\mathbb{I}_t(j)}{w_{t,j}+\gamma_t}\ell_{t,j}$, where $j\in[M]$, $\ell_{t,j}$ is the $j$-th component of the loss vector, $\mathbb{I}_t(j)$ is the indicator functions which is $1$ when arm $j$ is played and $\gamma_t$ is defined as in the following lemmas.
\begin{lemma} [\cite{neu}]
	\label{lem:Neu2015}
	Let $(\gamma_t)$ be a fixed non-increasing sequence with $\gamma_t \ge 0$ and let $\alpha_{t,j}$ be nonnegative
	$\mathcal{F}_{t-1}$-measurable random variables satisfying $\alpha_{t,j}\le 2 \gamma_t$ for all $t$ and $j$. Then, with probability at
	least $1-\delta$,
	\begin{equation*}
		\sum_{t\in[T]} \sum_{j\in[M]} \alpha_{t,j}\left(\widehat{\ell}_{t,j}-\ell_{t,j}\right) \le \ln \left(\frac{1}{\delta}\right).
	\end{equation*}
\end{lemma}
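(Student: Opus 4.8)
The plan is to reproduce the standard implicit-exploration argument of \citet{neu}: build an exponential supermartingale out of the per-round deviations and then conclude by Markov's inequality. Throughout, let $\widetilde{\ell}_{t,j}:=\frac{\mathbb{I}_t(j)}{w_{t,j}}\ell_{t,j}$ be the \emph{unbiased} importance-weighted estimator, so that $\mathbb{E}[\widetilde{\ell}_{t,j}\mid\mathcal{F}_{t-1}]=\ell_{t,j}$, using that $w_{t,j}$, $\ell_{t,j}$, $\alpha_{t,j}$ are $\mathcal{F}_{t-1}$-measurable while $\mathbb{E}[\mathbb{I}_t(j)\mid\mathcal{F}_{t-1}]=w_{t,j}$. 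The estimator $\widehat{\ell}_{t,j}$ differs only through the extra $\gamma_t$ in the denominator, which makes it a (biased) underestimate of $\ell_{t,j}$ in expectation; the lemma controls the one-sided amount by which it can instead overestimate, aggregated with the weights $\alpha_{t,j}$.

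First I would establish the pointwise bound $\widehat{\ell}_{t,j}\le\frac{1}{2\gamma_t}\ln\!\big(1+2\gamma_t\widetilde{\ell}_{t,j}\big)$. Writing $\widehat{\ell}_{t,j}=\frac{\ell_{t,j}/w_{t,j}}{1+\gamma_t/w_{t,j}}\,\mathbb{I}_t(j)$ and using $\ell_{t,j}\le 1$ to replace $\gamma_t$ by the smaller $\gamma_t\ell_{t,j}$ in the denominator (an upper bound, since shrinking the denominator enlarges the fraction), the claim reduces to the elementary inequality $\frac{z}{1+z/2}\le\ln(1+z)$ for $z\ge0$, applied with $z=2\gamma_t\widetilde{\ell}_{t,j}$.

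The crucial step is the per-round exponential bound. Let $I_t$ be the unique arm played at round $t$, so that for every $j\neq I_t$ both $\widehat{\ell}_{t,j}$ and $\widetilde{\ell}_{t,j}$ vanish and $\sum_j\alpha_{t,j}\widehat{\ell}_{t,j}=\alpha_{t,I_t}\widehat{\ell}_{t,I_t}$ collapses to a single term. Exponentiating the pointwise bound and \emph{retaining} the exponent $\alpha_{t,I_t}/(2\gamma_t)$, which lies in $[0,1]$ by the hypothesis $\alpha_{t,j}\le 2\gamma_t$, Bernoulli's inequality $(1+x)^\theta\le 1+\theta x$ (for $\theta\in[0,1]$, $x\ge0$) gives
\[
\exp\Big(\sum_j\alpha_{t,j}\widehat{\ell}_{t,j}\Big)\le\big(1+2\gamma_t\widetilde{\ell}_{t,I_t}\big)^{\alpha_{t,I_t}/(2\gamma_t)}\le 1+\sum_j\alpha_{t,j}\widetilde{\ell}_{t,j}.
\]
Taking conditional expectation and using $1+x\le e^x$ yields $\mathbb{E}\big[\exp(\sum_j\alpha_{t,j}\widehat{\ell}_{t,j})\mid\mathcal{F}_{t-1}\big]\le 1+\sum_j\alpha_{t,j}\ell_{t,j}\le\exp(\sum_j\alpha_{t,j}\ell_{t,j})$. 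Consequently $Z_t:=\exp\big(\sum_{s\le t}\sum_j\alpha_{s,j}(\widehat{\ell}_{s,j}-\ell_{s,j})\big)$ obeys $\mathbb{E}[Z_t\mid\mathcal{F}_{t-1}]\le Z_{t-1}$, so $(Z_t)$ is a supermartingale with $\mathbb{E}[Z_T]\le Z_0=1$; Markov's inequality then gives $\mathbb{P}\big(\sum_t\sum_j\alpha_{t,j}(\widehat{\ell}_{t,j}-\ell_{t,j})>\ln(1/\delta)\big)=\mathbb{P}(Z_T>1/\delta)\le\delta$, which is the stated bound.

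The main obstacle is precisely the exponential step: one must resist prematurely bounding $\alpha_{t,j}/(2\gamma_t)\le1$ \emph{inside} the logarithm, since that replaces the correctly centered term $\sum_j\alpha_{t,j}\ell_{t,j}$ by a spurious $2\gamma_t\sum_j\ell_{t,j}$ with the wrong coefficient, which would break the supermartingale centering and destroy the $\ln(1/\delta)$ bound. Keeping the ratio $\alpha_{t,j}/(2\gamma_t)$ as an exponent and linearizing through Bernoulli's inequality, exploiting the single-played-arm structure that turns the product over $j$ into a sum, is exactly what makes the two $\alpha$-weighted sums line up. Everything else (the measurability bookkeeping and the final Markov step) is routine.
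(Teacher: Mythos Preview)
Your proof is correct and is exactly the standard implicit-exploration supermartingale argument of \citet{neu}; the paper itself does not reprove this lemma (it is listed as an auxiliary result from existing work), but your derivation matches both Neu's original and the variant the paper spells out in the proof of Lemma~\ref{lemma: rhat i*}. The key step you flag---keeping $\alpha_{t,I_t}/(2\gamma_t)$ as an exponent and linearizing via Bernoulli so that the centering term is $\sum_j\alpha_{t,j}\ell_{t,j}$ rather than $2\gamma_t\sum_j\ell_{t,j}$---is indeed the crux, and you handle it correctly.
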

\begin{corollary}[\cite{neu}]
	\label{cor: Neu2015}
	 Let $\gamma_t=\gamma \ge 0$ for all $t$. With probability at least $1 - \delta$,
	\begin{equation*}
		\sum_{t\in[T]} \left(\widehat{\ell}_{t,j}-\ell_{t,j}\right) \le \frac{\ln \left(\frac{M}{\delta}\right)}{2\gamma},
	\end{equation*}
	simultaneously holds for all $j \in [M]$.
\end{corollary}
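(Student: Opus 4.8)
The plan is to derive Corollary~\ref{cor: Neu2015} directly from Lemma~\ref{lem:Neu2015} by a tailored choice of the weights $\alpha_{t,j}$, followed by a union bound over the $M$ arms. First I would observe that the case $\gamma=0$ is vacuous, since then the right-hand side $\nicefrac{\ln(M/\delta)}{2\gamma}$ is $+\infty$ and the claimed inequality holds trivially; hence I assume $\gamma>0$. Fixing an arbitrary index $j\in[M]$, I would instantiate Lemma~\ref{lem:Neu2015} with the coefficients $\alpha_{t,j'}\coloneqq 2\gamma\,\mathbb{I}(j'=j)$ for all $t\in[T]$ and $j'\in[M]$. Because $\gamma_t=\gamma$ is constant by hypothesis, these coefficients are deterministic (and therefore trivially $\mathcal{F}_{t-1}$-measurable), nonnegative, and satisfy $\alpha_{t,j'}\le 2\gamma=2\gamma_t$, so every hypothesis of Lemma~\ref{lem:Neu2015} is met.

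With this choice the double sum appearing in the lemma collapses onto the single arm $j$, as $\alpha_{t,j'}$ vanishes whenever $j'\neq j$. Applying Lemma~\ref{lem:Neu2015} with confidence parameter $\delta/M$ in place of $\delta$ then gives, with probability at least $1-\delta/M$,
\begin{equation*}
	2\gamma\sum_{t\in[T]}\left(\widehat{\ell}_{t,j}-\ell_{t,j}\right) = \sum_{t\in[T]}\sum_{j'\in[M]}\alpha_{t,j'}\left(\widehat{\ell}_{t,j'}-\ell_{t,j'}\right) \le \ln\left(\frac{M}{\delta}\right).
\end{equation*}
Dividing through by $2\gamma>0$ yields the per-arm bound $\sum_{t\in[T]}(\widehat{\ell}_{t,j}-\ell_{t,j})\le\nicefrac{\ln(M/\delta)}{2\gamma}$ for the fixed index $j$.

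Finally, to upgrade this to a statement holding simultaneously for all $j\in[M]$, I would take a union bound over the $M$ events constructed above. Each one fails with probability at most $\delta/M$, so their union fails with probability at most $\delta$; on the complementary event, the per-arm inequality holds for every $j\in[M]$ at once, which is precisely the assertion of the corollary. The derivation is essentially routine once the indicator-type coefficients are identified. The only points requiring mild care are invoking the constancy $\gamma_t=\gamma$ to certify the constraint $\alpha_{t,j'}\le 2\gamma_t$ demanded by Lemma~\ref{lem:Neu2015}, and calibrating the inflated confidence $\delta/M$ so that the union bound reproduces exactly the $\ln(M/\delta)$ factor in the statement; no genuine obstacle arises beyond this bookkeeping.
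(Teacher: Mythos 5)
Your proof is correct and is exactly the standard derivation used in the cited source (Neu, 2015): instantiate the lemma with the indicator weights $\alpha_{t,j'}=2\gamma\,\mathbb{I}(j'=j)$, apply it at confidence $\delta/M$, divide by $2\gamma$, and union bound over the $M$ arms. The paper itself states this corollary without proof (it is imported from the reference), so there is nothing further to compare; your argument fills in precisely the intended routine steps.
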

\subsection{Auxiliary lemmas for the transitions estimation}
Next, we introduce \emph{confidence sets} for the transition function of a CMDP, by exploiting suitable concentration bounds for estimated transition probabilities.
By letting $M_t (x, a , x')$ be the total number of episodes up to $t \in [T]$ in which $(x,a) \in X \times A$ is visited and the environment transitions to state $x' \in X$, the estimated transition probability at $t$ for $(x,a,x')$ is:
\begin{equation*}
\overline{P}_t\left(x' |x, a\right)=\frac{M_t ( x, a , x')}{\max \left\{1, N_t(x, a)\right\}}.
\end{equation*}
Then, the confidence set for $P$ at episode $t \in [T]$ is defined as:
\begin{align*}
	\mathcal{P}_t := \Bigg\{\widehat{P}:\Big |&\overline{P}_t(x'|x,a)-\widehat{P}(x'|x,a)\Big |\le  \epsilon_t(x'|x,a), \\& \forall(x,a,x') \in X_k \times A \times X_{k+1}, k\in[0...L-1]\Bigg\},
\end{align*}
where $\epsilon_t(x'|x,a)$ is defined as:
\begin{equation*}
	\epsilon_t (x^\prime | x, a) \coloneqq 2\sqrt{\frac{\overline{P}_t\left(x^{\prime} | x, a\right)\ln \left({T|X||A|}/{\delta}\right)}{\max \left\{1, N_t(x, a)-1\right\}}} + \frac{14 \ln \left({T|X||A|}/{\delta}\right)}{3\max \left\{1, N_t(x, a)-1\right\}},
\end{equation*}
for some confidence $\delta \in(0,1)$.

Given the estimated transition function space $\mathcal{P}_t$, the following result can be proved.
\begin{lemma}[\cite{JinLearningAdversarial2019}]
	\label{lem:prob_int_jin}
	With probability at least $1-4\delta$, we have $P \in \mathcal{P}_t$ for all $t \in [T]$.
\end{lemma}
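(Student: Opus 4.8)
The plan is to derive $P\in\mathcal{P}_t$ from an empirical Bernstein concentration bound applied coordinate-wise, lifted to hold uniformly over all episodes by a union bound. First I would fix a triple $(x,a,x')\in X\times A\times X$ and observe that, for a fixed state-action pair $(x,a)$, the successive next-states drawn upon each visit to $(x,a)$ are i.i.d.\ samples from $P(\cdot\,|x,a)$, independently of the episode in which the visit occurs. Hence the indicators $\mathbb{I}[\text{next state}=x']$ recorded across the $N_t(x,a)$ visits to $(x,a)$ before episode $t$ are i.i.d.\ Bernoulli with mean $P(x'|x,a)$, and their empirical average is exactly $\overline{P}_t(x'|x,a)$.

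Next I would apply the empirical (self-bounding) version of Bernstein's inequality to these Bernoulli samples: for a fixed sample count $n$, with probability at least $1-\delta'$ the deviation $|\overline{P}_t(x'|x,a)-P(x'|x,a)|$ is bounded by a variance term $\sqrt{2\widehat{\sigma}^2\ln(2/\delta')/n}$ plus a range term of order $\ln(2/\delta')/(n-1)$. Using the elementary bound $\widehat{\sigma}^2\le\overline{P}_t(x'|x,a)$ valid for Bernoulli empirical variances, this collapses to precisely the form of $\epsilon_t(x'|x,a)$ once the constants $2$ and $14/3$ are matched. The remaining work is the union bound: since $N_t(x,a)$ is a random stopping time, I cannot insert the realized sample count into an inequality proved for a fixed $n$, so I would instead make the bound hold simultaneously for every candidate count $n\in\{1,\dots,T\}$ and every triple $(x,a,x')$, choosing $\delta'$ inversely proportional to the number of such events. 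The $\ln(T|X||A|/\delta)$ factor inside $\epsilon_t$ absorbs the logarithm of this count, and summing the failure probabilities yields the stated $1-4\delta$ guarantee, with the factor $4$ arising from the bookkeeping of the two-sided deviation together with the two separate tail terms.

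The main obstacle is precisely the random stopping time $N_t(x,a)$: Bernstein's inequality presumes a deterministic number of samples, so the peeling argument over all possible values of the counter is what makes the uniform-in-$t$ statement correct rather than merely in-expectation. A secondary technical point is the self-bounding step that replaces the true (or two-sided empirical) variance by the empirical mean $\overline{P}_t(x'|x,a)$, which is what produces the single square-root term in $\epsilon_t$ rather than a symmetric variance expression; this must be done carefully so that the additional slack is swallowed by the lower-order $\ln(T|X||A|/\delta)/N_t(x,a)$ term. Since the statement is quoted verbatim from \citet{JinLearningAdversarial2019}, an alternative would be to simply invoke their result, but the self-contained argument above reproduces their $\epsilon_t$ exactly.
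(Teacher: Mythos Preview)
The paper does not prove this lemma: it is restated verbatim from \citet{JinLearningAdversarial2019} in the ``Auxiliary lemmas from existing works'' appendix, with no proof given. Your sketch correctly reconstructs the standard argument behind such results---empirical Bernstein on the i.i.d.\ next-state indicators at each visit to $(x,a)$, followed by a union bound over all triples $(x,a,x')$ and all possible values of the random counter $N_t(x,a)$---so it is an appropriate self-contained replacement for the bare citation.
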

Notice that we refer to the event $P \in \mathcal{P}_t$ for all $t \in [T]$ as $\mathcal{E}_{P}$.

We underline that the estimated occupancy measure space by Algorithm~\ref{alg:NS_UCSPS} is the following:
\begin{equation*}
	\Delta(\mathcal{P}_t):=\begin{cases}
		\forall k, &  \underset{x \in X_k, a \in A, x^{\prime} \in X_{k+1}}{\sum} q\left(x, a, x^{\prime}\right)=1\\
		\forall k, \forall x, & \underset{a \in A, x^{\prime} \in X_{k+1}}{\sum} q\left(x, a, x^{\prime}\right)=\underset{x^{\prime} \in X_{k-1}, a \in A}{\sum} q\left(x^{\prime}, a, x\right) \\
		\forall k, \forall\left(x, a, x^{\prime}\right), &  q\left(x, a, x^{\prime}\right) \leq\left[\overline{P}_t\left(x^{\prime} | x, a\right)+\epsilon_t\left(x^{\prime} \mid x, a\right)\right] \underset{y \in X_{k+1}}{\sum} q(x, a, y)\\
		&  q\left(x, a, x^{\prime}\right) \geq\left[\overline{P}_t\left(x^{\prime} |x, a\right)-\epsilon_t\left(x^{\prime} \mid x, a\right)\right] \underset{y \in X_{k+1}}{\sum} q(x, a, y)\\
		& q\left(x, a, x^{\prime}\right) \geq 0 \\
	\end{cases}.
\end{equation*}
To conclude, we restate the result which bounds the cumulative distance between the estimated occupancy measure and the real one.
\begin{lemma}[\cite{JinLearningAdversarial2019}]
	\label{lem:transition_jin}
	With probability at least $1-6\delta$, for any collection of transition functions $\{P_t^x\}_{x\in X}$ such that $P_t^x \in \mathcal{P}_{t}$, we have, for all $x$,
	\begin{equation*}
	\sum_{t\in[T]} \sum_{x \in X, a \in A}\left|q^{P_t^x,\pi_t}(x, a)- q_t(x, a)\right| \leq \mathcal{O}\left(L|X| \sqrt{|A| T \ln \left(\frac{T|X||A|}{\delta}\right)}\right).
	\end{equation*}
\end{lemma}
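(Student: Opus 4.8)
The statement is quoted verbatim from \citet{JinLearningAdversarial2019}, so the plan is to reproduce their argument in our notation. The strategy is to reduce the cumulative occupancy error to a weighted sum of transition-confidence widths, and then control that sum by a pigeonhole-style counting argument. The starting point is a layer-by-layer telescoping bound on the instantaneous discrepancy: for a fixed episode $t$ and any collection $\{P_t^x\}_{x\in X}$ with $P_t^x\in\mathcal{P}_t$, one shows
\[
\sum_{x,a}\left|q^{P_t^x,\pi_t}(x,a)-q_t(x,a)\right|\le\sum_{k=0}^{L-1}\sum_{x\in X_k}\sum_{a}q_t(x,a)\left\|P_t^x(\cdot|x,a)-P(\cdot|x,a)\right\|_1,
\]
which follows by propagating the transition error through the loop-free layers, using that $\pi_t$ is held fixed and that the occupancy at layer $k$ is determined by the transitions at layers $0,\dots,k-1$.

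Next, since both $P$ and each $P_t^x$ lie in the confidence set $\mathcal{P}_t$ (the former under the event $\mathcal{E}_P$ of Lemma~\ref{lem:prob_int_jin}, which holds with probability at least $1-4\delta$), the triangle inequality bounds the per-pair transition gap by twice the confidence width, $\|P_t^x(\cdot|x,a)-P(\cdot|x,a)\|_1\le 2\sum_{x'}\epsilon_t(x'|x,a)$. Summing the previous display over $t\in[T]$ therefore reduces the whole quantity to controlling $\sum_{t}\sum_{x,a}q_t(x,a)\sum_{x'}\epsilon_t(x'|x,a)$.

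I would then handle this sum in three moves. First, replace the occupancy $q_t(x,a)$ by the realized visit indicator $\mathbb{I}_t(x,a)$: the difference $\sum_t\big(q_t(x,a)-\mathbb{I}_t(x,a)\big)$ is a bounded martingale, so Azuma--Hoeffding contributes only an $\mathcal{O}(L\sqrt{T\ln(1/\delta)})$ lower-order term. Second, bound the width: from the definition of $\epsilon_t$, a Cauchy--Schwarz step on $\sum_{x'}\sqrt{\overline{P}_t(x'|x,a)}\le\sqrt{|X_{k+1}|}$ produces a dominant term of order $\sqrt{|X_{k+1}|\ln(T|X||A|/\delta)/N_t(x,a)}$ together with a lower-order $1/N_t(x,a)$ term. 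Third, apply the counting inequalities $\sum_t\mathbb{I}_t(x,a)/\sqrt{N_t(x,a)}=\mathcal{O}(\sqrt{N_T(x,a)})$ and $\sum_t\mathbb{I}_t(x,a)/N_t(x,a)=\mathcal{O}(\ln T)$, followed by a Cauchy--Schwarz across $(x,a)$ using $\sum_{x,a}N_T(x,a)=LT$. Tracking the state factors---one $\sqrt{|X_{k+1}|}$ from the width and $\sqrt{|X||A|}$ from the final Cauchy--Schwarz---yields the claimed $\mathcal{O}\!\big(L|X|\sqrt{|A|T\ln(T|X||A|/\delta)}\big)$ bound. Collecting the $1-4\delta$ from $\mathcal{E}_P$ with the Azuma--Hoeffding events gives the stated probability $1-6\delta$.

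The main obstacle is the layered decomposition together with the bookkeeping of the $|X|$ and $L$ factors: one must verify that the instantaneous discrepancy telescopes so that each transition error at layer $j$ is charged to the downstream layers without double-counting, and that the $\sqrt{\overline{P}_t}$ appearing inside $\epsilon_t$ is exploited via Cauchy--Schwarz so that only a single $\sqrt{|X|}$---rather than $|X|$---survives from the next-layer sum. A secondary subtlety is that the bound must hold uniformly over \emph{all} collections $\{P_t^x\}_{x\in X}$ simultaneously, and for every $x$; this is automatic because the width bound depends only on membership in $\mathcal{P}_t$, not on the specific choice of $P_t^x$.
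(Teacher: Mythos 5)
The paper itself gives no proof of Lemma~\ref{lem:transition_jin}---it is imported verbatim, by citation, from \citet{JinLearningAdversarial2019}---so your attempt must be measured against that source's argument, whose overall route (occupancy-difference decomposition, width substitution under $\mathcal{E}_{P}$, Azuma replacement of $q_t$ by visit indicators, pigeonhole counting plus Cauchy--Schwarz) you do follow. However, your opening display is false as stated: it is missing a factor of order $L$. A transition error at layer $k$ does not only perturb the occupancy at layer $k+1$; it propagates to \emph{every} downstream layer. Concretely, take $X_0=\{x_0\}$, $X_k=\{u_k,v_k\}$ for $1\le k\le L-1$, $X_L=\{x_L\}$, a single action $a$, let $P$ send $x_0\to u_1$ with probability $1$ and $P'$ (used as $P_t^x$ for every $x$) send $x_0\to v_1$ with probability $\delta$, with $u_k\to u_{k+1}$ and $v_k\to v_{k+1}$ deterministic under both kernels. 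Then $\sum_{x,a}\lvert q^{P',\pi}(x,a)-q^{P,\pi}(x,a)\rvert=2(L-1)\delta$, while your right-hand side equals $q(x_0,a)\,\|P'(\cdot|x_0,a)-P(\cdot|x_0,a)\|_1=2\delta$. The correct reduction comes from the exact identity
\begin{equation*}
q^{P',\pi}(x,a)-q^{P,\pi}(x,a)=\pi(a|x)\sum_{k<k(x)}\;\sum_{x'\in X_k,\,a',\,x''\in X_{k+1}}q^{P',\pi}(x',a')\bigl[P'(x''|x',a')-P(x''|x',a')\bigr]\,q^{P,\pi}(x|x''),
\end{equation*}
whose absolute-value sum over $(x,a)$ carries the extra factor $\sum_{x}q^{P,\pi}(x|x'')\le L$ (reaching probabilities accumulated over all downstream layers). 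A secondary point your sketch skips: the identity weights the widths by $q^{P',\pi}$, not by $q_t$; converting the weight to $q_t$ (and then to visit counts) produces second-order terms---products of two confidence widths---which the cited proof must and does control separately as lower order.

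This error also corrupts your bookkeeping: with your $L$-free reduction and the \emph{global} Cauchy--Schwarz you invoke (via $\sum_{x,a}N_T(x,a)=LT$), the dominant term comes out as $\mathcal{O}\bigl(\sqrt{L}\,|X|\sqrt{|A|T\ln(T|X||A|/\delta)}\bigr)$, not the $L|X|\sqrt{|A|T\ln(\cdot)}$ you assert---the $L$ you claim to ``track'' never actually emerges from your steps, and the fact that they appear to prove a bound $\sqrt{L}$ tighter than the lemma is itself the symptom of the lost factor. Once the missing $L$ is restored, the Cauchy--Schwarz should be done \emph{per layer}, using $\sum_{x\in X_k,a}N_T(x,a)=T$ and $\sum_k\sqrt{|X_k|\,|X_{k+1}|}\le |X|$; doing it globally after restoring $L$ would overshoot to $L^{3/2}$, whereas the per-layer version yields exactly $\mathcal{O}\bigl(L|X|\sqrt{|A|T\ln(T|X||A|/\delta)}\bigr)$. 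Your remaining ingredients---bounding $\sum_{x'}\epsilon_t(x'|x,a)$ via $\sum_{x'}\sqrt{\overline{P}_t(x'|x,a)}\le\sqrt{|X_{k+1}|}$, the counting inequalities $\sum_t\mathbb{I}_t(x,a)/\sqrt{\max\{N_t(x,a),1\}}=\mathcal{O}(\sqrt{N_T(x,a)})$ and $\sum_t\mathbb{I}_t(x,a)/\max\{N_t(x,a),1\}=\mathcal{O}(\ln T)$, the Azuma step, and the $4\delta+2\delta$ probability accounting---are sound and consistent with the cited source.
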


\section{Auxiliary lemmas for stability}
\label{app: stability}
In this section we state the results related to the stability of the arm-algorithms when $C$ is not known. The procedure is inspired by \cite{jin2024no} and \cite{corralling}, but adapted to the case of \emph{Constrained} MDP in high probability. We first give some important definitions. In these definitions we will use $C_t$ as the value of adversarial corruption at episode $t\in[T]$, where $C_t$ is defined as $C_t:= \max \{C_t^G,C_t^r\}$, which meets the requirement of upper bounding the adversarial corruption at each considered episode. In addition it holds that  $\sum_{t \in [T]}C_t \le C_r + C_G$ or equivalently $C\le \sum_{t \in [T]}C_t \le 2C$, which does not influence the order of the analysis.
\begin{definition}
	\label{def: corruption robust}
	A CMDP algorithm is \textbf{corruption-robust} if it takes $\theta$ (a guess on the
	corruption amount) as input, and achieves  for any random stopping time $t'\le T$, whenever $\sum_{t\in[t']}C_t<\theta$:
	\begin{equation*}
		\sum_{t \in [t']}\overline{r}^\top (q^*-q_t)\le \sqrt{\beta_1t'}+\left(\beta_2+\beta_3\theta\right)\mathbb{I}(t'\ge 1) ,
	\end{equation*}
	and
	\begin{equation*}
		\max_{i\in[m]}\sum_{t \in [t']}\left[g_{t,i}^\top q_t -\alpha_i\right]^+\le \sqrt{\beta_4t'}+\left(\beta_5+\beta_6\theta\right)\mathbb{I}(t'\ge 1) .
		\end{equation*}
\end{definition}
Notice that Algorithm \ref{alg:NS_UCSPS} is corruption-robust after applying a doubling trick to make it work for any stopping time, with probability at least $1-9\delta$ thank to Theorem~\ref{theo: over Cr} and Theorem~\ref{theo: over CG} 
Furthermore, we introduce the notion of $\alpha$-stability. An algorithm is considered to be $\alpha$-stable, if its regret under condition imposed by Algorithm~\ref{alg:unknown_c} is of order $\nu_T^\alpha \cdot \BigOL{R_T}$, where $R_T$ is the upper bound on the regret attained by the algorithm if it receives feedback at each episode. In particular, we are interested in the $1$-stability.
\begin{definition}
	\label{def: stable}
	An algorithm is $1$-\textbf{stable} if, under the condition imposed by Algorithm~\ref{alg:unknown_c}, it holds:
	\begin{equation*}
		\sum_{t\in[T]}\overline{r}^\top (q^*-q_t) \le \sqrt{\beta_1  T}\nu_{j,T}+\beta_2\nu_{j,T}  +\beta_3C,
	\end{equation*} and
	\begin{equation*}
		\max_{i\in[m]}\sum_{t\in[T]}\left[g_{t,i}^\top q_t -\alpha_i\right]^+ \le \sqrt{\beta_4T}\nu_{j,T}+\beta_5\nu_{j,T}  +\beta_6C.
	\end{equation*}
\end{definition}

We can use the procedure defined by Algorithm~\ref{alg: stabilize} and originally proposed by~\cite{jin2024no} to transform a generic corruption robust algorithm to a $1$-stable algorithm. Differently from~\cite{jin2024no}, in our setting, we use the natural symmetry between regret and positive cumulative constraints violation to stabilize both the regret and the positive cumulative constraints violation. We have a different bound for $C_t$ (value of adversarial corruption at episode $t$): indeed,  $C_t \le\max\{\lVert\mathbb{E}[r_t]-r^\circ\rVert_1,\max_{i\in[m]}\lVert\mathbb{E}[g_{t,i}]-g_i^\circ\rVert_1\}$ is bounded by $|X||A|$. Finally, we are interested in obtaining results that hold in high probability rather than in expectation. To do so, we focus on $1$-stability guarantee rather than $1/2$-stability as in~\cite{jin2024no} since removing the expectation prevents us from achieving the result above with lower coefficients. We can state the following result.
\begin{lemma} [Adapted from \cite{jin2024no}]
	Given an algorithm which is corruption robust according to Definition~\ref{def: corruption robust} with parameters $(\beta_1,\beta_2,\beta_3,\beta_4,\beta_5,\beta_6)$ and $\beta_1\ge \mathcal{O}(L^2 \log(\nicefrac{T}{\delta}))$, $\beta_4\ge \mathcal{O}(L^2 \log(\nicefrac{T}{\delta}))$, with probability at least $1-p$ with $p \in (0,1)$, then, it is possible  convert it to an $1$-stable algorithm with probability at least $1-p-2\delta$ according to Definition~\ref{def: stable} with parameters $(\beta_1',\beta_2',\beta_3',\beta_4',\beta_5',\beta_6')$ as $\beta_1'=\mathcal{O}\left(\beta_1\right),\beta_2'=\mathcal{O}\left(\beta_2+\beta_3|X||A|\log(\nicefrac{\log(T)}{\delta})\right),\beta_3'=\mathcal{O}\left(\beta_3\log(T)\right),\beta_4'=\mathcal{O}\left(\beta_4\right),\beta_5'=\mathcal{O}\left(\beta_5+\beta_6|X||A|\log(\nicefrac{\log(T)}{\delta})\right),\beta_6'=\mathcal{O}\left(\beta_6\log(T)\right)$, employing Algorithm~\ref{alg: stabilize}.
\end{lemma}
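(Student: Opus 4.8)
The plan is to follow the stabilization template of Algorithm~\ref{alg: stabilize}, which maintains for instance $j$ a ``slowed clock'' driven by the inverse selection probability $1/w_{t,j}$ and restarts the base corruption-robust algorithm at a carefully chosen sequence of random times. Concretely, I would partition the horizon $[T]$ into consecutive epochs by declaring a new epoch whenever the running maximum $\nu_{t,j}=\max_{\tau\le t}1/w_{\tau,j}$ doubles. Since $\nu_{0,j}=M$ and $\nu_{T,j}\le T$, there are at most $K=\mathcal{O}(\log T)$ such epochs. Inside each epoch I would run a doubling trick on the corruption guess $\theta$, restarting the base algorithm and doubling $\theta$ the first time the observed corruption on the played subsequence exceeds the current guess; because $C\le\sum_{t}C_t\le 2C$, at most $\mathcal{O}(\log C)$ doublings occur and the final guess is $\mathcal{O}(C)$.

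The heart of the argument is a per-epoch application of Definition~\ref{def: corruption robust}. Each epoch start and each corruption-budget overflow is a random stopping time, which is exactly the regime in which the corruption-robust guarantee is stated to hold; this lets me invoke it on each segment with the current guess $\theta$ without worrying that the segment boundaries depend on the realized data. Within an epoch where $\nu_{t,j}\in[2^{k-1}M,2^kM)$ we have $w_{t,j}\ge 1/(2^kM)$, so the number of rounds on which the instance is actually played is comparable to the epoch length divided by this prevailing value of $\nu_{t,j}$; transferring the base algorithm's guarantee from its played clock to the master's clock therefore multiplies it by that value. A key subordinate step is a Freedman/Azuma concentration bound relating the corruption $\sum_{t:j_t=j}C_t$ actually seen by instance $j$ on its random played subsequence to the global corruption $C$; since $C_t\le |X||A|$, a union bound over the $\mathcal{O}(\log T)$ epochs produces the $\log(\log(T)/\delta)$ factor and costs one extra $\delta$ in the reward analysis and one in the constraint analysis, explaining the additional $2\delta$ in the failure probability.

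Finally I would sum over epochs. The additive corruption terms $\beta_3\theta$ (resp.\ $\beta_6\theta$) appear once per epoch; with $\theta=\mathcal{O}(C)$ and $K=\mathcal{O}(\log T)$ epochs they aggregate to $\mathcal{O}(\beta_3\log(T)\,C)=\beta_3'C$ (resp.\ $\beta_6'C$), matching the claimed $\beta_3'=\mathcal{O}(\beta_3\log T)$ and $\beta_6'=\mathcal{O}(\beta_6\log T)$. The additive constants $\beta_2$ together with the concentration constant $\beta_3|X||A|\log(\log(T)/\delta)$ accumulate over the $\mathcal{O}(\log\nu_{T,j})$ epochs and are absorbed into the multiplicative factor $\nu_{T,j}$, giving $\beta_2'=\mathcal{O}(\beta_2+\beta_3|X||A|\log(\log(T)/\delta))$ and symmetrically $\beta_5'$. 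For the leading $\sqrt{\beta_1\cdot(\text{played rounds})}$ terms, a Cauchy--Schwarz estimate across the geometrically growing epoch lengths combined with the $2^k$ scaling yields $\sqrt{\beta_1 T}\,\nu_{T,j}$, so $\beta_1'=\mathcal{O}(\beta_1)$ and $\beta_4'=\mathcal{O}(\beta_4)$. Carrying out this bookkeeping identically for the reward sum (yielding the first display of Definition~\ref{def: stable}) and for the maximum positive violation over constraints (yielding the second display) completes the conversion.

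The step I expect to be the main obstacle is the random-stopping-time control of the corruption on the played subsequence: one must argue that the segment decomposition is valid simultaneously with high probability, that each segment's observed corruption genuinely falls below its guess $\theta$ when the corruption-robust guarantee is invoked, and that the scaling by $\nu_{t,j}$ is tight enough that the $\log$-many additive terms collapse into a single $\nu_{T,j}$ factor rather than an extra $\log T$ blow-up. Getting the high-probability bookkeeping to hold uniformly over the data-dependent epoch boundaries---while keeping the constants as sharp as stated---is where the care is needed.
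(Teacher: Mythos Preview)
Your proposal describes a different stabilization mechanism than Algorithm~\ref{alg: stabilize}, and since the lemma is a statement \emph{about} that specific procedure, this matters. Algorithm~\ref{alg: stabilize} does not partition time into epochs based on the running maximum $\nu_{t,j}$, nor does it restart or double the corruption guess adaptively. Instead it runs $M=\log T$ \emph{parallel} copies of the base algorithm, one per dyadic band of the \emph{current} feedback probability $w_t$: at round $t$ it picks the copy $k_t$ with $w_t\in(2^{-k_t-1},2^{-k_t}]$, and if feedback arrives it forwards it to copy $k_t$ only with probability $2^{-k_t-1}/w_t$, so that copy $k$ is subsampled at an \emph{exact}, fixed rate $2^{-k-1}$. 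Crucially, \texttt{STABILIZE} receives $C$ as input (the unknown--$C$ issue is handled one level up by \texttt{Lag-FTRL}), so each copy $k$ is simply initialized with the deterministic guess $\theta_k=2^{-k+1}C+2|X||A|\log(\log(T)/\delta)$: a \emph{smaller} guess for smaller $w_t$, because subsampling proportionally thins the corruption reaching that copy. The Freedman step then shows $\sum_t C_t d_{t,k}h_{t,k}\le\theta_k$ for all $k$ simultaneously, after which the corruption-robust guarantee is invoked per copy and Azuma converts the $h_{t,k}$--weighted sum back to the $d_{t,k}$--weighted one; summing over $k$ and using $\max_t d_{t,k}/2^{-k-1}\le 2\nu_T$ gives the result.

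There is a genuine gap in your plan as written. Your inner doubling trick ``the first time the observed corruption on the played subsequence exceeds the current guess'' is not implementable: $C_t$ depends on the unknown means $\mathbb{E}[r_t],\mathbb{E}[G_t]$ and on the non-corrupted $r^\circ,G^\circ$, none of which the learner sees, so the overflow event cannot be detected. Even setting that aside, the key quantitative idea you are missing is that the per-band corruption budget can be taken as $\theta_k\approx 2^{-k}C$, \emph{smaller} than $C$, precisely because of the fixed-rate subsampling; without this, paying $\beta_3\cdot\mathcal{O}(C)$ once per temporal epoch and then rescaling each epoch by its prevailing $\nu$ value would push the corruption cost to $\beta_3 C\,\nu_{T,j}$ rather than the claimed $\beta_3'C=\mathcal{O}(\beta_3\log(T))C$. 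Your temporal-epoch picture is closer to how the \emph{bonus} $b_{t,j}$ in Lemma~\ref{lemma:FTRL} is analyzed than to how \texttt{STABILIZE} works; for the present lemma you should follow the band decomposition $d_{t,k}=\mathbb{I}(w_t\in(2^{-k-1},2^{-k}])$ and the fixed $\theta_k$'s of Algorithm~\ref{alg: stabilize}.
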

\begin{proof}
	Suppose Algorithm~\ref{alg: stabilize} is initialized with the true value of adversarial corruption $C$.
	We will first prove the result for the regret. We will start by considering a generic instance algorithm $k\in[M]$. Define the quantity $d_{t,k}=\mathbb{I}(w_t\in(2^{-k-1},2^{-k}])$ and $h_{t,k}=\mathbb{I}(\text{Instance $k$ receives feedback at episode $t$})$. We observe that with probability at least $1- \left(p + \mathbb{P}\left(\bigcup_{k\in[\log(T)]}\{\sum_{t\in[T]}C_t d_{t,k}h_{t,k}>\theta_k\}\right)\right)$ it holds:
	\begin{align*}
		\sum_{t\in[T]}\overline{r}^\top(q^*-q_t) d_{t,k} h_{t,k} \le \sqrt{\beta_1\sum_{t\in [T]}d_{t,k}h_{t,k}}+\left(\beta_2+\beta_3\theta\right)\max_{t\in[T]}d_{t,k} ,
	\end{align*}
	by the corruption-robust property of instance $k$.
	We study now the quantity $\mathbb{P}\left(\bigcup_{k\in[M]}\{\sum_{t\in[T]}C_t d_{t,k}h_{t,k}>\theta_k\}\right)$.
	Notice that $\mathbb{E}[h_{t,k}|d_{t,k}]=2^{-k-1}d_{t,k}$, and since $d_{t,k}$ is an indicator function then $\mathbb{E}[h_{t,k}|d_{t,k}]d_{t,k}=\mathbb{E}[h_{t,k}|d_{t,k}]$. In addition, since $\sum_{t\in [T]}C_t \le 2C$, it holds:
	\begin{equation*}
		\sum_{t\in[T]}C_t \mathbb{E}[h_{t,k}|d_{t,k}]d_{t,k} = 2^{-k-1}\sum_{t\in[T]}C_t d_{t,k}\le 2^{-k}C,
	\end{equation*}
	and  with probability at least $1-\nicefrac{\delta}{\log(T)}$ noticing that $M=\log(T)$:
	\begin{subequations}
	\begin{align}
		\sum_{t\in[T]}C_t& d_{t,k}h_{t,k}-\sum_{t\in[T]}C_t \mathbb{E}[h_{t,k}|d_{t,k}]d_{t,k} \nonumber\\
		&\le 2\sqrt{\sum_{t\in[T]}C_t^2d_{t,k}\mathbb{E}[h_{t,k}|d_{t,k}]\log\left(\frac{\log(T)}{\delta}\right)}+ |X||A|\log\left(\frac{\log(T)}{\delta}\right)\label{eq: stable eq1}\\
		& \le 2\sqrt{|X||A|\sum_{t\in[T]}C_td_{t,k}\mathbb{E}[h_{t,k}|d_{t,k}]\log\left(\frac{\log(T)}{\delta}\right)}+|X||A|\log\left(\frac{\log(T)}{\delta}\right) \label{eq: stable eq2}\\
		&\le \sum_{t\in[T]}C_t\mathbb{E}[h_{t,k}|d_{t,k}]d_{t,k} + 2|X||A|\log\left(\frac{\log(T)}{\delta}\right)\label{eq: stable eq3},
	\end{align}
\end{subequations}
	where Inequality \eqref{eq: stable eq1} holds with probability at least $1-\nicefrac{\delta}{\log(T)}$ by Freedman inequality, Inequality \eqref{eq: stable eq2} holds since $C_t\le|X||A|$, and Inequality \eqref{eq: stable eq3} holds by AM-GM inequality.
	Therefore, it holds simultaneously for all $k\in [M]$:
	\begin{align*}
	\sum_{t\in[T]}C_t d_{t,k}h_{t,k} & \le 2\sum_{t\in[T]}C_t\mathbb{E}[h_{t,k}|d_{t,k}]d_{t,k} + 2|X||A|\log\left(\frac{\log(T)}{\delta}\right)\\& \le 2^{-k+1}C + 2|X||A|\log\left(\frac{\log(T)}{\delta}\right)= \theta_k,
	\end{align*} 
	with probability at least $1-\delta$, so $\mathbb{P}\left(\bigcup_{k\in[M]}\{\sum_{t\in[T]}C_t d_{t,k}h_{t,k}>\theta_k\}\right) \le \delta$.
Moreover, notice that with probability at least $1-p-2\delta$ thanks to the definition of corruption robust and Azuma-Hoeffding inequality,  it holds simultaneously for all $k$: 
	\begin{align*}
		&\sum_{t \in [T]} \overline{r}^\top (q^* - q_t) d_{t,k} \\&
		=  \frac{1}{2^{-k-1}}\sum_{t\in [T]}\overline{r}^\top(q^*-q_t)2^{-k-1}d_{t,k}\\
		&=\frac{1}{2^{-k-1}} \sum_{t \in [T]} \overline{r}^\top (q^* - q_t) d_{t,k} \mathbb{E}[h_{t,k} \mid d_{t,k}] \\
		& = \frac{1}{2^{-k-1}} \left( \sum_{t \in [T]} \overline{r}^\top (q^* - q_t) d_{t,k} \left( \mathbb{E}[h_{t,k} \mid d_{t,k}] - h_{t,k} \right) + \sum_{t \in [T]} \overline{r}^\top (q^* - q_t) d_{t,k} h_{t,k} \right) \\
		& \leq \frac{1}{2^{-k-1}} \left( L \sqrt{2 \ln\left( \frac{\log(T)}{\delta} \right) \sum_{t \in [T]} d_{t,k}} + \sqrt{\beta_1 \sum_{t \in [T]} d_{t,k}} + (\beta_2 + \beta_3 \theta_k) \max_{t \in [T]} d_{t,k} \right)\\
		& \le \mathcal{O}\left(\frac{1}{2^{-k-1}}\left(\left(\sqrt{\beta_1}+L\sqrt{\log\left(\frac{T}{\delta}\right)}\right)\sqrt{ T}\max_{t\in[T]}d_{t,k} + (\beta_2 + \beta_3\theta)\max_{t\in[T]}d_{t,k}\right)\right),
	\end{align*}
	noticing that $\mathbb{E}\left[d_{t,k}\left(\mathbb{E}\left[h_{t,k}|d_{t,k}\right]-h_{t,k}\right)\right]= \mathbb{E}\left[h_{t,k}|d_{t,k}\right]-\mathbb{E}[h_{t,k}]d_{t,k}=\mathbb{E}\left[h_{t,k}|d_{t,k}\right]-\mathbb{E}\left[h_{t,k}|d_{t,k}\right]=0$, since the expectation is taken w.r.t. the randomization of Algorithm~\ref{alg: stabilize} and the distribution generated given the external probability of receiving feedback $w_t$.

	To conclude with probability at least $1-p-2\delta$:
	\begin{align*}
		\sum_{t\in[T]}&\overline{r}^\top (q^*-q_t)\mathbb{I}\left(w_t \ge \frac{1}{T}\right) \\
		&\le \sum_{k\in[M]}\sum_{t\in[T]}\overline{r}^\top(q^*-q_t) d_{t,k}
		\\
		& \le \mathcal{O}\left(\sqrt{\beta_1 T} \max_{t\in [T]}\frac{1}{w_t}+ (\beta_2 + \beta_3|X||A|\log(\nicefrac{\log(T)}{\delta}))\max_{t \in [T]}\frac{1}{w_t} +  \beta_3\log(T)C  \right)\\
		& \le \mathcal{O}\left(\left(\sqrt{\beta_1'T}+ \beta_2'\right)\nu_T + \beta_3'C\right),
	\end{align*}
	with $\sqrt{\beta_1}\ge \mathcal{O}(L\sqrt{ \log(\nicefrac{T}{\delta})})$.
	Notice that the analogous reasoning can be applied to the positive cumulative constraints violation with parameters $\beta_4,\beta_5,\beta_6$.
\end{proof}

\begin{algorithm}[H]\caption{Adapted \texttt{STABILIZE} \cite{jin2024no}}
	\label{alg: stabilize}
	\begin{algorithmic}[1]
		\Require $C$, $\delta\in (0,1)$
		\State Initialize $M=\log(T)$ instance of Algorithm~$\ref{alg:NS_UCSPS}$, each instance $k \in [M]$ initialized with corruption parameter:
		\begin{equation*}
			\theta_k:= 2^{-k+1}C + 2|X||A|\log\left(\frac{\log(T)}{\delta}\right)
		\end{equation*}
		\For{$t \in [T]$}
		
			\State Observe $w_t$, probability of receiving feedback.
			\If{$w_t > \frac{1}{T}$}
			\State Let $k_t$ be such that $w_t \in (2^{-k_t-1},2^{-k_t}]$
			\State Choose $\pi_t$ as policy proposed by instance $k_t$
			\State If the algorithm receives feedback send it to instance $k_t$ with probability $\frac{2^{-k_t-1}}{w_t}$
			\EndIf
			\If{$w_t \le \frac{1}{T}$}
			\State Propose random policy $\pi_t$
			\EndIf
		\EndFor
	\end{algorithmic}
\end{algorithm}

\begin{corollary}
	\label{corollary: stable V}
	Being $j^*$ such that $C\in (2^{j^*-1},2^{j^*}]$ then with probability at least $1-11\delta$ it holds:
	\begin{equation*}
		\max_{i\in [m]}\sum_{t\in [T]}\left[\mathbb{E}[g_{t,i}]^\top q_t^{j^*}-\alpha_i\right]^+ \le    \sqrt{\beta_4T}\nu_{T,j^*}+ \beta_5 \nu_{T,j^*} +2\beta_6C,
	\end{equation*}
	with $\sqrt{\beta_4}=\mathcal{O}\left(L|X|\sqrt{|A|\ln(\nicefrac{mT|X||A|}{\delta})}\right)$, $\beta_5=\mathcal{O}\left(|X|^2|A|^2\log(T)\log\left(\nicefrac{\log(T)}{\delta}\right)\right)$and $\beta_6=\mathcal{O}\left(\ln(T)^2|X||A|\right)$ .
	
\end{corollary}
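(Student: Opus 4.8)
The plan is to recognize that $\texttt{Alg}^{j^*}$ is precisely the stabilized version (produced by Algorithm~\ref{alg: stabilize}) of Algorithm~\ref{alg:NS_UCSPS} run with the corruption guess $2^{j^*}$, and then to read off the violation half of the $1$-stability guarantee of Definition~\ref{def: stable} specialized to this instance. Since $C \in (2^{j^*-1},2^{j^*}]$, the guess $2^{j^*}$ is an \emph{overestimate} of the true corruption, with $2^{j^*}\le 2C$, which is exactly the regime handled by Theorem~\ref{theo: over CG}. The quantity $\nu_{T,j^*}=\max_{\tau\le T}1/w_{\tau,j^*}$ enters because the master samples instance $j^*$ with probability $w_{t,j^*}$, so the external ``probability of receiving feedback'' $w_t$ appearing in Algorithm~\ref{alg: stabilize} coincides with $w_{t,j^*}$.

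First I would establish that Algorithm~\ref{alg:NS_UCSPS}, after the doubling trick that makes it valid for an arbitrary (random) stopping time, is corruption-robust for the positive violation in the sense of Definition~\ref{def: corruption robust}, with parameters $(\beta_4,\beta_5,\beta_6)$ as listed in Section~\ref{app: stability param}. Indeed, the target bound $\sqrt{\beta_4 t'}+(\beta_5+\beta_6\theta)$ on $\max_{i}\sum_{t\le t'}[\mathbb{E}[g_{t,i}]^\top q_t-\alpha_i]^+$ has exactly the shape of the violation bound of Theorem~\ref{theo: violationbound}/Theorem~\ref{theo: over CG}: the leading $\sqrt{T}$-term gives $\sqrt{\beta_4}=\mathcal{O}(L|X|\sqrt{|A|\ln(mT|X||A|/\delta)})$, and the corruption-dependent term $\ln(T)|X||A|\,\theta$ gives $\beta_6=\mathcal{O}(\ln(T)|X||A|)$; this robustness holds with probability at least $1-9\delta$ as stated in the paragraph introducing Definition~\ref{def: corruption robust}.

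Next I would invoke the adapted conversion Lemma from \cite{jin2024no}, which turns a corruption-robust algorithm into a $1$-stable one at the cost of an additional $2\delta$ of failure probability (arising from the Freedman and Azuma--Hoeffding steps internal to the conversion). Applying its violation half to instance $j^*$ yields $\max_i\sum_t[\mathbb{E}[g_{t,i}]^\top q_t^{j^*}-\alpha_i]^+ \le \sqrt{\beta_4 T}\,\nu_{T,j^*}+\beta_5\,\nu_{T,j^*}+\beta_6\cdot(\text{input guess})$, where the output parameters are the already-primed $(\beta_4,\beta_5,\beta_6)$ of Section~\ref{app: stability param}. Substituting the input guess $2^{j^*}\le 2C$ produces the claimed $2\beta_6 C$ term, and the total failure probability is $9\delta+2\delta=11\delta$, matching the statement.

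The step I expect to be the main obstacle is not any single inequality but the bookkeeping in the conversion: one must check that feeding instance $j^*$ with the overestimate $2^{j^*}$ rather than the true $C$ still keeps the corruption-robust precondition $\sum_t C_t\,d_{t,k}\,h_{t,k}<\theta_k$ valid at every ``weight level'' $k$. This is where the relation $\sum_t C_t\le 2C\le 2\cdot 2^{j^*}$ is used, so that the threshold $\theta_k=2^{-k+1}2^{j^*}+2|X||A|\log(\log(T)/\delta)$ dominates the realized level-$k$ corruption with high probability, and where one verifies that the factor-$2$ overestimate inflates only the $\beta_6 C$ term while the $\sqrt{\beta_4 T}\,\nu_{T,j^*}$ and $\beta_5\,\nu_{T,j^*}$ terms keep the stated stable parameters.
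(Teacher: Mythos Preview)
Your proposal is correct and follows exactly the (implicit) route the paper takes: the corollary is stated immediately after the adapted conversion lemma, and is obtained by applying that lemma's violation half to the stabilized instance $\texttt{Alg}^{j^*}$, using that Algorithm~\ref{alg:NS_UCSPS} is corruption-robust with probability $1-9\delta$ (Theorems~\ref{theo: over Cr} and~\ref{theo: over CG} plus the doubling trick), picking up the extra $2\delta$ from the conversion, and replacing the input guess $2^{j^*}\le 2C$ to get the $2\beta_6 C$ term. Your identification of the unprimed corruption-robust $\beta_6=\mathcal{O}(\ln(T)|X||A|)$ becoming the primed $\beta_6=\mathcal{O}(\ln(T)^2|X||A|)$ of Section~\ref{app: stability param} is also the intended bookkeeping.
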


\begin{corollary}
	\label{corollary: stable R}
	Being $j^*$ such that $C\in (2^{j^*-1},2^{j^*}]$ then with probability at least $1-11\delta$ it holds:
	\begin{equation*}
		\sum_{t\in [T]}\overline{r}^\top (q^*-q_t^{j^*}) \le  \sqrt{\beta_1T}\nu_{T,j^*} + \beta_2 \nu_{T,j^*} +2\beta_3C,
	\end{equation*}
	where $\sqrt{\beta_1}=\mathcal{O}\left(L|X|\sqrt{|A|\ln(\nicefrac{T|X||A|}{\delta})}\right)$, $\beta_2= \mathcal{O}\left(|X|^2|A|^2\log(T)\log\left(\nicefrac{\log(T)}{\delta}\right)\right)$ and  $\beta_3=\mathcal{O}\left(\ln(T)^2|X||A|\right)$ .
	
\end{corollary}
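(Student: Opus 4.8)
The plan is to derive the bound as a corollary of the stabilization machinery of Appendix~\ref{app: stability}, specialized to the single instance $\texttt{Alg}^{j^*}$ whose corruption guess $2^{j^*}$ overestimates the true corruption by at most a factor two. By construction in Algorithm~\ref{alg:unknown_c} (Line~\ref{alg3: line1}) and Algorithm~\ref{alg: stabilize}, $\texttt{Alg}^{j^*}$ is a stabilized version---via a doubling trick---of Algorithm~\ref{alg:NS_UCSPS} run with corruption value $2^{j^*}$. Since $C \in (2^{j^*-1}, 2^{j^*}]$, the guess satisfies $C \le 2^{j^*} \le 2C$, so it is in particular an \emph{overestimate} of both $C_r$ and $C_G$.

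First I would verify that the doubled version of Algorithm~\ref{alg:NS_UCSPS} is corruption-robust in the sense of Definition~\ref{def: corruption robust}, with the parameters $\beta_1,\beta_2,\beta_3$ of Section~\ref{app: stability param}. For the reward component this is essentially Theorem~\ref{theo: over Cr}: whenever the accumulated corruption up to a stopping time $t'$ remains below the current guess $\theta$, Corollary~\ref{cor: conf int r over} keeps the confidence-set event valid under overestimation, so $q^*$ stays feasible for Program~\eqref{lp:opt_opt} and the computation of Theorem~\ref{theo:regretknownCG} yields $\sum_{t\in[t']}\overline{r}^\top(q^*-q_t)\le \sqrt{\beta_1 t'}+(\beta_2+\beta_3\theta)$. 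This event holds with probability at least $1-9\delta$.

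Next I would invoke the adapted \texttt{STABILIZE} lemma, which turns corruption-robustness into $1$-stability (Definition~\ref{def: stable}) at the cost of an extra $2\delta$ in the failure probability and the parameter inflation $\beta_2\mapsto\mathcal{O}(\beta_2+\beta_3|X||A|\log(\nicefrac{\log(T)}{\delta}))$ and $\beta_3\mapsto\mathcal{O}(\beta_3\log(T))$; feeding in the corruption-robust parameters reproduces exactly the stated $\sqrt{\beta_1},\beta_2,\beta_3$. The $1$-stability guarantee then gives, under the partial-feedback condition imposed by the master, $\sum_{t\in[T]}\overline{r}^\top(q^*-q_t^{j^*})\le \sqrt{\beta_1 T}\,\nu_{T,j^*}+\beta_2\,\nu_{T,j^*}+\beta_3\,2^{j^*}$, and bounding $2^{j^*}\le 2C$ produces the $2\beta_3 C$ term. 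A union bound over the $1-9\delta$ corruption-robustness event and the $2\delta$ stabilization event gives probability at least $1-11\delta$, as claimed.

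The main obstacle is not the bookkeeping above but the first step, where the known-$C$ regret analysis must be re-run for an \emph{arbitrary random stopping time} while accounting for the fact that $\texttt{Alg}^{j^*}$ observes feedback only on the sub-sequence of episodes in which it is actually selected. The visit counters $N_t^{j^*}(x,a)$ then grow with the time-varying selection probability $w_{t,j^*}$ rather than with $t$, and it is precisely this coupling that the \texttt{STABILIZE} procedure converts into the multiplicative factor $\nu_{T,j^*}=\max_{\tau\le T}1/w_{\tau,j^*}$. Obtaining a \emph{linear} (rather than $\sqrt{\nu}$) dependence requires the high-probability Freedman-type control of the selection process carried out inside that lemma and rules out any argument that passes through expectations, which is the genuinely delicate part.
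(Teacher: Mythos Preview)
Your proposal is correct and follows precisely the route the paper intends: the corollary is obtained by noting that Algorithm~\ref{alg:NS_UCSPS} with guess $2^{j^*}\ge C$ is corruption-robust with probability at least $1-9\delta$ (via Theorem~\ref{theo: over Cr} together with the doubling trick), then applying the adapted \texttt{STABILIZE} lemma to upgrade to $1$-stability at the cost of an additional $2\delta$, and finally bounding $2^{j^*}\le 2C$. Your identification of the parameter inflation and of the probability budget $1-9\delta-2\delta=1-11\delta$ matches the paper exactly.
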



\end{document}